\documentclass[11pt]{article}
\usepackage[margin=1in]{geometry}
 
\usepackage[]{amsmath,amssymb,epsfig}
\usepackage{amsthm}
\usepackage{amsmath,mathtools}
\usepackage{amssymb}
\usepackage{graphicx}
\usepackage{epstopdf}
\usepackage{comment}
\usepackage{array}
\usepackage{algorithm}
\usepackage{url}

\usepackage{lscape}
\usepackage{algpseudocode}
\usepackage{setspace}
\usepackage{multicol}
\usepackage{multirow}
\usepackage{color}
\usepackage{colortbl}
\usepackage{xcolor}
\usepackage[colorlinks=true, linkcolor=red, citecolor=blue, urlcolor=blue]{hyperref}

\usepackage{placeins}

\usepackage[%
    font={small,sf},
    labelfont=bf,
    format=hang,    
    format=plain,
    margin=0pt,
    width=0.8\textwidth,
]{caption}
\usepackage[list=true]{subcaption}
\usepackage[round]{natbib}

\newtheorem{theorem}{Theorem}

\newtheorem{assumption}{Assumption}

\newtheorem{claim}{Claim}

\newtheorem{corollary}{Corollary}

\newtheorem{definition}{Definition}

\newtheorem{lemma}{Lemma}

\newtheorem{proposition}{Proposition}

\numberwithin{equation}{section}

\newcommand{\calF}{\ensuremath{\mathcal{F}}}

\newcommand{\norm}[1]{\big\|{#1}\big\|}

\newcommand{\set}[1]{\left\{{#1}\right\}}

\newcommand{\est}[1]{\widehat{#1}}

\newcommand{\matR}{\ensuremath{\mathbb{R}}}

\newcommand{\gap}{G_{T,i}}
\newcommand{\gapf}{G_{T,i}(f)}
\newcommand{\gapof}{G_{T,i}(\overline f)}

\newcommand{\lemp}{\widehat{L}_{T,i}}
\newcommand{\ltrans}{L_{T,i}}

\newcommand{\PS}{\mathbb{P}_{\mathcal{S}_T}}

\providecommand{\keywords}[1]{\noindent \textbf{Keywords:} #1}   

\title{Learning switched non-linear dynamical systems from a single trajectory}

\author{Sunny G.W. Wang \\ \texttt{sunny.wanggw@ntu.edu.sg} \and  Hemant Tyagi \\ \texttt{hemant.tyagi@ntu.edu.sg} 
}
\date{
Division of Mathematical Sciences, 
       SPMS, 
       NTU Singapore 637371
}

\begin{document}
\maketitle

\begin{abstract}
We study empirical risk minimization for learning non-linear dynamical systems whose transition dynamics may switch over time. Under stability assumptions, and i.i.d switching over a set of $K$ modes, we derive non-asymptotic bounds on the prediction risk expressed in terms of the metric entropy of the underlying function class. We instantiate our general result for H\"older and linear function classes, obtaining explicit convergence rates that depend on the effective sample size $Tp_i$, where $T$ is the trajectory length and $p_i$ is the probability of observing mode $i$. Numerical simulations support our theoretical findings. To the best of our knowledge, these results are the first non-asymptotic guarantees for learning switched nonlinear dynamical systems from a single trajectory.
\end{abstract}

\keywords{Non-linear dynamical systems, finite-time identification, switched dynamics, non-asymptotic learning under dependence, statistical learning}

\tableofcontents

%
\section{Introduction} \label{sec:intro}
Dynamical systems are a powerful framework for describing how complex phenomena evolve over time. They are ubiquitous in many domains, where one observes a system through a sequence of dependent measurements and seeks to recover the underlying law governing its evolution. In this work, we consider non-linear dynamical systems of the form 
\begin{equation} \label{eq:model}
    X_{t+1} = f^*_{s(t+1)}(X_{t}) + \varepsilon_{t+1}, \qquad t=0, \dots, T-1, \quad X_0 = 0,
\end{equation}
where $(\varepsilon_{t})_{t=1}^T$ are i.i.d copies of a sub-gaussian random variable $\varepsilon$ (the ``excitation'' of the system), and $(s(t))_{t=1}^T$ are i.i.d random variables drawn from a discrete distribution over the set of ``modes'' $\set{1,\dots,K}$. Given access to the states $(X_t)_{t=1}^T$ and the modes $(s(t))_{t=1}^T$, the goal is to estimate $f_1^*, \dots,f^*_K \in \calF$ where $f_i^*: \matR^d \rightarrow \matR^d$, and $\calF$ is a known class of functions.  

The switching model is a natural extension of classical one-mode non-linear models (where $K=1$), and arises in many applications where transition dynamics may differ across modes, e.g., robotics \citep{ames2014human}, cyber-physical systems \citep{alur2015principles}, biology \citep{BARTOCCI20093149, Bortolussi2008, Ghosh2002}, control and reinforcement learning \citep{Greene25, Zhang26, FAN2024106695}, and  quantitative finance \citep{savku2020regime, cajueiro2002stochastic}. For example, in economic and financial time series data,  market behavior may differ across high and low-volatility regimes. 

\subsection{Contributions}
In this work, we perform a non-asymptotic study for learning \emph{switched} non-linear dynamical systems of the form \eqref{eq:model}, using empirical risk minimization (ERM) (see \eqref{eq:erm} for the formulation). In particular, we obtain high-probability risk bounds for learning the non-linear transition maps $f_1^*,\dots,f^*_K \in \calF$ from a single trajectory generated as per \eqref{eq:model}. The presence of different modes complicates the analysis in a highly non-trivial way, since it introduces an additional layer of randomness. As a result, arguments  designed for the single-mode case (e.g., \citep{ziemann2022, physicsmixing25}) do not extend directly in a straightforward manner. However, we show that learning remains possible using ERMs under relatively mild assumptions. Our contributions are  as follows. 
\begin{enumerate}
    \item We obtain high-probability bounds on the prediction risk for ERMs in the switched setting (see Theorem \ref{thm:final-risk}), which to the best of our knowledge, is the first non-asymptotic result in the non-linear case. The results are obtained explicitly in terms of the complexity of the function class through the metric entropy, the degree of dependence through the mixing properties of the Markov chain, and the amount of information accumulated in each mode. Moreover, these are instantiated to obtain rates of convergence for particular choices of $\calF$, such as the Hölder class (Corollary \ref{cor:holder-rates}) and the linear class (Corollary \ref{cor:linear-rates}).

    \item Our theory is obtained under mild and transparent assumptions and only requires: a suitable Lyapunov drift condition, sub-gaussian excitation, and a star-shaped condition on the shifted function class (see Section \ref{subsec:prelim}).
    Importantly, we do not impose boundedness of the function class $\calF$ (both the domain and range of $f \in \calF$), or hypercontractivity type assumptions, which are commonly assumed in the literature (for $K = 1$, see e.g., \citep{ziemann2022,physicsmixing25}). 
    Instead, our Lyapunov drift condition instead allows the process to be controlled probabilistically while preserving an unbounded state space formulation. 

    \item When $\calF$ corresponds to the Hölder class (with smoothness $\beta > 0$), we obtain rates of convergence in both smooth ($\beta > d/2$) and rough ($\beta \leq d/2$) non-parametric regimes. In particular, the rough regime regime is relatively more challenging to analyze as the entropy integral diverges, and leads to slower rates of convergence. In contrast, existing works (cf., \citep{ziemann2022, physicsmixing25}) typically focus on the smooth-regime; see also the discussion after Corollary \ref{cor:holder-rates}.

    \item The proof strategy employed differs from several existing approaches in the one-regime nonlinear dynamical system framework (cf., \citep{ziemann2022,physicsmixing25}), which are of independent interest. 
    
    \begin{itemize}
        \item For the martingale offset bound (see Proposition \ref{prop:mart-offset}), we do not rely on Chernoff type arguments within the chaining analysis, in contrast to existing works for $K =1$. Instead, we employ martingale arguments in the spirit of de la Pe\~na \citep{pena1999, selfnormbook}, and Ville-type arguments \citep{ville1939} combined with careful decompositions to handle the additional randomness introduced by regime switching. See discussion after Theorem \ref{thm:final-risk} for more details.
        
        \item For the uniform law of large numbers component (see Proposition \ref{prop: ulln-master}) wherein the population risk is bounded by the empirical risk with high probability, we use a chaining and critical radius argument that is arguably more suited for nonparametric classes $\calF$ (e.g. H\"older class). See discussion after Corollary \ref{cor:holder-rates} for more details.  
    \end{itemize} 
\end{enumerate}
To the best of our knowledge, we provide the first results for provably learning switched nonlinear dynamical systems (using ERMs) from a single trajectory under weak stability and excitation assumptions. The theory extends recent non-asymptotic analyses for learning nonlinear dynamical systems for the single-mode case (cf., \citep{ziemann2022, physicsmixing25}) to the substantially richer setting of \emph{switched non-linear dynamics} , and relaxes assumptions commonly used in the literature.
\subsection{Related work}
%
There is by now a rich literature on finite sample guarantees for learning dynamical systems from a single trajectory. As discussed below, these results focus mainly on learning: (a) linear dynamical systems (both switching/non-switching) and (b) non-linear dynamical systems with $K=1$ (non-switching).
\paragraph{Learning switched linear dynamical systems.}
When each $f^*_i$ is linear, i.e., represented by a matrix $A^*_i$ (for $i =1,\dots,K$), then the problem of learning $A^*_i$'s under model \eqref{eq:model} (switched linear dynamical systems) has been studied extensively in the literature, with finite sample guarantees. For instance, \cite{sayedanaIID2021} consider a least squares estimator and show for the i.i.d switching setting (where $s(t)$'s are i.i.d) that $A^*_i$ can be estimated at the near parametric rate $O(\sqrt{\frac{\log T}{T}})$ almost surely. \cite{Sarkar19Switch} consider a relatively more general setup than \cite{sayedanaIID2021}, where the system (assumed to be of unknown order) is learned from input-output data using subspace methods. When $s(t)$'s evolve in a Markovian manner then the model is referred to as Markov jump linear systems. In this setting, \cite{sattar2021identification} propose a procedure based on least-squares and show that the system parameters can be estimated at the near parametric rate, with high probability. \cite{sayedanamarkov2021} considered a similar setup as \cite{sattar2021identification} and obtain similar error rates that hold almost surely. We note that the ERM in \eqref{eq:erm} that we analyze has the same least-squares objective as in some of the above works (eg., \citep{sayedanaIID2021, sayedanamarkov2021}); the main difference is of course that the class $\calF$ is nonparametric in our setting. If $K = 1$, then the problem has been studied extensively recently; it is now well established that the least squares estimator estimates the underlying parameter matrix at the near optimal rate for stable (e.g., \citep{Jedra20}) and also marginally stable systems \citep{Sarkar19,Simchowitz18a}.

\paragraph{Learning non-linear dynamical systems ($K = 1$).}
When $K = 1$, i.e., without switching, there has been a recent surge of results which obtain finite-sample guarantees for learning non-linear dynamics under model \eqref{eq:model}. \cite{ziemann2022} study this problem for the ERM, which is the same estimator as \eqref{eq:erm} when $K=1$. They obtain finite-sample guarantees on the expected prediction risk which depend on the metric entropy of the class $\calF$. Moreover, they instantiate their main result for specific choices of $\calF$ (e.g., H\"older, generalized linear models etc.). \cite{physicsmixing25} adopts the techniques of \citep{ziemann2022} to analyze the performance of a (smoothness-) regularized least squares objective and obtain results holding with high probability for the Sobolev class. More details on the technical differences between our proof techniques with those in \citep{ziemann2022,physicsmixing25} can be found in the discussion after Theorem \ref{thm:final-risk} and Corollary \ref{cor:holder-rates}. \cite{Negin24_analytic} consider the setting of linearly parametrized non-linear systems, i.e., $f_1^*$ is a unknown linear combination of known non-linear functions. They provide non-asymptotic analyses for learning such systems, this is shown for a least-squares estimator, and also a set-membership estimator that estimates an uncertainty set containing the true parameters. \cite{foster2020} consider the setting where $f_1^*$ is a generalized linear model. They obtain guarantees for learning the underlying parameter matrix with an iterative algorithm.

\section{Problem setup and main results} \label{sec:prob_setup}
Let $(X_t)_{t \in \mathbb{Z}}$ be a vector-valued stochastic process on a probability space $(\Omega, \mathcal{A}, \mathbb{P})$. We denote $[K] = \{1, \dots, K\}$ to be a finite set of cardinality $K$, representing the modes. We are interested in the nonlinear autoregressive model with mode-switching, where observations come in the form of $(X_{t+1}, s(t+1))_{t=0}^{T-1} \in \mathbb{R}^d \times [K]$, drawn under the model \eqref{eq:model}. 
%
\begin{itemize}
    \item $(\varepsilon_{t})_{t \in \mathbb{Z}}$ are i.i.d copies of a centered sub-gaussian random variables $\varepsilon$ with unit covariance matrix $I_d$ (independent entries). These can be thought of as the ``excitation'' of the system.

    \item $(s(t))_{t \in \mathbb{Z}}$ are i.i.d discrete random variable with probability mass function $$p_i := \mathbb{P}(s(t+1) = i) \in (0, 1), \quad\text {for all } i \in [K].$$ 

    \item Each $f_i^* \in \calF$ where $\calF$ denotes a known class of functions.
\end{itemize}
We denote $\mathcal{I}_i := \{t:s(t+1) = i\}$ and $n_i = |\mathcal{I}_i|$ to be the set of design points under state $i$, and its cardinality respectively. Define the primitive filtration $\mathcal{F}_t := \sigma(X_0, (\varepsilon_k, s(k))_{k \leq t})$, so that $X_{t}$ is $\mathcal{F}_{t}$-measurable.

Suppose that the samples $(X_{t+1}, s(t+1))_{t=0}^{T-1}$ and $(\widetilde X_{t+1}, \widetilde s(t+1))_{t=0}^{T-1}$ are drawn independently from \eqref{eq:model}, which can be thought of as the training and test sequence respectively. Let $\mathbf{1}\{.\}$ denote the indicator function, and $\|.\|_2$ be the standard Euclidean norm induced by the inner product $\langle u, v\rangle_2 = u^\top v$, where for a column vector $v$, $v^{\top}$ denotes its transpose. Moreover, let $\|f\|_{\infty} = \sup_{x \in \mathbb{R}^d} \|f(x)\|_2$ denote the uniform norm.

For an estimate $\widehat f_i, \text{ for }i \in [K]$, we are interested in the transient risk with respect to the new, independent sample generated from the same law as the training trajectory. Let $\mathcal{D}_T := ((X_{t+1}, s(t+1))_{t=0}^{T-1}$ denote the training data. Then the weighted prediction risk (i.e., prediction error contributed by mode $i$) is given by
\begin{equation}
    R^w_{T,i}(\widehat f_i) := \frac{1}{T}\sum_{t=0}^{T-1}\mathbb{E}\left[\left\|\widehat f_i(\widetilde X_t) - f_i^*(\widetilde X_t) \right\|_2^2 \mathbf{1}\{\widetilde s(t+1) = i \} \mid \mathcal{D}_T \right],
\end{equation}
where the expectation is taken over the test trajectory, and is thus a random quantity. There are two conflicting sources of error that contribute to $R^w_{T,i}$ - on the one hand, when the probability of state $i$ occurring is small, the errors are expected to increase. On the other hand, summing over less terms decreases the error. A more natural quantity is the conditional risk, given by
\begin{equation}\label{eq:cond-pop-risk}
    R_{T,i}^{cond}(\widehat f_i) := \frac{1}{p_i}R_{T,i}^w(\widehat f_i) = \frac{1}{Tp_i}\sum_{t=0}^{T-1}\mathbb{E}\left[\left\|\widehat f_i(\widetilde X_t) - f_i^*(\widetilde X_t)  \right\|_2^2 \mathbf{1}\{\widetilde s(t+1) = i \} \mid \mathcal{D}_T \right],
\end{equation}
which is the prediction error given that mode $i$ occurs, which avoids the conflicting error source problem given by the weighted prediction risk.

We will focus on the theoretical study of the empirical prediction risk minimizer
\begin{equation}\label{eq:erm}
\begin{aligned}
    \widehat f_i &\in \arg\min_{f_i \in \mathcal{F}}\sum_{t \in \mathcal{I}_i}\left\|X_{t+1} - f_i(X_t)\right\|_2^2   \\ 
    &= \arg\min_{f_i \in \mathcal{F}}\sum_{t=0}^{T-1}\left\|X_{t+1} - f_i(X_t)\right\|_2^2 \mathbf{1}\left\{s(t+1) = i \right\}.
\end{aligned}
\end{equation}
%
%
%
\subsection{Preliminaries} \label{subsec:prelim}
We begin by stating the assumptions that we will need to make in order to establish our main result (Theorem \ref{thm:final-risk}).
\begin{assumption}\label{ass:0}
    The observations are drawn according to model \eqref{eq:model}, where $\varepsilon$ is a subgaussian random variable with a constant subgaussian norm. 
\end{assumption}

\begin{assumption}[$(\rho, B, \alpha)$-exponential drift]\label{ass:1}
        There exist constants $\alpha > 0$, $\rho \in (0, 1)$, $B < \infty$ such that
        \begin{equation*}
\mathbb{E}\left[\exp\left(\alpha \|X_{t+1}\|_2^2 \right) \mid X_t = x\right] \leq \rho\exp\left(\alpha \|x\|_2^2 \right) + B.
        \end{equation*}
    \end{assumption}
Denote the shifted class
\begin{equation*}
    \mathcal{F}_i^* := \mathcal{F} - f_i^* = \{f - f_i^* \mid f \in \mathcal{F}\}.
\end{equation*}
Our results will depend on the local complexity of $\mathcal{F}_i^*$. We will need this set to be star-shaped as stated below.
\begin{assumption}[Star-shapedness]\label{ass:2}
    For any $f \in \mathcal{F}_i^*, \lambda \in [0, 1]$, it holds that  $\lambda f \in \mathcal{F}_{i}^*$.
\end{assumption}

Let us now introduce the notion of an envelope function.
\begin{definition}\label{def:envelope-func}
    A measurable function $F:\mathbb{R}^d \rightarrow \mathbb{R}_+$ is an envelope function for a class $\mathcal{F}$ if
    \begin{equation*}
        \sup_{f \in \mathcal{F}} \|f(x)\|_2 \leq F(x), \qquad \forall x \in \mathbb{R}^d.
    \end{equation*}
\end{definition}

\begin{assumption}[Existence of squared-integrable envelope]\label{ass:3}
    An envelope function $F$ exists for the class $\mathcal{F}_{i
    }^*$, such that $\mathbb{E}[F(X)^2] < \infty$.
\end{assumption}
\begin{assumption}\label{ass:4}
The quantity $M_T := \sup_{f \in \mathcal{F}_i^*}\sup_{x \in \mathcal{B}_T} \|f(x)\|_2$ is finite.
\end{assumption}
It will be useful to discuss the above assumptions.
\begin{itemize}
    \item   Assumption \ref{ass:1} ensures the stability of the dynamical system, and is a special case of classical Foster-Lyapunov stability conditions in Markov chains theory \citep{tweedie2009}. In particular, it guarantees sub-gaussian tails for the dynamical system, and under additional assumptions on the density of $\varepsilon$, implies that the chain is geometrically ergodic. This allows us to localize the sample path $X_t$ to lie within a ball $$\mathcal{B}_T := \left\{x \in \mathbb{R}^d: \|x\|_2 \leq R_T \right\},$$ where the radius of the ball $R_T$ grows only logarithmically in $T$, as seen in Lemma \ref{lem:traj-ball} in Appendix \ref{app:appendix-dynamics}. In our chaining arguments, we will make use of the localized uniform norm $\|f\|_{\infty, \mathcal{B}_T} := \sup_{x \in \mathcal{B}_T} \|f(x)\|_2$. This allows us to work with potentially unbounded classes $\mathcal{F}$, relaxing the uniformly bounded assumption commonly used in the literature.
    
    \item Assumption \ref{ass:2} is a common condition that is used in high dimensional statistics (see e.g., \cite{Wainwright2019}. Recalling that $\mathcal{F}_i^* = \mathcal{F} - f_i^*$, the star-shapedness of $\mathcal{F}_i^*$ is satisfied when $\mathcal{F}$ is convex, for instance.

    \item Assumption \ref{ass:3} is a familiar condition in empirical process theory \citep{vandervaart1996}, and is satisfied for common function classes, such as the linear or Hölder class of functions, as will be seen in Section \ref{sec:instantiations}.

    \item Assumption \ref{ass:4} is naturally satisfied by many classes, for example the Hölder class of functions. This will be made clearer in the corollaries.
\end{itemize}

\begin{claim}\label{rmk:linear-stab}
    For example, in the linear case where $f^*_{s(t+1)}(X_t) = A^*_{s(t+1)} X_t,$ with $A^*_{s(t+1)} \in \mathbb{R}^{d\times d}$, the $(\rho, B, \alpha)$ exponential drift condition is satisfied under the assumption that $\max_{1 \leq i \leq K}\|A_i^*\|_2 =: \rho_A < 1$, and $\varepsilon_{t+1}$ are i.i.d sub-gaussian random variables. A proof is given in Appendix \ref{app:appendix-dynamics}. . 
\end{claim}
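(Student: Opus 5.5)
We need to show that there exist $\alpha>0$, $\rho\in(0,1)$ and $B<\infty$ with $\mathbb{E}[\exp(\alpha\|X_{t+1}\|_2^2)\mid X_t=x]\le \rho\exp(\alpha\|x\|_2^2)+B$. The plan is to condition on the mode, expand the square, absorb the cross term with Young's inequality, and then split on the size of $\|x\|_2$.

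\textbf{Step 1: condition on the mode and expand.} Given $X_t=x$ and $s(t+1)=i$, we have $X_{t+1}=A_i^*x+\varepsilon_{t+1}$, with $\varepsilon_{t+1}$ independent of $(X_t,s(t+1))$. Hence
\begin{equation*}
\mathbb{E}\left[\exp(\alpha\|X_{t+1}\|_2^2)\mid X_t=x\right]=\sum_{i=1}^K p_i\,\mathbb{E}\left[\exp\left(\alpha\|A_i^*x+\varepsilon\|_2^2\right)\right],
\end{equation*}
so it suffices to bound each term uniformly in $i$. For any $\eta>0$, Young's inequality gives
\begin{equation*}
\|A_i^*x+\varepsilon\|_2^2\le(1+\eta)\rho_A^2\|x\|_2^2+(1+\eta^{-1})\|\varepsilon\|_2^2 .
\end{equation*}
Fix $\eta$ with $\gamma:=(1+\eta)\rho_A^2<1$, which is possible since $\rho_A<1$.

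\textbf{Step 2: use the sub-gaussian tails.} Each coordinate of $\varepsilon$ is sub-gaussian and the coordinates are independent, so $\|\varepsilon\|_2^2$ has a finite exponential moment. Concretely, for $\alpha(1+\eta^{-1})\le c/(d\,\|\varepsilon\|_{\psi_2}^2)$ we get $C_\varepsilon:=\mathbb{E}\exp(\alpha(1+\eta^{-1})\|\varepsilon\|_2^2)<\infty$. Choose $\alpha>0$ small enough for this to hold. Combining with Step 1,
\begin{equation*}
\mathbb{E}\left[\exp(\alpha\|X_{t+1}\|_2^2)\mid X_t=x\right]\le C_\varepsilon\exp\left(\alpha\gamma\|x\|_2^2\right).
\end{equation*}

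\textbf{Step 3: obtain the drift form.} Pick any $\rho\in(0,1)$, say $\rho=1/2$. Write
\begin{equation*}
C_\varepsilon e^{\alpha\gamma\|x\|_2^2}=\left(C_\varepsilon e^{-\alpha(1-\gamma)\|x\|_2^2}\right)e^{\alpha\|x\|_2^2}.
\end{equation*}
\begin{itemize}
\item If $\|x\|_2^2\ge r^2:=\log(C_\varepsilon/\rho)/(\alpha(1-\gamma))$, the bracketed factor is at most $\rho$, so the bound is at most $\rho\, e^{\alpha\|x\|_2^2}$.
\item If $\|x\|_2<r$, the bound is at most $B:=C_\varepsilon e^{\alpha\gamma r^2}<\infty$.
\end{itemize}
In both cases the quantity is at most $\rho e^{\alpha\|x\|_2^2}+B$, which is the $(\rho,B,\alpha)$-exponential drift condition of Assumption \ref{ass:1}.

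\textbf{Main obstacle.} The only delicate point is Step 2: $\alpha$ must be small enough, relative to the sub-gaussian norm and to $d$, that the exponential moment of $\|\varepsilon\|_2^2$ with the inflated constant $1+\eta^{-1}$ is finite. This follows by factorizing over the independent coordinates and using the standard bound $\mathbb{E}e^{\lambda Z^2}\le e^{C\lambda\|Z\|_{\psi_2}^2}$ for $\lambda\lesssim\|Z\|_{\psi_2}^{-2}$.
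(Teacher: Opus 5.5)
Your proof is correct and follows essentially the same route as the paper: Young's inequality with $(1+\eta)\rho_A^2<1$, a sufficiently small $\alpha$ so that $\mathbb{E}\exp(\alpha(1+\eta^{-1})\|\varepsilon\|_2^2)<\infty$, and then absorbing $C_\varepsilon e^{\alpha\gamma\|x\|_2^2}$ into $\rho e^{\alpha\|x\|_2^2}+B$. The differences are cosmetic: you average over the modes explicitly, whereas the paper simply notes that the pointwise bound holds for every mode; and you obtain $B$ from an explicit radius split, whereas the paper takes the supremum of a continuous function that tends to $-\infty$.
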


%
%
%
%
Our main result depends on an important $S$-persistence parameter as stated in the Theorem below, which captures the degree of dependence. We will denote $\norm{\cdot}_2$ to be the operator norm.
\begin{theorem}[\cite{samson2000}]\label{thm:S-persis}
    Let $Y_t$ be a Markov chain, and $f$ be a bounded function taking values on a measurable space $\mathcal{M}$. Then for any $\xi > 0$, it holds that
    \begin{equation}\label{eq:S-persis}
        \mathbb{E}\left[\exp\left(-\xi \sum_{t=0}^{T-1}\|f(Y_t)\|_2^2 \right) \right] \leq \exp\left(-
        \xi \sum_{t=0}^{T-1} \mathbb{E}\left[\|f(Y_t)\|_2^2 \right] + \frac{\xi^2 S}{2}\sum_{t=0}^{T-1}\mathbb{E}\left[\|f(Y_t)\|_2^4 \right] \right),
    \end{equation}
where $S = \|\Gamma\|_2^2$, and $\Gamma$ is a matrix of dimension $T$, which captures the dependence between the random variables $(Y_0, \dots, Y_{T-1})$ of the sample.
\end{theorem}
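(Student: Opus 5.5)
We plan to derive this from Samson's concentration inequality for functions of dependent variables \citep{samson2000}, applied to the additive functional $Z := \sum_{t=0}^{T-1} g(Y_t)$ with $g := \|f(\cdot)\|_2^2 \geq 0$. Samson shows that for a vector $(Y_0,\dots,Y_{T-1})$ whose law has mixing matrix $\Gamma$, with entries $\Gamma_{st}$ given by the square roots of the total-variation coupling coefficients $\sup \|\mathcal{L}(Y_t,\dots \mid Y_{0:s-1}, Y_s = y) - \mathcal{L}(Y_t,\dots \mid Y_{0:s-1}, Y_s=y')\|_{TV}$, an entropy (transportation) inequality holds with constant $\|\Gamma\|_2^2 = S$. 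Convex and concave functionals with controlled ``gradient'' then satisfy sub-gaussian Laplace bounds. The lower tail, i.e.\ the $-\xi$ direction, is the one that is valid for separately convex/additive functionals with a variance proxy built from the squares of the summands.

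The key steps are as follows.

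\textbf{Step 1.} Use Samson's transportation inequality. For any probability measure $Q$ absolutely continuous w.r.t.\ the law $P$ of $(Y_t)$, there is a coupling $\pi$ of $(P,Q)$ with
\[
\sum_t \mathbb{E}_\pi\bigl[\mathbf{1}\{Y_t\neq Y'_t\}\bigr]^2 \leq 2S\,\mathrm{KL}(Q\|P)
\]
(weighted, non-symmetric version).

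\textbf{Step 2.} Apply it via the Donsker–Varadhan variational formula to $-\xi Z$:
\[
\log \mathbb{E} e^{-\xi Z} = \sup_Q \bigl\{-\xi\mathbb{E}_Q Z - \mathrm{KL}(Q\|P)\bigr\}.
\]
Write $\mathbb{E}_P Z - \mathbb{E}_Q Z = \mathbb{E}_\pi\sum_t (g(Y_t)-g(Y'_t))$. Since $g\ge 0$, this is at most $\mathbb{E}_\pi\sum_t g(Y_t)\mathbf{1}\{Y_t\neq Y'_t\}$, and by Cauchy–Schwarz in the appropriately weighted form of Samson's inequality (the weights $\mathbb{E}[\,\cdot\mid Y_t]$ version) it is at most $\sqrt{2S\,\mathrm{KL}\cdot \sum_t \mathbb{E}_P g(Y_t)^2}$. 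Note $g^2 = \|f\|_2^4$.

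\textbf{Step 3.} Optimize the resulting bound
\[
-\xi\mathbb{E}_P Z + \xi\sqrt{2S\,\mathrm{KL}\,V} - \mathrm{KL}, \qquad V := \sum_t\mathbb{E}\|f(Y_t)\|_2^4,
\]
over $\mathrm{KL}\ge0$. The maximum is $\xi^2 S V/2$, which yields exactly \eqref{eq:S-persis}. Boundedness of $f$ ensures all exponential moments and the change of measure are legitimate.

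The main obstacle is Step 2: the one-sided bound must be placed on the $P$-side, so that the variance term involves $\mathbb{E}_P g^2$ rather than $\mathbb{E}_Q g^2$. This is exactly why only the lower-tail inequality (negative exponent) holds with such a clean variance proxy. I would follow Samson's Theorem~2 (the ``$-f$'' direction for sums of nonnegative functions), using his non-symmetric coupling inequality $d_2(P,Q)$ with $P$ as the reference measure, rather than re-deriving it. Since the statement is attributed to \citep{samson2000}, in the paper it suffices to cite that result and to identify $S=\|\Gamma\|_2^2$ with his mixing-coefficient matrix for the Markov chain.
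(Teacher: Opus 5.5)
Your proposal is correct and takes the same approach as the paper. The paper gives no independent argument: it invokes Samson's inequality (3.12) for the singleton class $N=1$, $g_1=\|f\|_2^2$, with $S=\|\Gamma\|_2^2$. Your duality sketch (Donsker--Varadhan, Samson's transportation inequality on the $P$-side, Cauchy--Schwarz, then maximizing $\xi\sqrt{2SV\,\mathrm{KL}}-\mathrm{KL}$ to get $\xi^2SV/2$) is the mechanism behind that inequality and checks out. One correction: Step 1 must be stated with the conditional mismatch probabilities $\pi(Y_t\neq Y_t'\mid Y)$ weighted under $P$, as you say in Step 2, not with the unconditional squares you first wrote, which would not support the Cauchy--Schwarz step.
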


The inequality in Theorem \ref{thm:S-persis} is sometimes referred to as $S$-persistence, see e.g., \cite{physicsmixing25}. It appears in a more general form as equation (3.12) in \cite{samson2000}. Indeed, to obtain \eqref{eq:S-persis} from (3.12) of \cite{samson2000}, take the singleton class with the function as the squared Euclidean norm (in their notation $N = 1$, $g_1(x) = \|f(x)\|_2^2$). See also \cite{ziemann2022}.

In our results, the $S$-persistence parameter, is given by the operator norm of the dependency matrix involving the pair $Z_t = (X_t, s(t+1))$, which we denote as $S = \|\Gamma_Z\|_2^2$. 
Depending on the nature of the Markov chain $Z_t$, the dependence quantity $S$ may either be constant, or depend on $T$. In our uniform law of large numbers result, we only require that $S$ be finite, which is true for any fixed horizon $T$. However, in order to get reasonable convergence rates in our Corollaries, we require $S = o(T)$, which will be satisfied under certain stability conditions of the Markov chain. Several examples of Markov chains where $S$ will be a constant and independent of $T$ are given in \cite{samson2000}, \cite{ziemann2022}, and \cite{physicsmixing25}. They include geometrically $\phi$-mixing chains, uniformly ergodic Markov chains, and contracting Markov chains.

%
%
\subsection{Main results}
We are now ready to state our main result. 
\begin{theorem}\label{thm:final-risk}
    Let $\widehat f_i$ be the empirical risk minimizer in \eqref{eq:erm}. Suppose that Assumptions \ref{ass:0} to \ref{ass:4} are satisfied. Let $\overline K \geq \underline K$ be strictly positive integers such that $\gamma_{f} := 2^{-\overline K}$, $\gamma_{c} := 2^{-\underline K}$, and denote the metric entropy as
\begin{equation*}
    J_T(\epsilon) := \log\mathcal{N}\left(\mathcal{F}_i^*, \epsilon, \|.\|_{\infty, \mathcal{B}_T}\right).
\end{equation*}
Let $\mu_i = Tp_i$. There exist constants $C_1, C_2, C_3 > 0$, and  constant $C_{\rho, B} > 0$ (depending only on $\rho, B$) such that if 
\begin{equation} \label{eq:mainthm_RT_mui}
    R_T = C_1\sqrt{\frac{1}{\alpha}\log\left(\frac{TC_{\rho, B}}{\delta} \right)}, \quad \mu_i \geq C_2\log\left(\frac{1}{\delta} \right),
\end{equation}
then for $\delta \in (0, 1)$ it holds with probability at least $1 - \delta$ that
\begin{equation}\label{eq:bound-final}
\begin{aligned}
    R_{T,i}^{cond}(\widehat f_i) 
    \leq C_3
    \inf_{\gamma_c > 0, \gamma_f \in [0, \gamma_c]} \Bigg\{\int_{\gamma_f}^{\gamma_c} &\Bigg[\sqrt{\frac{J_T(\epsilon)}{\mu_i}} + \frac{J_T(\epsilon)}{\mu_i} \Bigg] d\epsilon + \frac{J_T(\gamma_c)}{\mu_i} \\
    &+ \sqrt{d}\gamma_f + \gamma_c^2 +\left(\frac{1}{\sqrt{\mu_i}} \right)\log\left(\frac{e}{\delta} \right) \Bigg\} + (r^*)^2. 
\end{aligned}
\end{equation}
Here $r^*$ is any strictly positive radius that satisfies the critical radius  equation 
\begin{equation}\label{eq:fixed-point-radius}
   \begin{aligned}
    \frac{p_i (r^*)^2}{2} = \Phi_1(r^*; \delta/4) + \Phi_2(r^*;\delta/3) + \Phi_3(r^*, \delta/4) + I_T(F),
    \end{aligned}
\end{equation}
where for $r > 0$, and $\overline{\mathcal{F}}_r$ defined in \eqref{eq:Fir_Fbar_r_def},
\begin{equation*}
\begin{aligned}
    &\Phi_1(r, \delta) := 4M_T r \sqrt{\frac{Sp_i}{T}}\left\{\sqrt{\log \mathcal{N}(\overline{\mathcal{F}}_r, r, \|.\|_{\infty, \mathcal{B}_T})} + \sqrt{\log(1/\delta)} \right\}, \\
    &\Phi_2(r, \delta) := 10M_T \sqrt{\frac{Sp_i}{T}}\left\{\int_{r\gamma_f}^{r/2} \sqrt{\log\mathcal{N}(\overline{\mathcal{F}}_r, \epsilon, \|.\|_{\infty, \mathcal{B}_T})} d\epsilon + r + r\sqrt{\log(1/\delta)}  \right\}, \\
    &\Phi_3(r, \delta) := \sqrt{2}M_T r\gamma_f \sqrt{\frac{\log(1/\delta)}{T}},
\end{aligned}
\end{equation*}
and 
\begin{equation*}
   I_T(F) := \frac{p_i}{T}\sum_{t=0}^{T-1}\mathbb{E}\left[F^2(X_t)\mathbf{1}\{\|X_t\|_2 > R_T\}\right].
\end{equation*}
\end{theorem}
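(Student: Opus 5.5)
The plan is to split the conditional risk into an empirical part and a population-versus-empirical gap, and bound each separately. Write $g = \widehat f_i - f_i^* \in \mathcal{F}_i^*$, and define the weighted empirical risk
\[
\widehat L_{T,i}(g) := \frac{1}{T}\sum_{t=0}^{T-1}\|g(X_t)\|_2^2\mathbf{1}\{s(t+1)=i\}.
\]
Its population analogue is $L_{T,i}(g) = p_i R^{cond}_{T,i}$. Since the test trajectory is an independent copy, $L_{T,i}(g)$ equals the expectation of the training-sample expression evaluated at the fixed function $g$. The proof then has two parts: (i) an upper bound on $\widehat L_{T,i}(g)$ via the basic inequality of least squares plus a martingale offset bound (Proposition \ref{prop:mart-offset}); (ii) a uniform lower-isometry statement (Proposition \ref{prop: ulln-master}) showing that, with high probability, every $g\in\mathcal{F}_i^*$ with $L_{T,i}(g)\ge p_i (r^*)^2$ satisfies $L_{T,i}(g)\le 2\widehat L_{T,i}(g)$ plus small terms. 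Dividing by $p_i$ and by $n_i/T\asymp p_i$ then produces the bound. The condition $\mu_i\ge C_2\log(1/\delta)$ guarantees, via a multiplicative Chernoff bound on the i.i.d.\ modes, that $n_i\ge \mu_i/2$ with probability $1-\delta/4$; this is what converts normalisation by $n_i$ into normalisation by $\mu_i$.

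\textbf{Step 1 (localisation).} By Lemma \ref{lem:traj-ball}, Assumption \ref{ass:1} with Markov's inequality and a union bound over $t\le T$ gives $\max_t\|X_t\|_2\le R_T$ with probability $1-\delta/4$, for $R_T$ as in \eqref{eq:mainthm_RT_mui}. On this event only $\|\cdot\|_{\infty,\mathcal{B}_T}$ matters, and every $g\in\mathcal{F}_i^*$ is bounded by $M_T$ (Assumption \ref{ass:4}).

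\textbf{Step 2 (basic inequality and martingale offset).} Optimality of $\widehat f_i$ against $f_i^*$ gives
\[
\sum_{t\in\mathcal{I}_i}\|g(X_t)\|^2\le 2\sum_{t\in\mathcal{I}_i}\langle \varepsilon_{t+1}, g(X_t)\rangle,
\]
hence
\[
\sum_{t\in\mathcal{I}_i}\|g\|^2 \le \sup_{g}\Big\{4\sum_{t\in\mathcal I_i}\langle\varepsilon_{t+1},g(X_t)\rangle-\sum_{t\in\mathcal I_i}\|g(X_t)\|^2\Big\}.
\]
The key observation is that $s(t+1)$ is independent of $\varepsilon_{t+1}$ and of $\mathcal{F}_t$, so the increments $\mathbf{1}\{s(t+1)=i\}\langle\varepsilon_{t+1},g(X_t)\rangle$ form a martingale difference sequence. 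Their conditional sub-gaussian variance proxy is $\mathbf{1}\{s(t+1)=i\}\|g(X_t)\|^2$, which is exactly the offset term. To control the supremum I would chain along dyadic scales $\gamma_f=2^{-\overline K}\le\dots\le\gamma_c=2^{-\underline K}$ in $\|\cdot\|_{\infty,\mathcal B_T}$.
\begin{itemize}
\item At the coarsest scale, apply a de la Peña self-normalised exponential supermartingale together with Ville's maximal inequality and a union bound over the $e^{J_T(\gamma_c)}$ centres. This yields $J_T(\gamma_c)+\log(1/\delta)$; the offset term absorbs the variance, which is why no square root appears.
\item At the intermediate links, the increments are bounded by $\epsilon$ in sup norm. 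Here Freedman/Bernstein-type martingale bounds give per-level contributions $\epsilon\sqrt{n_i J_T(\epsilon)}+J_T(\epsilon)$, which sum to the entropy integral with both $\sqrt{J/\mu_i}$ and $J/\mu_i$ terms.
\item The finest remainder is bounded by Cauchy--Schwarz, using $\mathbb{E}\|\varepsilon\|_2\le\sqrt d$, giving the $\sqrt d\gamma_f$ term. The $\gamma_c^2$ term comes from the mismatch between $g$ and its coarse centre inside the offset.
\end{itemize}
Dividing by $n_i\gtrsim\mu_i$ gives the bracketed expression in \eqref{eq:bound-final}, including $\mu_i^{-1/2}\log(e/\delta)$ from the deviation terms.

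\textbf{Step 3 (empirical-to-population transfer).} I would use Theorem \ref{thm:S-persis} applied to the chain $Z_t=(X_t,s(t+1))$ with the function $h(z)=g(x)\mathbf{1}\{s=i\}$, truncated to $\mathcal B_T$. Because $\mathbb{E}h^4\le M_T^2\,\mathbb{E} h^2$, optimising $\xi$ gives a one-sided lower tail of order $M_T\sqrt{S\,\mathbb{E}h^2/T}\sqrt{\log(1/\delta)}$ for a single function. Uniformity over the shell $\overline{\mathcal F}_r$ is obtained as follows.
\begin{itemize}
\item A union bound over an $r$-net handles the net points and gives $\Phi_1$.
\item Chaining from scale $r/2$ down to $r\gamma_f$ handles the increments and gives $\Phi_2$.
\item A final Hoeffding/Azuma step at resolution $r\gamma_f$ gives $\Phi_3$.
\item The truncation outside $\mathcal B_T$ costs $I_T(F)$ by Assumption \ref{ass:3}.
\end{itemize}
Star-shapedness (Assumption \ref{ass:2}) lets one rescale any $g$ with $L_{T,i}(g)>p_ir^2$ onto the sphere of radius $r^*$. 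The fixed-point equation \eqref{eq:fixed-point-radius} is precisely the condition that the uniform deviation on that sphere is at most $p_i(r^*)^2/2$, which yields $L_{T,i}(g)\le 2\widehat L_{T,i}(g)+p_i(r^*)^2$.

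\textbf{Combining.} A union bound over the events in Steps 1--3 and the Chernoff event gives total probability at least $1-\delta$; combining the two bounds yields \eqref{eq:bound-final}. The main obstacle is Step 2: the supremum must hold uniformly over a data-dependent index set $\mathcal I_i$ of random size, without Chernoff bounds inside the chaining. I would first handle this by keeping the indicators inside the martingale increments, so that $n_i$ enters only through the quadratic variation; only afterwards would I replace $n_i$ by $\mu_i$ using the Chernoff event.
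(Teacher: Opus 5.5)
This is essentially the paper's proof. It uses the same basic inequality and four-term chaining split of the martingale offset: Cauchy--Schwarz with $\mathbb{E}\|\varepsilon\|_2\le\sqrt d$ at the fine scale, Bernstein/Freedman on the links, a self-normalised supermartingale with Ville at the coarse scale, and the $\gamma_c^2$ mismatch term. It also uses the same Chernoff control of $n_i$, and the same $S$-persistence chaining ULLN with truncation cost $I_T(F)$ and star-shaped rescaling, combined through $L_{T,i}(g)\le 2\widehat L_{T,i}(g)+p_i(r^*)^2$.

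The only real variation is that you keep $\mathbf{1}\{s(t+1)=i\}$ inside time-indexed increments, which form a martingale for a filtration that reveals $s(t+1)$ before $\varepsilon_{t+1}$. The paper instead re-indexes along the hitting times $\tau_\ell$ and takes a maximum over $m<J\approx(1+\theta)\mu_i$; your version is slightly cleaner. One fix: write the per-level link bound as $\epsilon\bigl(\sqrt{n_iJ_T(\epsilon)}+J_T(\epsilon)\bigr)$, since Bernstein's scale parameter is of order $\epsilon$. Otherwise the sum over levels does not produce $\int J_T(\epsilon)\,d\epsilon/\mu_i$.
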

The proof is provided in Section \ref{sec:proof_thm_final} and consists of two main components: Proposition \ref{prop:mart-offset} (Martingale offset bound) and Proposition \ref{prop: ulln-master} (Uniform law of large numbers). 

The term inside the infimum arises from the control of the martingale offset term, which involves the entropy integral of the shifted class under the localized norm, and is handled in Proposition \ref{prop:mart-offset}. The $\gamma_c$ and $\gamma_f$ terms arises from the fine and coarse-scale approximation errors in the finite chaining argument that we employ, which can be optimized in order to obtain convergence rates. Note that the effective sample size $\mu_i$ is the main driver of the prediction error, as opposed to total trajectory length $T$ in the non-switching framework, which is not surprising as less frequently visited modes are expected to incur larger errors. In contrast to existing works in the single mode ($K=1$) case \citep{ziemann2022, physicsmixing25}, which usually derive the martingale offset bound using a simple Chernoff method, we rely on tail bounds for martingale of de la Pe\~{n}a type. In the context of switching, the random modes introduce an additional level of randomness, and a direct Chernoff method must simultaneously handle the martingale difference sequence and the random regime occurrences. Consequently, the iterative conditioning argument underlying the standard Chernoff argument becomes substantially more involved and cumbersome. Instead, our approach effectively decouples the regime count fluctuations from the martingale fluctuations, avoiding the unwieldy joint-conditional mgf calculation that arises in the switched setting.

The contribution $(r^*)^2$ is determined by a localized complexity term associated with
the uniform law of large numbers of the empirical process (see Proposition \ref{prop: ulln-master}), and is presented in the usual critical radius formulation arising in localized empirical process analysis of least-squares (see e.g., \cite{vandegeer2000}). What is specific to the present setting is the form of the $\Phi_i, i, 1,2, 3$ terms, which depends on the dependence parameter $S$, the mode frequency $p_i$, the truncation level through $M_T$, and the metric entropy of the truncated class. Roughly speaking, after dividing by $p_i$ on both sides of \eqref{eq:fixed-point-radius}, one has order $\sqrt{S/\mu_i}$, up to the function-class complexity terms. This critical radius formulation is distinct from the lower isometry presentation in \cite{ziemann2022} and \cite{physicsmixing25},  which establishes the uniform law of large numbers result (for $K = 1)$ using trajectory hypercontractivity, by taking a simple union bound on the boundary of the sphere, and does not use chaining as we do in this work. We believe that the chaining formulation is particularly well-suited to the analysis of non-parametric classes, whose complexity may vary across scales / smoothness regimes. In particular, retaining a non-zero fine scale cutoff allows for analysis even when the entropy integral diverges in the rough regime $\beta \leq d/2$.

The envelope term $I_T(F)$ arises from the truncation argument employed in the uniform law of large numbers proof. Under the exponential drift condition and an appropriate growth condition on the envelope, $I_T(F)$ can be made of smaller order than the leading stochastic terms, as seen in the proof of the subsequent corollaries.

We now present the following corollaries, which are useful for interpreting Theorem \ref{thm:final-risk}. Their proofs are detailed in Section \ref{sec:instantiations}. Recall the notation that for $a, b > 0$, we say $a \lesssim b$ if there exists a constant $C > 0$ such that $a \leq Cb$. If $a \lesssim b$ and $a \gtrsim b$, then we write $a \asymp b$.
\begin{corollary}[H\"older class]\label{cor:holder-rates}
    Let $\mathcal{H}^{\beta}(L, \mathcal{B}_T)$ be the Hölder class of functions as in Definition \ref{def:local-Hölder} in Appendix \ref{app:appendix-dynamics}, and define the stability class $\mathcal{S}(\alpha, \rho, B)$ as the set of functions $f$ that satisfies the $(\rho, B,\alpha)$-exponential drift condition, given by
    \begin{equation*}
        \mathcal{S}(\alpha, \rho, B) := \left\{f:\mathbb{E} \left[\exp\left(\alpha \|f(x) + \varepsilon \|_2^2 \right)\right] \leq \rho\exp(\alpha \|x\|_2^2) + B, \quad \forall x \in \mathbb{R}^d\right\}.
    \end{equation*}
    Let $\mathcal{F} = \mathcal{S}(\alpha, \rho, B) \cap \mathcal{H}_T^{\beta}(L_T)$ where 
    \begin{equation*}
        \mathcal{H}_T^\beta(L_T) := \left\{f: \mathbb{R}^d \rightarrow \mathbb{R}^d: f|_{\mathcal{B}_T} \in \mathcal{H}^{\beta}(L_T, \mathcal{B}_T) \right\}.
    \end{equation*}
Assume that there exist constants $C_L < \infty$ and $q_L \geq 0$, such that 
\begin{equation*}
    L_T \leq C_L \log^{q_L}(eT),
\end{equation*}
and that for all $i \in \{1, \dots, K\}$, we have $f_i^* \in \mathcal{F}$. Suppose that Assumptions \ref{ass:0} to \ref{ass:4} are satisfied, and \eqref{eq:mainthm_RT_mui} holds. Let $\delta \in (0, 1)$. 
    Whenever $\beta > d/2$, we have with probability at least $1 - \delta$,
  \begin{equation*}
    R_{T,i}^{cond}(\widehat f_i) \lesssim \operatorname{polylog}(T)
    \left(\frac{S}{\mu_i}\right)^{\frac{2\beta}{2\beta + d}}.
\end{equation*}
    On the other hand, whenever $\beta \leq d/2$, we have with probability at least $1 - \delta$, 
        \begin{equation*}
    R_{T,i}^{cond}(\widehat f_i) \lesssim \operatorname{polylog}(T)\left[\sqrt{d} + \left(\frac{S}{\mu_i} \right)^{\beta/d} \right].
    \end{equation*}
\end{corollary}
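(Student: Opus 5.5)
The plan is to instantiate Theorem \ref{thm:final-risk} with the metric entropy of the H\"older ball on $\mathcal{B}_T$ and then choose $\gamma_f$, $\gamma_c$ and $r^*$ to balance the terms.

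\textbf{Entropy bound.} First I would record the entropy estimate. Since $\mathcal{F}\subset\mathcal{H}^\beta_T(L_T)$ and $f_i^*\in\mathcal{F}$, the shifted class $\mathcal{F}_i^*$ lies in $\mathcal{H}^\beta(2L_T,\mathcal{B}_T)$. The classical Kolmogorov--Tikhomirov bound, rescaled to a ball of radius $R_T$ and applied coordinatewise for the $d$ outputs, then gives
\[
J_T(\epsilon)\lesssim d\,(L_T R_T^{\beta})^{d/\beta}\,\epsilon^{-d/\beta}.
\]
Because $R_T\asymp\sqrt{\log T}$ and $L_T\lesssim\log^{q_L}(eT)$, this reads $J_T(\epsilon)\lesssim \operatorname{polylog}(T)\,\epsilon^{-d/\beta}$. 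The same bound holds for $\overline{\mathcal{F}}_r\subset\mathcal{F}_i^*$.

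\textbf{Side quantities.} Next I would control $M_T$ and $I_T(F)$. Using the drift condition, which gives $\sup_t\mathbb{E}\exp(\alpha\|X_t\|_2^2)\le 1+B/(1-\rho)$, I would bound
\[
M_T\lesssim L_T R_T^{\beta}+\|f_i^*\|_{\infty,\mathcal{B}_T},
\]
so $M_T$ is polylogarithmic. For the envelope term I would take $F(x)\lesssim L_T(1+\|x\|_2)^{\beta\vee1}$, or, via the stability class $\mathcal{S}$, at most polynomial growth. Cauchy--Schwarz together with the sub-gaussian tail $\mathbb{P}(\|X_t\|_2>R_T)\le C_{\rho,B}e^{-\alpha R_T^2}\lesssim\delta/T$ then gives $I_T(F)\lesssim p_i\,\operatorname{polylog}(T)/\sqrt{T}$. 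With $C_1$ chosen large enough this becomes $p_i\,T^{-1}$-small, which is negligible.

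\textbf{Critical radius.} For the fixed point \eqref{eq:fixed-point-radius}, I would divide both sides by $p_i$ and use $\sqrt{Sp_i/T}/p_i=\sqrt{S/\mu_i}$.

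In the smooth case $\beta>d/2$, take $\gamma_f\to0$. The entropy integral $\int_0^{r/2}\epsilon^{-d/(2\beta)}\,d\epsilon\asymp r^{1-d/(2\beta)}$ converges, so the equation becomes $r^2\lesssim \operatorname{polylog}(T)\sqrt{S/\mu_i}\,r^{1-d/(2\beta)}$. I would take $\overline{\mathcal{F}}_r$ to be the class localized to radius $r$, so that its entropy at scale $\epsilon$ is that of the H\"older ball at scale $\epsilon$. Solving gives $(r^*)^2\asymp(S/\mu_i)^{2\beta/(2\beta+d)}$ up to polylogs. The $\Phi_1$ term, which is $r\sqrt{J(r)}$, gives the same order, and $\Phi_3$ is of lower order.

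In the martingale term of \eqref{eq:bound-final}, with $\gamma_f\to0$, the integral satisfies $\int_0^{\gamma_c}\sqrt{J/\mu_i}\lesssim \gamma_c^{1-d/(2\beta)}/\sqrt{\mu_i}$. I would balance this against $\gamma_c^2$ and $J_T(\gamma_c)/\mu_i=\gamma_c^{-d/\beta}/\mu_i$. Balancing $\gamma_c^2$ with $\gamma_c^{-d/\beta}/\mu_i$ gives $\gamma_c=\mu_i^{-\beta/(2\beta+d)}$ and a rate $\mu_i^{-2\beta/(2\beta+d)}$. The term $\int_0^{\gamma_c} J/\mu_i\,d\epsilon$ must be handled when $d/\beta\ge1$ by keeping a small positive $\gamma_f$, which costs a $\sqrt d\gamma_f$ term. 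This needs checking. Since $S\ge1$, the result is dominated by $(S/\mu_i)^{2\beta/(2\beta+d)}$.

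In the rough case $\beta\le d/2$, the integral diverges at $0$, so I would keep $\gamma_f>0$. The integral $\int_{\gamma_f}\epsilon^{-d/(2\beta)}\,d\epsilon$ is then dominated by $\gamma_f^{1-d/(2\beta)}$, up to a log when $\beta=d/2$. The fixed point is optimized with $\gamma_f$ of constant order relative to $r$, giving $(r^*)^2\asymp (S/\mu_i)^{\beta/d}$ up to polylogs. In the martingale part, the unremovable $\sqrt d\,\gamma_f$ with $\gamma_f$ of order one, since $\gamma_f\le\gamma_c$ and the theorem's scale set is dyadic, produces the stated $\sqrt d$ term.

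\textbf{Main obstacle.} The delicate step is the bookkeeping of the fixed-point equation in the rough regime. There the lower limit $r\gamma_f$ scales with $r$, and one must verify that a solution $r^*$ of the claimed order exists, using monotonicity of $\Phi_j(r)/r$ from star-shapedness, while the polylog factors from $M_T$, $L_T$ and $R_T$ stay uniformly controlled.
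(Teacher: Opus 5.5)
Your overall route is the paper's. You instantiate Theorem~\ref{thm:final-risk} with the localized H\"older entropy $J_T(\epsilon)\lesssim\operatorname{polylog}(T)\,\epsilon^{-d/\beta}$, control $I_T(F)$ through the linear-growth envelope coming from $\mathcal{S}(\alpha,\rho,B)$, send $\gamma_f\to 0$ in the smooth case, and solve $r^{1+d/(2\beta)}\asymp\operatorname{polylog}(T)\sqrt{S/\mu_i}$. For the envelope, only the $\mathcal{S}$ option is valid: the bound $L_T(1+\|x\|_2)^{\beta\vee 1}$ is unavailable because the H\"older constraint holds only on $\mathcal{B}_T$. There is, however, a genuine gap in the smooth-regime conclusion, and the paper's own proof has it too. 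The bound \eqref{eq:bound-final} contains the additive term $\mu_i^{-1/2}\log(e/\delta)$. It does not depend on $\gamma_f,\gamma_c$, and your balancing never accounts for it. For $\beta>d/2$ one has $2\beta/(2\beta+d)>1/2$, so with $S=O(1)$ this term exceeds $(S/\mu_i)^{2\beta/(2\beta+d)}$ by the polynomial factor $\mu_i^{(2\beta-d)/(2(2\beta+d))}$. The claimed rate therefore does not follow from the theorem used as a black box. To repair it you must reopen Lemma~\ref{lem:chaining}, where this term comes from crudely bounding $\sum_{k>\underline K}C2^{-k}\sqrt{(2+\theta)\mu_i}\,(\sqrt{k}+\sqrt{\log(1/\eta)})$ by $C\sqrt{(2+\theta)\mu_i}\,(1+\log(1/\eta))$. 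Keeping the geometric factor gives $\lesssim\gamma_c\sqrt{\mu_i}\,(\sqrt{\log(2/\gamma_c)}+\sqrt{\log(1/\eta)})$, so after dividing by $n_i\gtrsim\mu_i$ the contribution is $\gamma_c\sqrt{\log(\cdot)/\mu_i}$. For $\gamma_c\asymp\mu_i^{-\beta/(2\beta+d)}$ this is $\mu_i^{-(4\beta+d)/(2(2\beta+d))}$ up to logarithms, which is negligible.

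The issue you flagged, that $\int_{\gamma_f}^{\gamma_c}J_T(\epsilon)\mu_i^{-1}\,d\epsilon$ diverges for $d/2<\beta\le d$, is real; the paper drops it silently. It resolves as you suggest: taking $\gamma_f\asymp\mu_i^{-\beta/d}$ makes both $\sqrt d\,\gamma_f$ and $\gamma_f^{1-d/\beta}/\mu_i$ of order $\mu_i^{-\beta/d}$ (times $\log\mu_i$ at $\beta=d$), and $\beta/d>2\beta/(2\beta+d)$ exactly when $\beta>d/2$. In the rough regime you depart from the paper. The paper tunes $\gamma_f$ separately in the fixed-point equation and in the martingale part, getting $\sqrt d\,\mu_i^{-\beta/d}$ in the latter. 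Your choice $\gamma_f\asymp\gamma_c\asymp 1$ is legitimate only because the stated bound carries a non-decaying $\sqrt d$; dyadic scales do not force it. Your claim $(r^*)^2\asymp(S/\mu_i)^{\beta/d}$ for a constant relative cutoff is also miscomputed. The entropy terms of $\Phi_1,\Phi_2$ then give $(r^*)^2\asymp\operatorname{polylog}(T)(S/\mu_i)^{2\beta/(2\beta+d)}$, which is smaller when $S\le\mu_i$. None of this is fatal, because the rough-regime statement already follows from the a priori bound $R_{T,i}^{cond}(\widehat f_i)\lesssim 1$. That bound comes from $\|f(x)\|_2\lesssim 1+\|x\|_2$ for $f\in\mathcal{S}(\alpha,\rho,B)$ together with $\sup_t\mathbb{E}\exp(\alpha\|\widetilde X_t\|_2^2)<\infty$.
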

When $S = O(1)$, our rates in the smooth regime $\beta > d/2$ coincide with those in \cite{ziemann2022}, who obtain $T^{-2\beta/(2\beta+d)}$, if we only had one mode. It is also similar to the rates obtained in \cite{physicsmixing25}, who obtain $T^{-2s/(2s+d)}$ under the stronger smoother condition $s \geq 2d$, where $s$ is the Sobolev parameter, since their results are written for Sobolev spaces. However, we also obtain rates in the rough regime $\beta \leq d/2$, which is not fully addressed in both \cite{ziemann2022} and \cite{physicsmixing25}. We conjecture that the gap in convergence rates between the smooth and rough regime are more structural rather being an artifact of our analysis. More generally, when the relevant entropy integral diverges, entropy-based analyses of global ERMs exhibit a phase transition and can lead to rates that are slower than the minimax rate, unless additional structural assumptions are made. This feature is present even in the i.i.d case, see e.g., \cite{han2021} and \cite{birge1993}.

\begin{corollary}[Stable linear systems] \label{cor:linear-rates}
    Let $\mathcal{F} = \left\{f(x) = Ax: A \in \mathbb{R}^{d\times d}: \|A\|_{2} < 1 \right\}$, be the stable, linear class of functions. Suppose that Assumptions \ref{ass:0} to \ref{ass:4} are satisfied and \eqref{eq:mainthm_RT_mui} holds. Then for $\delta \in (0, 1)$, with probability at least $1 - \delta$, it holds that
    \begin{equation} \label{eq:cor_linpred_err_bd}
  R_{T,i}^{cond}(\widehat f_i) \lesssim  \frac{d^2 S}{\mu_i}.
    \end{equation}
    %
    %
   %
   Moreover, \eqref{eq:cor_linpred_err_bd} implies $\norm{\est{A}_i - A_i^*}_F^2 \lesssim \frac{d^2 S}{\mu_i}$. 
\end{corollary}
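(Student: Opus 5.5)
Corollary \ref{cor:linear-rates} is an instantiation of Theorem \ref{thm:final-risk}, so the plan is to evaluate each term in \eqref{eq:bound-final} and in the critical radius equation \eqref{eq:fixed-point-radius} for the linear class, and then optimize over $\gamma_c,\gamma_f$.

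\textbf{Shifted class.} We have $\mathcal{F}_i^* = \{x\mapsto \Delta x : \Delta = A - A_i^*,\ \|A\|_2<1\}$. This set is convex and contains $0$, hence star-shaped (Assumption \ref{ass:2}). Every $\Delta$ satisfies $\|\Delta\|_2\le 2$, so $F(x)=2\|x\|_2$ is an envelope. Assumption \ref{ass:1} gives sub-gaussian tails for $X_t$, so $\mathbb{E}F^2<\infty$ and $M_T\le 2R_T\lesssim\sqrt{\log(T/\delta)}$.

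\textbf{Metric entropy.} On $\mathcal{B}_T$ we have $\|\Delta x\|_{\infty,\mathcal{B}_T}\le R_T\|\Delta\|_2$. An operator-norm ball in $\mathbb{R}^{d\times d}$ is a $d^2$-dimensional convex body, so the standard volumetric argument gives
\begin{equation*}
J_T(\epsilon)\le d^2\log(1+4R_T/\epsilon).
\end{equation*}
The same bound holds for $\overline{\mathcal{F}}_r$ at scale $\epsilon$. Localizing to radius $r$ gives the scale-invariant form $\log\mathcal{N}(\overline{\mathcal{F}}_r,\epsilon)\lesssim d^2\log(C R_T r/\epsilon)$, provided the localization in $\overline{\mathcal{F}}_r$ is in the $L^2$ sense. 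This is the delicate point: one needs the localized class to sit in a ball of operator radius proportional to $r$. The natural route is a lower bound $\mathbb{E}\|\Delta X_t\|^2\ge c\|\Delta\|_F^2$, coming from the unit-covariance excitation $\varepsilon$. If the paper's $\overline{\mathcal{F}}_r$ does not localize this way, I would fall back to the global bound, which only costs logarithmic factors.

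\textbf{Evaluating the terms.} The entropy integral is parametric:
\begin{equation*}
\int_{\gamma_f}^{\gamma_c}\sqrt{J_T/\mu_i}\,d\epsilon\lesssim \gamma_c\, d\sqrt{\log(R_T/\gamma_f)/\mu_i}.
\end{equation*}
Choose $\gamma_c\asymp d/\sqrt{\mu_i}$ and $\gamma_f\asymp 1/(\sqrt d\,\mu_i)$. Then
\begin{itemize}
\item the martingale part is $\lesssim d^2\,\mathrm{polylog}/\mu_i$;
\item $\sqrt d\,\gamma_f\lesssim 1/\mu_i$;
\item $\gamma_c^2=d^2/\mu_i$.
\end{itemize}
For the critical radius, the scale-invariant entropy gives $\Phi_1+\Phi_2\lesssim M_T r\, d\sqrt{S p_i/T}\,\sqrt{\log}$. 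Equating this with $p_i r^2$ yields $r^*\asymp M_T d\sqrt{S/\mu_i}$, so $(r^*)^2\lesssim d^2S/\mu_i$ up to logs. The term $\Phi_3$ is lower order. For the truncation term, Cauchy--Schwarz and the drift tails give $I_T(F)\lesssim p_i\,\mathbb{E}\|X\|^2\mathbf{1}\{\|X\|>R_T\}\le p_i/T$ once $C_1$ in $R_T$ is large enough.

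\textbf{Main obstacle.} The remaining term $\log(e/\delta)/\sqrt{\mu_i}$ in \eqref{eq:bound-final}, together with the log factors, has to be absorbed. For the first, I would track more carefully the source of that term in Proposition \ref{prop:mart-offset} and aim for a self-normalized version of order $\log(1/\delta)/\mu_i$. The log factors are then hidden in $\lesssim$, treating $\delta$ as fixed. I expect this, together with the localization step above, to be the main difficulty. If a sharper bound cannot be obtained, the stated rate holds only up to this additive term.

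\textbf{Parameter error.} For the final claim, independence of $\widetilde s(t+1)$ from $\widetilde X_t$ gives
\begin{equation*}
R_{T,i}^{cond}=\frac1T\sum_t \mathrm{tr}\big(\Delta\,\mathbb{E}[\widetilde X_t\widetilde X_t^\top]\,\Delta^\top\big).
\end{equation*}
For $t\ge1$, $\mathbb{E}[\widetilde X_t\widetilde X_t^\top]\succeq I_d$ because of the unit-covariance noise. The $t=0$ term is lost, but that only costs a factor $(T-1)/T$. Hence $\|\widehat A_i-A_i^*\|_F^2\lesssim R_{T,i}^{cond}$, which gives the stated Frobenius bound.
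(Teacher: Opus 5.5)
Your route is the paper's. You instantiate Theorem \ref{thm:final-risk} with an entropy of order $d^2\log(\cdot/\epsilon)$, solve the critical-radius equation to get $r^*\asymp d\sqrt{S/\mu_i}$, and for the Frobenius bound use that $\widetilde s(t+1)$ is independent of $\widetilde X_t$ together with $\mathbb{E}[\widetilde X_t\widetilde X_t^\top]\succeq I_d$.

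Where you differ, you are more careful than the paper. The paper simply asserts $\log\mathcal{N}(\overline{\mathcal{F}}_r,\epsilon,\|\cdot\|_{\infty,\mathcal{B}_T})\asymp d^2\log(r/\epsilon)$; your covariance lower bound is what justifies the $r$-scaling, and your global fallback also works. The paper also plugs this localized entropy into $J_T$, although $J_T$ refers to the unlocalized class $\mathcal{F}_i^*$. It keeps the $t=0$ term in the Frobenius step, although $X_0=0$. Your choice $\gamma_c\asymp d/\sqrt{\mu_i}$, instead of the paper's $\gamma_c=r^*$ and $\gamma_f\to 0$, is immaterial. The terms you did not list, $\int J_T/\mu_i$ and $J_T(\gamma_c)/\mu_i$, are $O(d^3\log/\mu_i^{3/2})$ and $O(d^2\log/\mu_i)$, which is harmless once $\mu_i\gtrsim d^2$.

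One correction: the logarithms cannot be hidden in $\lesssim$ by treating $\delta$ as fixed. The factor $M_T\asymp R_T\asymp\sqrt{\alpha^{-1}\log(T/\delta)}$ multiplies $\Phi_1$ and $\Phi_2$, so $(r^*)^2$ carries a factor growing like $\log T$. Both your argument and the paper's give $d^2S/\mu_i$ only up to polylogarithmic factors, as in Corollary \ref{cor:holder-rates}; the paper loses the factor only by treating $M_T$ as a constant.

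The obstacle you flag is a genuine gap, and the paper's proof does not close it either. It reads off only $d^2\sqrt{S}/\mu_i+d^2S/\mu_i$ from \eqref{eq:bound-final} and drops the additive $\log(e/\delta)/\sqrt{\mu_i}$, which dominates $d^2S/\mu_i$ as $\mu_i\to\infty$. So Theorem \ref{thm:final-risk} as stated does not yield the corollary, and your proof is incomplete exactly there.

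You do not need a self-normalized argument to fix it. The term is an artifact of a loose summation in Step 4 of the proof of Lemma \ref{lem:chaining}. There the pieces $C\sqrt{(2+\theta)\mu_i}\,2^{-k}(\sqrt{k}+\sqrt{\log(1/\eta)})$ of \eqref{eq:uk-star} are summed over $k>\underline K$ and bounded by $C\sqrt{(2+\theta)\mu_i}\,(1+\log(1/\eta))$. Proposition \ref{prop:mart-offset} then carries this as $\log(e/\eta)/\sqrt{\mu_i}$. In fact $\sum_{k>\underline K}2^{-k}\sqrt{k}\lesssim\gamma_c\sqrt{\log(2/\gamma_c)}$ and $\sum_{k>\underline K}2^{-k}=\gamma_c$.

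Keep the factor $\gamma_c$ and divide by $n_i\ge(1-\theta)\mu_i$. This contribution becomes $\gamma_c(\sqrt{\log(2/\gamma_c)}+\sqrt{\log(1/\eta)})/\sqrt{\mu_i}$, which for your $\gamma_c\asymp d/\sqrt{\mu_i}$ is $O(d(\sqrt{\log\mu_i}+\sqrt{\log(1/\delta)})/\mu_i)$. The other $\delta$-dependent pieces, $B_1/n_i$ and $B_3/n_i$ from Lemmas \ref{lem:diff} and \ref{lem:final}, are already $O(\gamma_f\sqrt{\log(e/\delta)/\mu_i})$ and $O(\log(e/\delta)/\mu_i)$. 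With this correction your argument gives the $1/\mu_i$ rate, up to the logarithmic factors noted above.
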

Corollary \ref{cor:linear-rates} is included primarily to illustrate how the general theory specializes to the linear setting. While our results are obtained directly under the generic Assumptions stated before Theorem \ref{thm:final-risk}, the linear case often admits more specialized analyses which can yield shaper results under alternative stability assumptions.

\section{Proof of Theorem \ref{thm:final-risk}} \label{sec:proof_thm_final}
We start by presenting the basic inequality, which provides an upper bound for the empirical risk of the estimator.

\begin{lemma}
    Let $\widehat f_i$ be the solution as in \eqref{eq:erm}. Then, it holds that 
    \begin{equation}\label{eq:basic-inequality}
        \sum_{t \in \mathcal{I}_i}\|\widehat f_i(X_t) - f^*_i(X_t)\|_2^2 \leq \sum_{t \in \mathcal{I}_i} \left[ 4\langle \varepsilon_{t+1}, \widehat f_i(X_t) - f^*_i(X_t) \rangle_2 - \|\widehat f_i(X_t) - f^*_i(X_t) \|_2^2\right].
    \end{equation}
\end{lemma}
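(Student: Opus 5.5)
Since $\widehat f_i$ minimizes the empirical objective in \eqref{eq:erm} over $\mathcal{F}$ and $f_i^* \in \mathcal{F}$, the natural route is to compare the objective at $\widehat f_i$ with the objective at $f_i^*$ and then expand the squares. Write $\Delta_t := \widehat f_i(X_t) - f_i^*(X_t)$. For $t \in \mathcal{I}_i$ we have $s(t+1)=i$, so the model \eqref{eq:model} gives $X_{t+1} = f_i^*(X_t) + \varepsilon_{t+1}$. Hence
\begin{equation*}
X_{t+1} - \widehat f_i(X_t) = \varepsilon_{t+1} - \Delta_t, \qquad X_{t+1} - f_i^*(X_t) = \varepsilon_{t+1}.
\end{equation*}

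Optimality of $\widehat f_i$ then yields
\begin{equation*}
\sum_{t\in\mathcal{I}_i}\|\varepsilon_{t+1}-\Delta_t\|_2^2 \le \sum_{t\in\mathcal{I}_i}\|\varepsilon_{t+1}\|_2^2 .
\end{equation*}
Expanding the left side as $\|\varepsilon_{t+1}\|_2^2 - 2\langle \varepsilon_{t+1},\Delta_t\rangle_2 + \|\Delta_t\|_2^2$ and cancelling the noise energy gives the standard basic inequality
\begin{equation*}
\sum_{t\in\mathcal{I}_i}\|\Delta_t\|_2^2 \le 2\sum_{t\in\mathcal{I}_i}\langle \varepsilon_{t+1},\Delta_t\rangle_2 .
\end{equation*}

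To reach the offset form in \eqref{eq:basic-inequality}, I would add $\sum_{t\in\mathcal{I}_i}\|\Delta_t\|_2^2$ to both sides, so that
\begin{equation*}
2\sum_{t\in\mathcal{I}_i}\|\Delta_t\|_2^2 \le \sum_{t\in\mathcal{I}_i}\Big[2\langle \varepsilon_{t+1},\Delta_t\rangle_2 + \|\Delta_t\|_2^2\Big].
\end{equation*}
From here I would rearrange the right side into $4\langle\cdot,\cdot\rangle_2 - \|\Delta_t\|_2^2$. Equivalently, I would write the left side as $2\sum\|\Delta_t\|_2^2 - \sum\|\Delta_t\|_2^2$ and use the basic inequality to bound the first copy by $4\sum\langle\varepsilon_{t+1},\Delta_t\rangle_2$. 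This gives
\begin{equation*}
\sum\|\Delta_t\|_2^2 = 2\sum\|\Delta_t\|_2^2 - \sum\|\Delta_t\|_2^2 \le 4\sum\langle \varepsilon_{t+1},\Delta_t\rangle_2 - \sum\|\Delta_t\|_2^2,
\end{equation*}
which is exactly \eqref{eq:basic-inequality}.

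No step here is genuinely hard. The only points needing care are that the restriction to $\mathcal{I}_i$ is what allows substituting $f_i^*$ for $f^*_{s(t+1)}$, and that $f_i^*\in\mathcal{F}$ is a feasible comparator. It is also worth noting that attainment of the argmin is assumed by the statement of \eqref{eq:erm}; if only an approximate minimizer were available, an additive slack term would appear.
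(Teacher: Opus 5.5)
Your proof is correct and follows essentially the same route as the paper: compare the least-squares objective at $\widehat f_i$ with the feasible comparator $f_i^*$ to get $\sum_{t\in\mathcal{I}_i}\|\Delta_t\|_2^2 \le 2\sum_{t\in\mathcal{I}_i}\langle\varepsilon_{t+1},\Delta_t\rangle_2$, then double this and subtract one copy of $\sum_{t\in\mathcal{I}_i}\|\Delta_t\|_2^2$. The paper reaches the same inequality through an add-and-subtract identity in $X_{t+1}$ before ``multiplying by two and rearranging''; your display obtained by adding $\sum\|\Delta_t\|_2^2$ to both sides is an unnecessary detour, since your ``Equivalently'' chain alone completes the argument.
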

\begin{proof}
By adding and subtracting $X_{t+1}$, note that for any $f_i$ under the model \eqref{eq:model}, we have for a fixed state $i$ and time $t$
\begin{equation}\label{eq:risk-decomp}
\begin{aligned}
    \|f_i(X_t) - f^*_i(X_t)\|_2^2 &= \|X_{t+1} - f^*_i(X_t) - (X_{t+1} - f_i(X_t)) \|_2^2 \\
&=\|X_{t+1} - f^*_i(X_t)\|_2^2 + \|X_{t+1} - f_i(X_t)\|_2^2 \\
&- 2\langle X_{t+1} - f^*_i(X_t), X_{t+1} - f_i(X_t) \rangle_2.
\end{aligned}
\end{equation}
Moreover, we have 
\begin{equation*}
\begin{aligned}
    2\langle X_{t+1} - f^*_i(X_t), X_{t+1} - f_i(X_t) \rangle_2 &= 2\langle X_{t+1} - f^*_i(X_t),X_{t+1} - f^*_i(X_t) + f^*_i(X_t) - f_i(X_t) \rangle_2 \\
    &= 2\|X_{t+1} - f^*_i(X_t) \|_2^2 + 2\langle \varepsilon_{t+1}, f^*_i(X_t) - f_i(X_t) \rangle_2.
\end{aligned}
\end{equation*}
Plugging the identity above into \eqref{eq:risk-decomp}, we obtain
\begin{equation}\label{eq:master-inter}
\begin{aligned}
    \|f_i(X_t) - f^*_i(X_t)\|_2^2 &= \|X_{t+1} - f_i(X_t)\|_2^2 + \|X_{t+1} - f^*_i(X_t)\|_2^2 \\
    &- 2\|X_{t+1} - f^*_i(X_t) \|_2^2- 2\langle \varepsilon_{t+1}, f^*_i(X_t) - f_i(X_t)  \rangle_2.
\end{aligned}
\end{equation}
In particular, the identity above holds for $f_i = \widehat f_i$. Moreover, since $\widehat f_i$ minimises the empirical risk by construction, we have
    \begin{equation*}
      \sum_{t \in \mathcal{I}_i} \|X_{t+1} - \widehat f_i(X_t)\|_2^2 \leq \sum_{t \in \mathcal{I}_i} \|X_{t+1} - f^*_i(X_t)\|_2^2.
    \end{equation*}
Summing over time points in state $i$ and using the upper bound above, we have by multiplying \eqref{eq:master-inter} by two and a re-arrangement of terms
\begin{equation*}
    \sum_{t \in \mathcal{I}_i}  \|\widehat f_i(X_t) - f^*_i(X_t)\|^2_2 \leq \sum_{t \in \mathcal{I}_i}\left[ 4\langle \varepsilon_{t+1},\widehat f_i(X_t) - f^*_i(X_t) \rangle - \|\widehat f_i(X_t) - f^*_i(X_t)\|_2^2\right],
\end{equation*}
as desired. 
\end{proof}

An upper bound on the RHS of \eqref{eq:basic-inequality} is provided in the next proposition, often called the martingale offset bound, which eventually provide rates of convergence. The proof is provided in Section \ref{sec:proof_mart_offset} and involves tail bounds for martingales (in the spirit of de la Pe\~{n}a) and their hitting times, as opposed to the Chernoff method more commonly seen in the literature (cf., \citep{ziemann2022, physicsmixing25}).
\begin{proposition}[Martingale offset bound]\label{prop:mart-offset}
Let $\gamma_{f} := 2^{-\overline K}$, and $\gamma_{c} := 2^{-\underline K}$. Denote the metric entropy as
\begin{equation*}
    J_T(\epsilon) := \log\mathcal{N}\left(\mathcal{F}_i^*, \epsilon, \|.\|_{\infty, \mathcal{B}_T}\right).
\end{equation*}
Suppose that the conditions of Lemma \ref{lem:unif-decomp} are satisfied and Assumption \ref{ass:1} holds. Then, there exists $C_{\rho, B}$ (depending only on $\rho, B$) such that for $\eta \in (0, 1)$, $\theta > 0$, if
\begin{equation*}
    \mu_i \geq \frac{3}{\theta^2}\log\left(\frac{5}{\eta}\right) \quad \text{and} \quad R_T = 10 \sqrt{\frac{1}{\alpha}\log\left(\frac{TC_{\rho, B}}{\eta} \right)}, 
\end{equation*}
then with probability at least $1 - \eta$, it holds that 
\begin{equation}
\begin{aligned}
    &\frac{1}{n_i}\sum_{t \in \mathcal{I}_i}  \|\widehat f_i(X_t) - f^*_i(X_t)\|^2_2 \lesssim \\ &\inf_{\gamma_c > 0, \gamma_f \in [0, \gamma_c]} \left\{\int_{\gamma_f}^{\gamma_c} \left[\sqrt{\frac{J_T(\epsilon)}{\mu_i}} + \frac{J_T(\epsilon)}{\mu_i} \right] d\epsilon + \frac{J_T}{\mu_i} 
    + \sqrt{d}\gamma_f + \gamma_c^2 +\left(\frac{1}{\sqrt{\mu_i}} \right)\log\left(\frac{e}{\eta} \right) \right\}.
\end{aligned}
\end{equation}
\end{proposition}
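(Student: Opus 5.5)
We start from the basic inequality \eqref{eq:basic-inequality}. Dividing by $n_i$ and using that $\widehat f_i - f_i^* \in \mathcal{F}_i^*$, we bound the right-hand side by the offset supremum
\begin{equation*}
\sup_{g \in \mathcal{F}_i^*} \frac{1}{n_i}\sum_{t=0}^{T-1}\Big[4\langle \varepsilon_{t+1}, g(X_t)\rangle_2 - \|g(X_t)\|_2^2\Big]\mathbf{1}\{s(t+1)=i\}.
\end{equation*}
We then work on two good events. The first is $E_1 = \{X_t \in \mathcal{B}_T \ \forall t \le T\}$, which holds with probability at least $1-\eta/5$ by Lemma \ref{lem:traj-ball} under Assumption \ref{ass:1}, with the stated choice of $R_T$. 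The second is $E_2=\{|n_i - \mu_i| \le \theta \mu_i\}$, which by a multiplicative Chernoff bound for the binomial $n_i$ holds with probability at least $1-\eta/5$ once $\mu_i \ge 3\theta^{-2}\log(5/\eta)$. On $E_1\cap E_2$, only the localized norm $\|\cdot\|_{\infty,\mathcal{B}_T}$ matters, and we may replace $1/n_i$ by $1/\mu_i$ up to constants (taking, e.g., $\theta=1/2$).

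Next we run a finite chaining argument in $\|\cdot\|_{\infty,\mathcal{B}_T}$ over the dyadic scales $\gamma_c = 2^{-\underline K},\dots,\gamma_f = 2^{-\overline K}$. For each $g$ we write $g = \pi_{\underline K} g + \sum_{k=\underline K+1}^{\overline K}(\pi_k g - \pi_{k-1} g) + (g - \pi_{\overline K} g)$, where $\pi_k$ projects onto a $2^{-k}$-net of size $e^{J_T(2^{-k})}$. We then treat the three pieces separately.

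For the \emph{fine remainder}, Cauchy--Schwarz gives $|\langle\varepsilon_{t+1}, g-\pi_{\overline K}g\rangle| \le \gamma_f\|\varepsilon_{t+1}\|_2$. The average $\frac{1}{n_i}\sum_{t\in\mathcal{I}_i}\|\varepsilon_{t+1}\|_2$ is $O(\sqrt d)$ with high probability, because the $\varepsilon_{t+1}$ with $t\in\mathcal I_i$ are i.i.d.\ sub-gaussian and independent of the mode sequence. This yields the term $\sqrt d\,\gamma_f$.

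For the \emph{coarse level}, we keep the offset: for fixed $h$ in the $\gamma_c$-net, the process $M_h=\sum_t \langle\varepsilon_{t+1},h(X_t)\rangle\mathbf 1\{s(t+1)=i\}$ is a martingale with respect to $\mathcal{F}_t$. Its conditionally sub-gaussian increments have variance proxy $V_h=\sum_t\|h(X_t)\|^2\mathbf 1\{s(t+1)=i\}$. The de la Pe\~na self-normalized bound, with a Ville maximal inequality applied to the supermartingale $\exp(\lambda M_h - c\lambda^2 V_h)$, gives $4M_h - V_h \lesssim \log(1/\eta')$ uniformly in time. A union bound over $e^{J_T(\gamma_c)}$ elements then gives $J_T(\gamma_c)/\mu_i + \log(e/\eta)/\mu_i$. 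The discrepancy between $\|g\|^2$ and $\|\pi_{\underline K}g\|^2$ costs at most $O(\gamma_c^2)$ plus a cross term, which we absorb by AM-GM into the offset and the chaining increments.

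For the \emph{chain links} $\Delta_k=\pi_k g-\pi_{k-1}g$, we have $\|\Delta_k\|_{\infty,\mathcal B_T}\le 3\cdot 2^{-k}$. The same Ville/de la Pe\~na supermartingale bound with $V\le 9\cdot 4^{-k} n_i$ gives, after a union bound over $e^{2J_T(2^{-k})}$ pairs with confidence $\eta 2^{-(k-\underline K)}$, a deviation of order $2^{-k}\big(\sqrt{J_T(2^{-k})/\mu_i} + J_T(2^{-k})/\mu_i\big)$. The linear term arises from the Bernstein-type regime of the self-normalized bound, since we are not using the offset here. Summing over $k$ and comparing with the integral yields the entropy integral $\int_{\gamma_f}^{\gamma_c}[\sqrt{J_T/\mu_i}+J_T/\mu_i]\,d\epsilon$. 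The confidence terms sum to $\lesssim \gamma_c\log(e/\eta)/\sqrt{\mu_i}$, which produces the $\mu_i^{-1/2}\log(e/\eta)$ term. Finally, we take a union bound over all events with total failure probability at most $\eta$ and take the infimum over $\gamma_c,\gamma_f$.

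The main obstacle is that the number of summands $n_i$ is random and is determined jointly with the martingale. A direct Chernoff bound on $M_h$ would require conditioning on the modes, which are not predictable in a convenient way together with $X_t$. The plan is to avoid this: the indicator $\mathbf 1\{s(t+1)=i\}$ is independent of $\varepsilon_{t+1}$ given $\mathcal{F}_t$, so the increment remains a martingale difference whose conditional mgf is bounded by $\exp(c\lambda^2\|h(X_t)\|^2\mathbf 1\{s(t+1)=i\})$. The supermartingale and Ville argument therefore go through with the random variance proxy $V_h$. On $E_2$, this proxy is converted into $\mu_i$ via $V_h \le \|h\|_{\infty,\mathcal B_T}^2 n_i \le 2\|h\|^2_{\infty,\mathcal{B}_T}\mu_i$. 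Getting the constants right in this decoupling is the delicate part.
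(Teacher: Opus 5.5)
Your proposal is correct and follows the paper's proof essentially step for step. It uses the same four-term split of Lemma~\ref{lem:unif-decomp}: the fine remainder via Cauchy--Schwarz and the i.i.d.\ noise norms, de la Pe\~na/Bernstein bounds with a union bound over chain links, a Ville bound on the exponential supermartingale retaining half the offset at the coarse scale, and the $n_i\gamma_c^2$ coarse-approximation term. It also uses the same localization event of Lemma~\ref{lem:traj-ball} and binomial Chernoff bounds to trade $n_i$ for $\mu_i$.

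The one variation is that you keep the martingale indexed by time, with $\mathbf{1}\{s(t+1)=i\}$ inside the increment. The paper instead re-indexes along the hitting times $\tau_\ell$ and truncates on $\mathcal{B}_T$. Your version works, and it even avoids the maximum over $m\le J-1$ because the horizon $T$ is deterministic. However, the conditional mgf bound $\exp(c\lambda^2\|h(X_t)\|_2^2\mathbf{1}\{s(t+1)=i\})$ must be taken with respect to $\sigma(\mathcal{F}_t, s(t+1))$, not $\mathcal{F}_t$, so that the indicator and hence $V_h$ are predictable.
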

%
%
It will now be useful to introduce additional notation that will be needed for the uniform law of large numbers result (Proposition \ref{prop: ulln-master} below). Define the time-averaged distribution, which we call the transient measure, to be
\begin{equation*}
    \mu_T(A) := \frac{1}{T}\sum_{t=0}^{T-1}\mathbb{P}\left(X_t \in A \right), \qquad \forall A \in \mathcal{B}(\mathbb{R}^d).
\end{equation*}
For a Borel set $A$, let $P_X(x, A) = \mathbb{P}(X_{t+1} \in A \mid X_t = x)$ be the transition kernel of the Markov Chain. For a measurable function $f$, define the quantities
\begin{equation*}
    \lemp(f) := \frac{1}{T}\sum_{t=0}^{T-1}\|f(X_t)\|_2^2 \mathbf{1}\left\{s(t+1) = i \right\},
\end{equation*}
and 
\begin{equation*}
\begin{aligned}
    \ltrans(f)
    := \mathbb{E}\left[\lemp(f) \right] =  p_i \|f\|^2_{L^2(\mu_T )} 
    = p_i \int_{\mathbb{R}^d}\|f(x)\|_2^2 d\mu_T(x). 
\end{aligned}
\end{equation*}
It will also be useful to denote 
\begin{equation*}
    \widehat L_{n_i, i}(f) := \frac{1}{n_i}\sum_{t=0}^{T-1}\|f(X_t)\|_2^2 \mathbf{1}\{s(t+1) = i \}.
\end{equation*}
A lower bound on the LHS of \eqref{eq:basic-inequality} is provided in the next proposition, which connects the empirical risk to the population risk. Its proof is provided in Section \ref{sec:proof_prop_ulln}. 
Before presenting the statement, we will need to define additional quantities.
\begin{itemize}
    \item In order to control the empirical process on unbounded domains to obtain our uniform law of large numbers result, we require a truncation argument. Define the  truncation operator $T_{M_T}$ (which we note is  surjective) by
\begin{equation}\label{eq:trunc-operator}
    T_{M_T}[f](x) := \overline f(x) = f(x) \mathbf{1}\{\|f(x)\|_2 \leq M_T \} + M_T \frac{f(x)}{\|f(x)\|_2}  \mathbf{1}\{\|f(x)\|_2 > M_T\}.
\end{equation}

\item Denote the set 
\begin{equation}\label{eq:Fir_Fbar_r_def}
\mathcal{F}_{i,r}^* = \{f \in \mathcal{F}_i^*: \|f\|_{L^2(\mu_T)} = r\}, \ \  \overline{\mathcal{F}}_r := \set{\text{Image of } \mathcal{F}_{i,r}^* \text{ under } T_{M_T}[\cdot]}
\end{equation}
Note that by construction, every element of the truncated class $\overline{\mathcal{F}}_r$ has norm bounded by $M_T$.
\end{itemize}
\begin{proposition}[Uniform law of large numbers] \label{prop: ulln-master}
Suppose that Assumptions \ref{ass:1} to \ref{ass:4} are satisfied. For $\delta \in (0, 1)$, let $r^*$ be any strictly positive radius that satisfies the critical radius  equation 
\begin{equation*}
   \begin{aligned}
    \frac{p_i (r^*)^2}{2} = \Phi_1(r^*; \delta/4) + \Phi_2(r^*;\delta/3) + \Phi_3(r^*, \delta/4) + I_T(F),
    \end{aligned}
\end{equation*}
where 
\begin{equation*}
\begin{aligned}
    &\Phi_1(r, \delta) := 4M_T r \sqrt{\frac{Sp_i}{T}}\left\{\sqrt{\log \mathcal{N}(\overline{\mathcal{F}}_r, r, \|.\|_{\infty, \mathcal{B}_T})} + \sqrt{\log(1/\delta)} \right\}, \\
    &\Phi_2(r, \delta) := 10M_T \sqrt{\frac{Sp_i}{T}}\left\{\int_{r\gamma_f}^{r/2} \sqrt{\log\mathcal{N}(\overline{\mathcal{F}}_r, \epsilon, \|.\|_{\infty, \mathcal{B}_T})} d\epsilon + r + r\sqrt{\log(1/\delta)}  \right\}, \\
    &\Phi_3(r, \delta) := \sqrt{2}M_T r\gamma_f \sqrt{\frac{\log(1/\delta)}{T}},
\end{aligned}
\end{equation*}
and 
\begin{equation*}
   I_T(F) := \frac{p_i}{T}\sum_{t=0}^{T-1}\mathbb{E}\left[F^2(X_t)\mathbf{1}\{\|X_t\|_2 > R_T\}\right],
\end{equation*}
where we recall that $\gamma_{f} := 2^{-\overline K}$, and $\gamma_{c} := 2^{-\underline K}$ from Proposition \ref{prop:mart-offset}. Then for any $r \geq r^*$, with probability at least $1 - \delta$, for every $f \in \mathcal{F}_i^*$ such that $\|f\|_{L^2(\mu_T)}^2 \geq r^2$, it holds that
\begin{equation*}
       \lemp(f)  > \frac{\ltrans(f)}{2}.
\end{equation*}
\end{proposition}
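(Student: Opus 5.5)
We reduce to a single sphere of radius $r$ using star-shapedness, and on that sphere we compare the empirical and population risks through truncation, the $S$-persistence inequality of Theorem \ref{thm:S-persis}, and a chaining argument.

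\emph{Reduction to a sphere.} By Assumption \ref{ass:2}, if $f \in \mathcal{F}_i^*$ has $\|f\|_{L^2(\mu_T)} \geq r$, then $g = r f/\|f\|_{L^2(\mu_T)} \in \mathcal{F}_{i,r}^*$. Both $\lemp$ and $\ltrans$ are $2$-homogeneous. Hence it suffices to show that, with probability at least $1-\delta$,
\[
\lemp(g) > \ltrans(g)/2 = p_i r^2/2 \quad \text{for all } g \in \mathcal{F}_{i,r}^*.
\]
This is the same as showing $\sup_{g\in\mathcal{F}_{i,r}^*}\{\ltrans(g)-\lemp(g)\} < p_i r^2/2$. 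Because the critical-radius terms scale favourably (the $\Phi_j$ grow at most linearly in $r$, up to entropy factors that do not increase with $r$ by star-shapedness), it is enough to work at the single radius $r \geq r^*$.

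\emph{Truncation.} For $g \in \mathcal{F}_{i,r}^*$ let $\overline g = T_{M_T}[g]$. On $\mathcal{B}_T$ we have $\|g\|\le M_T$ by Assumption \ref{ass:4}, so $g=\overline g$ there. Since $\|\overline g\|\le\|g\|$ pointwise, $\lemp(g)\ge\lemp(\overline g)$. For the population side,
\[
\ltrans(g)-\ltrans(\overline g)\le \frac{p_i}{T}\sum_t \mathbb{E}\bigl[F^2(X_t)\mathbf{1}\{\|X_t\|>R_T\}\bigr]=I_T(F),
\]
using the envelope from Assumption \ref{ass:3}. It therefore suffices to bound $\sup_{\overline g\in\overline{\mathcal{F}}_r}\{\ltrans(\overline g)-\lemp(\overline g)\}$ by $\Phi_1+\Phi_2+\Phi_3$ with probability at least $1-\delta$.

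\emph{Lower tail of a single function.} For a fixed bounded $h$, apply Theorem \ref{thm:S-persis} to the chain $Z_t=(X_t,s(t+1))$ with function $h(x)\mathbf{1}\{s=i\}$. Use $\mathbb{E}\|h\|^4\mathbf{1}\le M_T^2\,\mathbb{E}\|h\|^2\mathbf{1}$, which gives a total of at most $M_T^2 T p_i r^2$, and optimise $\xi$. This yields a one-sided sub-gaussian lower tail:
\[
\ltrans(h)-\lemp(h)\lesssim M_T r\sqrt{Sp_i\log(1/\delta)/T}.
\]

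\emph{Chaining.} Take a coarse $r$-net of $\overline{\mathcal{F}}_r$ in $\|\cdot\|_{\infty,\mathcal{B}_T}$ and union-bound the single-function tail over it; this produces $\Phi_1$. Next run dyadic chaining from scale $r/2$ down to $r\gamma_f$. The increments $\|h_k\|^2-\|h_{k-1}\|^2=\langle h_k-h_{k-1},h_k+h_{k-1}\rangle$ have sup-norm at most $2M_T\epsilon_k$. Applying Theorem \ref{thm:S-persis} to these increments with a union bound at each level, and allocating $\delta$ geometrically across levels, gives $\Phi_2$ (the entropy integral plus the $r$ and $r\sqrt{\log(1/\delta)}$ terms). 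The residual at the finest scale has sup difference at most $2M_T r\gamma_f$; a Hoeffding/Bernstein-type control of its centred average gives $\Phi_3$.

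\emph{Main obstacle.} The $S$-persistence inequality concerns squared norms of a single function, whereas chain increments are signed. I would handle this by writing each increment as a difference of squares, or by applying the inequality to $h_k-h_{k-1}$ together with the bound $\|h_k\|^2-\|h_{k-1}\|^2\le 2M_T\|h_k-h_{k-1}\|$.

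\emph{Conclusion.} The sup-norms on $\mathcal{B}_T$ control the chain only on the event that the trajectory stays inside $\mathcal{B}_T$. That event has probability at least $1-\delta/\text{const}$ by Lemma \ref{lem:traj-ball}, which fixes the split $\delta/4,\delta/3,\delta/4$. Summing the contributions, the supremum is at most $\Phi_1+\Phi_2+\Phi_3+I_T(F)$. At $r \geq r^*$ this is at most $p_i r^2/2$ by the critical radius equation, which gives the claim.
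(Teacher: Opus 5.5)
Your outline follows the paper's own proof: rescaling onto the sphere by star-shapedness, truncation with $\ltrans(g)-\ltrans(\overline g)\le I_T(F)$, an $r$-net controlled by the lower tail of Theorem~\ref{thm:S-persis} (giving $\Phi_1$), dyadic chaining ($\Phi_2$), a finest-scale residual ($\Phi_3$), and localization on $\mathcal{S}_T$. The reduction, truncation and anchor steps are sound. Your observation $\lemp(g)\ge\lemp(\overline g)$ is cleaner than the paper's use of $\mathcal{S}_T$ there, and you can drop the monotonicity-in-$r$ remark by rescaling directly onto the sphere of radius $r^*$.

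\textbf{The increment step.} The gap is exactly the one you flag, and neither of your remedies closes it. Put $\phi=\|\pi_k\overline f\|_2^2-\|\pi_{k-1}\overline f\|_2^2$, $D_t=\mathbf{1}\{s(t+1)=i\}\phi(X_t)$ and $h_t=\mathbf{1}\{s(t+1)=i\}\|(\pi_k\overline f-\pi_{k-1}\overline f)(X_t)\|_2$. You need an upper bound on $\sum_t(\mathbb{E}D_t-D_t)$.
\begin{itemize}
\item From $|D_t|\le 2M_Th_t$ you only get $\mathbb{E}D_t-D_t\le 2M_T(\mathbb{E}h_t+h_t)$, not $2M_T(\mathbb{E}h_t-h_t)$. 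If $D_t\approx-2M_Th_t$, the relevant deviation is the \emph{upper} tail of $\sum_th_t$, whereas Theorem~\ref{thm:S-persis} only controls lower tails of nonnegative sums.
\item Splitting into two squares fails for the same reason: it needs the upper tail of $\sum_t\|\pi_{k-1}\overline f(X_t)\|_2^2\mathbf{1}\{s(t+1)=i\}$. It would also give a variance proxy of order $M_T^2p_ir^2$ rather than $M_T^2p_i\epsilon_k^2$, which destroys the chaining gain.
\end{itemize}
The paper's proof of Lemma~\ref{lem:telescope-ulln} takes your second route. Its Markov-inequality step uses $\gap(\pi_k(\overline f))-\gap(\pi_{k-1}(\overline f))\le\frac{2M_T}{T}\sum_t(\mathbb{E}h_t-h_t)$, which is exactly this invalid inequality, so following it does not help.

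A fix within the same toolkit is to shift by a constant.
\begin{itemize}
\item Take $c_k:=6M_T\epsilon_k\ge\sup_{\mathcal{B}_T}|\phi|$ and localize $\phi$ to $\mathcal{B}_T$, which costs only an off-ball term.
\item Then $Y_t:=\mathbf{1}\{s(t+1)=i\}(\phi(X_t)+c_k)\in[0,2c_k]$ satisfies $\mathbb{E}Y_t^2\le 4c_k^2p_i$, and $\mathbb{E}D_t-D_t=(\mathbb{E}Y_t-Y_t)+c_k(\mathbf{1}\{s(t+1)=i\}-p_i)$.
\item Theorem~\ref{thm:S-persis} handles the first sum at the correct scale $M_T\epsilon_k\sqrt{Sp_i\log(1/\delta)/T}$.
\item The second sum is a single binomial upper tail, common to all links. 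It contributes $M_T\epsilon_k(\sqrt{p_i\log(1/\delta)/T}+\log(1/\delta)/T)$.
\end{itemize}

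\textbf{The finest-scale step.} This step has a related flaw. $\sup_{\overline f}[\gap(\overline f)-\gap(\pi_{\overline K}(\overline f))]$ is a supremum over the whole class, so there is no fixed centred average to which Hoeffding applies. The only uniform information is $\bigl|\|\overline f\|_2^2-\|\pi_{\overline K}\overline f\|_2^2\bigr|\le 2M_Tr\gamma_f$ on $\mathcal{B}_T$. Up to an off-ball term, this gives $\mathbb{E}D_t-D_t\le 2M_Tr\gamma_f(p_i+\mathbf{1}\{s(t+1)=i\})$ for the analogous $D_t$. Hence the residual is of order $M_Tr\gamma_f(p_i+n_i/T)\lesssim p_iM_Tr\gamma_f$ with high probability, with no $T^{-1/2}$ factor.

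The paper reaches $\Phi_3$ in Lemma~\ref{lem:approx-ulln} only via the reversed bound $-U_t\le-\epsilon_{\overline K}\mathbf{1}\{s(t+1)=i\}$ (the true direction is $\ge$), on top of the sign slip above. So $\Phi_3$ as stated is not justified by your argument either. You should replace it by a term of order $p_iM_Tr\gamma_f$. After dividing by $p_i$, this is the $M_Tr\gamma_f$ term that actually appears in the rough-regime computation of Corollary~\ref{cor:holder-rates}.
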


\begin{proof}[Proof of Theorem \ref{thm:final-risk}]
We distinguish between two possible cases. Fix a function $g \in \mathcal{F}^*_{i,r}$.

In the first case, we have $\|g\|^2_{L^2(\mu_T)} \geq r^2$. By Proposition \ref{prop: ulln-master}, with probability at least $1 - \delta/3$, it holds that
\begin{equation*}
    \begin{aligned}
        \ltrans(g) \leq 2\lemp(g).
    \end{aligned}
\end{equation*}
In the second case, we have
\begin{equation*}
    \lemp(g) = p_i \|g\|_{L^2(\mu_T)} \leq p_i r^2.
\end{equation*}
Thus, we have
\begin{equation*}
    \ltrans(g) \leq \max\{2\lemp(g), p_i r^2\} \leq 2\lemp(g) + p_i r^2,
\end{equation*}
where the bound is uniform over $g \in \mathcal{F}_i^*$.

Note that the empirical risk can be written as
\begin{equation*}
    \begin{aligned}
        \lemp(g) &= \frac{n_i}{T}\frac{1}{n_i}\sum_{t=0}^{T-1}\|g(X_t)\|_2^2 \mathbf{1}\{s(t+1) = i \}.
    \end{aligned}
\end{equation*}
By the Chernoff bound, we have 
\begin{equation*}
    \mathbb{P}(n_i / T > 2p_i) \leq 
    \exp(-\mu_i/3).
\end{equation*}
Thus, by setting the RHS to $\delta/3$, the event $\{n_i/T \leq 2p_i \}$ occurs as soon as $\mu_i \geq 3\log(6/\delta)$. On the event $\{n_i/T \leq 2p_i\}$, we have for every $g \in \mathcal{F}_i^*$
\begin{equation*}
    \lemp(g) \leq 2p_i \widehat L_{n_i, i}(g).
\end{equation*}

Next, by applying Proposition \ref{prop:mart-offset} with $g = \widehat f_i - f_i^*$, we have by the union bound, it holds with probability at least $1 - 2\delta/3$
\begin{equation*}
    \begin{aligned}
        &\lemp(\widehat f_i - f_i^*) = \frac{1}{T}\sum_{t \in \mathcal{I}_i} \left\|\widehat f_i(X_t) - f_i^*(X_t)  \right\|_2^2 \leq 2p_i \widehat L_{n_i, i}(g) \\
        &\lesssim p_i \inf_{\gamma_c > 0, \gamma_f \in [0, \gamma_c]} \left\{\int_{\gamma_f}^{\gamma_c} \left[\sqrt{\frac{J_T(\epsilon)}{\mu_i}} + \frac{J_T(\epsilon)}{\mu_i} \right] d\epsilon + \frac{J_T(\gamma_c)}{\mu_i} 
    + \sqrt{d}\gamma_f + \gamma_c^2 +\left( \frac{1}{\sqrt{\mu_i}} \right)\log\left(\frac{e}{\delta} \right) \right\}.
    \end{aligned}
\end{equation*}
Gathering facts, we have with probability at least $1 - \delta$
\begin{equation*}
    \begin{aligned}
    &R_{T,i}^w(\widehat f_i) \lesssim  p_i r^2 +\\
    &p_i \inf_{\gamma_c > 0, \gamma_f \in [0, \gamma_c]} \left\{\int_{\gamma_f}^{\gamma_c} \left[\sqrt{\frac{J_T(\epsilon)}{\mu_i}} + \frac{J_T(\epsilon)}{\mu_i} \right] d\epsilon + \frac{J_T(\gamma_c)}{\mu_i} 
    + \sqrt{d}\gamma_f + \gamma_c^2 +\left( \frac{1}{\sqrt{\mu_i}} \right)\log\left(\frac{e}{\delta} \right) \right\},
    \end{aligned}
\end{equation*}
which implies that 
\begin{equation*}
\begin{aligned}
    &R_{T,i}^{cond}(\widehat f_i) = \frac{1}{p_i}R_{T,i}^w(\widehat f_i) 
    \lesssim 
    \\ &\inf_{\gamma_c > 0, \gamma_f \in [0, \gamma_c]} \left\{\int_{\gamma_f}^{\gamma_c} \left[\sqrt{\frac{J_T(\epsilon)}{\mu_i}} + \frac{J_T(\epsilon)}{\mu_i} \right] d\epsilon + \frac{J_T(\gamma_c)}{\mu_i} 
    + \sqrt{d}\gamma_f + \gamma_c^2 +\left(\frac{1}{\sqrt{\mu_i}} \right)\log\left(\frac{e}{\delta} \right) \right\} + r^2.
\end{aligned}
\end{equation*}
The proof is now complete.
\end{proof}

\section{Proof of Proposition \ref{prop:mart-offset}}\label{sec:proof_mart_offset}
We first decompose the RHS of the basic inequality in \eqref{eq:basic-inequality} into four different terms by employing a chaining argument, before proceeding to bound each of them separately.

\begin{lemma}\label{lem:unif-decomp}
For each integer $k \in [\underline K, \overline K]$, let $\pi_k(f):\mathcal{F}_i^* \rightarrow \mathcal{F}_i^*$ denote a mapping onto a $2^{-k}$ cover of $\mathcal{F}$ with respect to the localized norm $\|f\|_{\infty, \mathcal{B}_T} := \sup_{x \in \mathcal{B}_T} \|f(x)\|_2$ such that
\begin{equation*}
 \|\pi_k(f) - f \|_{\infty, \mathcal{B}_T}  = \sup_{x \in \mathcal{B}_T} \|\pi_k(f)(x) - f(x)  \|_2 \leq 2^{-k}, \quad \forall f \in \mathcal{F}.
\end{equation*}
Then it holds that 
\begin{equation}\label{eq:bound-decomp}
\begin{aligned}
    &\sup_{f \in \mathcal{F}_i^*} \left\{\sum_{t \in \mathcal{I}_i}4\langle \varepsilon_{t+1}, f(X_t)\rangle_2 - \|f(X_t)\|_2^2 \right\} \leq \sup_{f \in \mathcal{F}_i^*}\left\{\sum_{t \in \mathcal{I}_i}\left[4 \langle \varepsilon_{t+1}, f(X_t) - \pi_{\overline K}(f)(X_t)\rangle_2 \right] \right\} \\&+ \sup_{f \in \mathcal{F}_i^*}\left\{ \sum_{k = \underline K +1}^{\overline K} \sum_{t \in \mathcal{I}_i}\left[4\langle \varepsilon_{t+1}, \pi_k(f)(X_t) - \pi_{k-1}(f)(X_t)\rangle_2\right] \right\}  
     \\&+ \sup_{f \in \mathcal{F}_i^*}\left\{ \sum_{t \in \mathcal{I}_i} \left[4\langle \varepsilon_{t+1}, \pi_{\underline K}(f)(X_t) \rangle_2 - \frac{\|\pi_{\underline K}(f)(X_t) \|_2^2}{2} \right]\right\} + \sup_{f \in \mathcal{F}_i^*} \sum_{t \in \mathcal{I}_i}\left[\left\|\pi_{\underline K}(f)(X_t) - f(X_t) \right\|_2^2 \right] \\
    &=: A_1 + A_2 + A_3 + A_4.
\end{aligned}
\end{equation}
\end{lemma}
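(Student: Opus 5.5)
Fix $f \in \mathcal{F}_i^*$ and work pointwise in $t \in \mathcal{I}_i$. We decompose both the linear and the quadratic part of the summand, then take suprema term by term, using that the supremum of a sum is at most the sum of the suprema.

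\emph{Linear part.} Telescope
\begin{equation*}
f = \bigl(f - \pi_{\overline K}(f)\bigr) + \sum_{k=\underline K+1}^{\overline K}\bigl(\pi_k(f) - \pi_{k-1}(f)\bigr) + \pi_{\underline K}(f).
\end{equation*}
This gives
\begin{equation*}
4\langle \varepsilon_{t+1}, f(X_t)\rangle_2 = 4\langle \varepsilon_{t+1}, f(X_t)-\pi_{\overline K}(f)(X_t)\rangle_2 + \sum_{k=\underline K+1}^{\overline K}4\langle \varepsilon_{t+1}, \pi_k(f)(X_t)-\pi_{k-1}(f)(X_t)\rangle_2 + 4\langle \varepsilon_{t+1}, \pi_{\underline K}(f)(X_t)\rangle_2 .
\end{equation*}
This is an exact identity and needs no property of the covers beyond well-definedness of the maps $\pi_k$.

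\emph{Quadratic part.} We need $-\|f(X_t)\|_2^2 \le -\tfrac12\|\pi_{\underline K}(f)(X_t)\|_2^2 + \|\pi_{\underline K}(f)(X_t) - f(X_t)\|_2^2$. Write $a = \pi_{\underline K}(f)(X_t)$ and $b = f(X_t)$. By $\|a\|_2^2 \le 2\|b\|_2^2 + 2\|a-b\|_2^2$ (from $\|u+v\|^2 \le 2\|u\|^2+2\|v\|^2$), we get $\tfrac12\|a\|_2^2 \le \|b\|_2^2 + \|a-b\|_2^2$. Rearranging gives $-\|b\|_2^2 \le -\tfrac12\|a\|_2^2 + \|a-b\|_2^2$, which is exactly what is needed.

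\emph{Assembly.} Summing over $t \in \mathcal{I}_i$, the summand $4\langle\varepsilon_{t+1},f(X_t)\rangle_2 - \|f(X_t)\|_2^2$ is bounded by the sum of four pieces:
\begin{itemize}
\item the fine-scale remainder;
\item the chain increments;
\item the coarse term $4\langle \varepsilon_{t+1}, \pi_{\underline K}(f)(X_t)\rangle_2 - \tfrac12\|\pi_{\underline K}(f)(X_t)\|_2^2$;
\item the approximation term $\|\pi_{\underline K}(f)(X_t)-f(X_t)\|_2^2$.
\end{itemize}
Taking $\sup_{f}$ on the left and bounding it by the sum of the four individual suprema yields $A_1+A_2+A_3+A_4$.

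There is no real obstacle; the only point requiring care is choosing the constant $\tfrac12$ in the offset so that the elementary inequality closes. The cover-radius property $\|\pi_k(f)-f\|_{\infty,\mathcal{B}_T}\le 2^{-k}$ is not used here; it is needed only later, when bounding $A_1$, $A_2$ and $A_4$.
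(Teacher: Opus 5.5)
Your proof is correct and follows the paper's argument: the same telescoping identity for the linear term $4\langle\varepsilon_{t+1},f(X_t)\rangle_2$, the same elementary bound $\tfrac12\|a\|_2^2-\|b\|_2^2\le\|a-b\|_2^2$ with $a=\pi_{\underline K}(f)(X_t)$ and $b=f(X_t)$ for the offset, and then bounding the supremum of the sum by the sum of the suprema. Your remark that the cover-radius property $\|\pi_k(f)-f\|_{\infty,\mathcal{B}_T}\le 2^{-k}$ is not used at this stage is also accurate.
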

\begin{proof}
Let 
\begin{equation}\label{eq:mart-offset-ni}
    M_{n_i}(f) := \sum_{t \in \mathcal{I}_i} 4\langle \varepsilon_{t+1}, f(X_t)\rangle_2,
\end{equation}
and denote $S_{n_i}(f) := \sum_{t \in \mathcal{I}_i} \|f(X_t)\|_2^2$.
Note that $M_{n_i}(f)$ can be written as a telescopic sum of the form 
\begin{equation*}
M_{n_i}(f) = M_{n_i}(f) + M_{n_i}(\pi_{\underline K}(f)) + \sum_{k=\underline K+1}^{\overline K}\left[M_{n_i}(\pi_k(f)) - M_{n_i}(\pi_{k-1}(f)) \right] - M_{n_i}(\pi_{\overline K}(f)).
\end{equation*}
Using the telescopic sum above, together with adding and subtracting half of $S_{n_i}(\pi_{\underline K}(f))$, we have
\begin{equation*}
\begin{aligned}
    M_{n_i}(f) - S_{n_i}(f) &= M_{n_i}(f) - M_{n_i}(\pi_{\overline K}(f)) + \sum_{k=\underline K+1}^{\overline K}\left[M_{n_i}(\pi_k(f)) - M_{n_i}(\pi_{k-1}(f)) \right] \\ &+\left[M_{n_i}(\pi_{\underline K}(f)) - \frac{S_{n_i}(\pi_{\underline K}(f)}{2} \right] + \left[\frac{S_{n_i}(\pi_{\underline K}(f))}{2} - S_{n_i}(f) \right].
\end{aligned}
\end{equation*}
Note that by the triangle inequality, we have 
\begin{equation*}
\begin{aligned}
    \left[\frac{S_{n_i}(\pi_{\underline K}(f))}{2} - S_{n_i}(f) \right] &= \frac{1}{2}\sum_{t \in \mathcal{I}_i }\left[\|\pi_{\underline K}(f)(X_t) - f(X_t) + f(X_t)\|_2^2 - 2\|f(X_t)\|_2^2 \right] \\
    &\leq \sum_{t \in \mathcal{I}_i}\left[\left\|\pi_{\underline K}(f)(X_t) - f(X_t) \right\|_2^2 \right].
\end{aligned}
\end{equation*}
This implies that 
\begin{equation*}
\begin{aligned}
    M_{n_i}(f) - S_{n_i}(f) &\leq M_{n_i}(f) - M_{n_i}(\pi_{\overline K}(f)) + \sum_{k=\underline K+1}^{\overline K}\left[M_{n_i}(\pi_k(f)) - M_{n_i}(\pi_{k-1}(f)) \right] \\ &+\left[M_{n_i}(\pi_{\underline K}(f)) - \frac{S_{n_i}(\pi_{\underline K}(f)}{2} \right] + \sum_{t \in \mathcal{I}_i}\left[\left\|\pi_{\underline K}(f)(X_t) - f(X_t) \right\|_2^2 \right].
\end{aligned}
\end{equation*}
Taking supremums on both sides of the inequality above, and noting that $M_{n_i}(f)$ is linear in $f$ concludes the proof.    
\end{proof}

We now bound the individual terms on the RHS of Lemma \ref{lem:unif-decomp}, these are stated in Lemmas \ref{lem:diff}, \ref{lem:chaining},  \ref{lem:final} and \ref{lem:coarse_martoffset} below. Their proofs are provided in Sections \ref{subsec:proof_diff_martoffset}--\ref{subsec:proof_coarse_martoffset}.
\begin{lemma}[Fine scale bound]\label{lem:diff}
 Suppose the conditions of Lemma \ref{lem:unif-decomp} are satisfied. Moreover, let $\overline K$ be an integer. Denote $\mu_i = Tp_i$ to be the expected number of times under state $i$. Then there exists a constant $C_{\rho, B}$ such that if 
\begin{equation*}
    \mu_i \geq \frac{2(1 +\theta/3)}{\theta^2}\log\left(\frac{3}{\eta} \right), \quad \text{and} \quad  R_T =3 \sqrt{\frac{1}{\alpha}\log\left(\frac{TC_{\rho, B}}{\eta} \right)},
\end{equation*} 
for $\eta \in (0, 1)$ and $\theta > 0$,  then it holds with probability at least $1 - \eta$ that 
\begin{equation*}
     A_1 \leq 2^{2-\overline K}\left[(2+\theta)\mu_i\sqrt{d} + C\kappa\sqrt{(2+\theta)\mu_i  \log\left(\frac{3}{\eta} \right)}  \right] =: B_1(\eta).
\end{equation*}
\end{lemma}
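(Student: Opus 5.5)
We bound $A_1$ pointwise with Cauchy--Schwarz, reducing it to the sum of noise norms over $\mathcal{I}_i$. The plan has three steps: localize the trajectory, control the number of mode-$i$ visits, and concentrate the noise norms.

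\emph{Step 1 (localization).} Work on the event $\mathcal{E}_1 := \{X_t \in \mathcal{B}_T \text{ for all } 0 \le t \le T-1\}$. By Lemma \ref{lem:traj-ball} (which uses Assumption \ref{ass:1}), the stated choice $R_T = 3\sqrt{\alpha^{-1}\log(TC_{\rho,B}/\eta)}$ gives $\mathbb{P}(\mathcal{E}_1^c) \le \eta/3$. The argument is Markov's inequality on $\exp(\alpha\|X_t\|_2^2)$, whose expectation is bounded uniformly in $t$ by iterating the drift condition, followed by a union bound over $t$. On $\mathcal{E}_1$, the cover property gives $\|f(X_t) - \pi_{\overline K}(f)(X_t)\|_2 \le 2^{-\overline K}$ for every $t$ and every $f$. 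Cauchy--Schwarz then yields
\[
A_1 \le 4\cdot 2^{-\overline K}\sum_{t\in\mathcal{I}_i}\|\varepsilon_{t+1}\|_2 ,
\]
and this bound is uniform in $f$, so the supremum causes no difficulty.

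\emph{Step 2 (counting).} Since $n_i \sim \mathrm{Bin}(T,p_i)$, Bernstein's inequality gives $\mathbb{P}(n_i > (1+\theta)\mu_i) \le \exp\bigl(-\theta^2\mu_i/(2(1+\theta/3))\bigr)$. Under the assumed lower bound on $\mu_i$ this is at most $\eta/3$. On the complementary event $\mathcal{E}_2$ we have $n_i \le (2+\theta)\mu_i$, with slack to spare.

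\emph{Step 3 (noise sum).} The modes $s(t+1)$ are independent of $\varepsilon_{t+1}$. Conditioning on the entire mode sequence, the variables $\{\|\varepsilon_{t+1}\|_2\}_{t\in\mathcal{I}_i}$ are $n_i$ i.i.d.\ copies of $\|\varepsilon\|_2$. Each has mean at most $\sqrt{\mathbb{E}\|\varepsilon\|_2^2} = \sqrt d$ and, being a Lipschitz function of a sub-gaussian vector, is sub-gaussian with norm $\lesssim \kappa$, the sub-gaussian constant of $\varepsilon$. Hoeffding's inequality for sub-gaussian sums then gives, with conditional probability at least $1-\eta/3$,
\[
\sum_{t\in\mathcal{I}_i}\|\varepsilon_{t+1}\|_2 \le n_i\sqrt d + C\kappa\sqrt{n_i\log(3/\eta)} .
\]
Integrating out the conditioning, intersecting with $\mathcal{E}_2$, and substituting $n_i \le (2+\theta)\mu_i$ gives exactly $B_1(\eta)/(4\cdot 2^{-\overline K})$. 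A union bound over the three events completes the argument.

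The main subtlety is Step 3. Applying concentration over a random index set requires the modes to be independent of the noise, which the i.i.d.\ structure of the model provides. A second caveat concerns the sub-gaussian tail of $\|\varepsilon\|_2 - \mathbb{E}\|\varepsilon\|_2$ with a constant independent of $d$. This holds when $\varepsilon$ has independent sub-gaussian coordinates, by concentration of the norm (Vershynin). If one prefers to avoid that result, one can instead center at $\mathbb{E}\|\varepsilon\|_2$ with a $\sqrt d$-dependent $\kappa$, and the constant $C\kappa$ absorbs the difference.
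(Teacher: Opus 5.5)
Your proposal is correct and follows essentially the same route as the paper: Cauchy--Schwarz on the localization event $\mathcal{S}_T$, a Bernstein/Chernoff bound on $n_i$, and Hoeffding's inequality for the sub-gaussian variables $\|\varepsilon\|_2-\mathbb{E}\|\varepsilon\|_2$, with a budget of $\eta/3$ for each piece. The only difference is minor: you condition on the mode sequence and apply Hoeffding with the random count $n_i$, whereas the paper first invokes the distributional identity of Claim~\ref{cl:dist-eq-sum} and then dominates the sum by one over the deterministic count $J=\lfloor(1+\theta)\mu_i\rfloor+1$ (which is why the paper needs $\mu_i\geq 1$ to get $J\leq(2+\theta)\mu_i$); both routes rely on the same implicit independence between modes and noise.
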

\begin{lemma}[Chaining bound]\label{lem:chaining}
    Suppose the conditions of Lemma \ref{lem:unif-decomp} hold true and Assumption \ref{ass:1} holds. For a set $A$, denote its cardinality by $\# A$. Let 
    \begin{equation}
        \Pi_k := \left\{(\pi_k(f), \pi_{k-1}(f)): f \in \mathcal{F}_i^* \right\}
    \end{equation}
    denote the set of links at scale $k$, where $\# \Pi_k \leq \mathcal{N}(\mathcal{F}_i^*, 2^{-k}, \|.\|_{\infty, \mathcal{B}_T}) \times \mathcal{N}(\mathcal{F}_i^*, 2^{-(k-1)}, \|.\|_{\infty, \mathcal{B}_T}) \leq \mathcal{N}^2(\mathcal{F}_i^*, 2^{-k}, \|.\|_{\infty, \mathcal{B}_T})$. Then there exists a constant $C_{\rho, B}$ such that if
    \begin{equation*}
         R_T = 3 \sqrt{\frac{1}{\alpha}\log\left(\frac{TC_{\rho, B}}{\eta} \right)} \quad \text{and} \quad  \mu_i \geq \log\left(\frac{3}{\eta} \right)\frac{2(1 + \theta/3)}{\theta^2}.
    \end{equation*}
 for $\eta \in (0, 1)$ and $\theta > 0$, then it holds with probability at least $1 - \eta$ that
\begin{equation*}
    \begin{aligned}
        A_2 &\leq C\sqrt{(2+\theta)\mu_i}\left\{\int_{2^{-(\overline K + 1)}}^{2^{-(\underline K + 1)}}\sqrt{\log\mathcal{N}(\mathcal{F}_i^*, \epsilon, \|.\|_{\infty, \mathcal{B}_T})} d\epsilon + 1 + \log\left(\frac{1}{\eta} \right) \right\}
       \\
       &+ C\left\{\int_{2^{-(\overline K + 1)}}^{2^{-(\underline K + 1)}} \log\mathcal{N}(\mathcal{F}_i^*, \epsilon, \|.\|_{\infty, \mathcal{B}_T}) d\epsilon + 1 + \log\left(\frac{1}{\eta} \right)\right\} \\
       &=: B_2(\eta)
    \end{aligned}
\end{equation*}
\end{lemma}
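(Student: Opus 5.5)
We bound $A_2$ by a union bound over scales $k\in\{\underline K+1,\dots,\overline K\}$ and over links $(g,h)\in\Pi_k$. For each link the increment $M_{n_i}(g)-M_{n_i}(h)$ is a martingale. We control it with a self-normalized (de la Pe\~na type) tail bound. Its quadratic variation is handled on two good events: the trajectory stays in $\mathcal{B}_T$, and the mode count $n_i$ is not too large.

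\textbf{Step 1 (good events).} Let $E_1=\{X_t\in\mathcal{B}_T\ \forall t\le T\}$. By Lemma \ref{lem:traj-ball} and Assumption \ref{ass:1}, the choice $R_T=3\sqrt{\alpha^{-1}\log(TC_{\rho,B}/\eta)}$ gives $\mathbb{P}(E_1^c)\le\eta/3$. Let $E_2=\{n_i\le(1+\theta)\mu_i\}$. Bernstein's inequality for the binomial $n_i$ gives $\mathbb{P}(E_2^c)\le\exp(-\theta^2\mu_i/(2(1+\theta/3)))\le\eta/3$ under the stated lower bound on $\mu_i$.

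On $E_1$, any link satisfies $\|g-h\|_{\infty,\mathcal{B}_T}\le 2^{-k}+2^{-(k-1)}=3\cdot2^{-k}$. Hence $\|g(X_t)-h(X_t)\|_2\le 3\cdot 2^{-k}$ for all $t$.

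\textbf{Step 2 (single link).} Write $D_t=\mathbf{1}\{s(t+1)=i\}\,4\langle\varepsilon_{t+1},(g-h)(X_t)\rangle_2$. Since $\varepsilon_{t+1}$ and $s(t+1)$ are independent of $\mathcal{F}_t$, $D_t$ is a martingale difference sequence. It is conditionally sub-gaussian with variance proxy $v_t\lesssim\kappa^2\mathbf{1}\{s(t+1)=i\}\|(g-h)(X_t)\|_2^2$. Consequently $\exp(\lambda\sum D_t-\lambda^2\sum v_t/2)$ is a supermartingale.

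To make the localization legitimate, I would work with the truncated increment $(g-h)$ composed with the projection onto $\mathcal{B}_T$, or equivalently multiply by $\mathbf{1}\{X_t\in\mathcal{B}_T\}$, which is $\mathcal{F}_t$-measurable. This modified sum agrees with the original on $E_1$, and its predictable variance is bounded by $V:=C\kappa^2 9\cdot 4^{-k} n_i\le C\kappa^24^{-k}(1+\theta)\mu_i$ on $E_2$.

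Rather than using a plain Chernoff bound, I would apply de la Pe\~na's inequality $\mathbb{P}(M\ge x,\ \langle M\rangle\le V)\le e^{-x^2/(2V)}$ via Ville's inequality. This yields, for each link with probability $1-\eta_k$,
\[
\text{increment}\le C\kappa 2^{-k}\sqrt{(2+\theta)\mu_i\log(\#\Pi_k/\eta_k)}.
\]
A Bernstein-type variant adds a term of order $2^{-k}\log(\#\Pi_k/\eta_k)$, which accounts for the linear-in-$\log\mathcal N$ integral in $B_2$. This extra term arises because the sub-gaussian mixture bound is applied with an unknown random $n_i$ through a peeling argument, so it should be kept.

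\textbf{Step 3 (union bound and summation).} Choose $\eta_k=\eta\,2^{-(k-\underline K)}/3$ and use $\#\Pi_k\le\mathcal{N}^2(\mathcal{F}_i^*,2^{-k})$. Summing over $k$ and using $\sqrt{a+b}\le\sqrt a+\sqrt b$ gives
\[
\sum_k 2^{-k}\sqrt{\log\mathcal N(2^{-k})}\le 4\int_{2^{-(\overline K+1)}}^{2^{-(\underline K+1)}}\sqrt{\log\mathcal N(\epsilon)}\,d\epsilon,
\]
by monotonicity of the entropy. Similarly, $\sum_k 2^{-k}\log\mathcal N\lesssim\int\log\mathcal N$. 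The leftover terms $\sum_k 2^{-k}\sqrt{(k-\underline K)+\log(3/\eta)}$ are bounded by $C(1+\log(1/\eta))$ since $2^{-k}\le1$ and the geometric series converges.

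Intersecting with $E_1\cap E_2$ costs total probability at most $\eta$ after rescaling constants. This yields $B_2(\eta)$.

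\textbf{Main obstacle.} The main difficulty is Step 2: the predictable variance depends on the random count $n_i$ and on the localization event $E_1$, and $E_1$ is not predictable. I would handle this by indicator-truncating at each step, so that the martingale structure is preserved, and by using the self-normalized bound on $\{\langle M\rangle\le V\}$. This decouples the regime-count fluctuations from the martingale fluctuations, as described after Theorem \ref{thm:final-risk}.
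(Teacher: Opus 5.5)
Your argument is correct, but it handles the key martingale step differently from the paper.

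\textbf{The paper's route.} It re-indexes the mode-$i$ times by the hitting times $\tau_\ell$ and works in the stopped $\sigma$-fields $\mathcal{F}_{\tau_\ell}$ (Claim \ref{cl:eps-stop-inde}). On $\{n_i<J\}\cap\mathcal{S}_T$ it dominates the random-length sum by $\max_{1\le m\le J-1}\overline M_m(\Delta_k f)$. It then checks a Bernstein moment condition with scale $c_k=C2^{-k}$ and applies the maximal Freedman-type bound of Proposition \ref{prop:unif-pena} with the deterministic variance budget $(J-1)c_k^2$. The resulting tail $\exp(-u_k^2/(2((2+\theta)\mu_i c_k^2+c_k u_k)))$ forces the quadratic in $u_k$. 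That Bernstein tail is the real source of the linear term $\int\log\mathcal{N}$ in $B_2(\eta)$.

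\textbf{Your route.} You stay in real time on the deterministic horizon $T$ and keep $\mathbf{1}\{s(t+1)=i\}$ inside the increments. The randomness of $n_i$ then enters only through the quadratic variation, which $E_2$ caps at a deterministic $V$. So the bound $\mathbb{P}(M_T\ge x,\ \langle M\rangle_T\le V)\le e^{-x^2/(2V)}$ applies directly. This removes the stopping times and the maximum over $m$. Since $\varepsilon$ is sub-gaussian, it yields only the first line of $B_2(\eta)$, which is a slightly sharper bound. The paper's Bernstein formulation, in exchange, would carry over to noise satisfying only moment conditions.

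\textbf{Two points to tighten.}

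First, your variance proxy $v_t$ contains $\mathbf{1}\{s(t+1)=i\}$, which is not $\mathcal{F}_t$-measurable. The one-step mgf bound must therefore be computed conditionally on $\mathcal{F}_t\vee\sigma(s(t+1))$, using that $\varepsilon_{t+1}$ is independent of it. With respect to $\mathcal{F}_t$ alone, the increment has conditional variance $16p_i\|(g-h)(X_t)\|_2^2$ but no sub-gaussian proxy of that order uniformly in $p_i$. This conditioning is exactly what gives you the $n_i$-scaling rather than $T$-scaling. The paper's argument tacitly needs the same enlargement, since $\{\tau_\ell=t\}$ depends on $s(t+1)$.

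Second, your stated reason for keeping the linear term is inaccurate. No peeling over $n_i$ occurs in your proof, because $V$ is deterministic on $E_2$. The term is simply superfluous for you, which is harmless since $B_2(\eta)$ dominates your bound.
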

\begin{lemma}[Self-normalized bound]\label{lem:final}
    Suppose the conditions of Lemma \ref{lem:unif-decomp} hold true, and Assumption \ref{ass:1} holds. Then there exists a constant $C_{\rho, B}$ such that if 
    \begin{equation*}
         R_T = 3\sqrt{\frac{1}{\alpha}\log\left(\frac{TC_{\rho, B}}{\eta} \right)} \quad \text{and} \quad  \mu_i \geq \log\left(\frac{3}{\eta} \right)\frac{2(1 + \theta/3)}{\theta^2},
    \end{equation*}
    for $\eta \in (0, 1)$ and $\theta > 0$, then it holds with probability at least $1 - \eta$ that  
    \begin{equation}
         A_3 \leq 16\sigma^2\left[\log\mathcal{N}(\mathcal{F}_i^*, 2^{-\underline K}, \|.\|_{\infty, \mathcal{B}_T}) + \log\left(\frac{3}{\eta} \right) \right] =: B_3\left(\eta \right)
    \end{equation}
\end{lemma}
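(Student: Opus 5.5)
We need to bound $A_3 = \sup_{f}\sum_{t\in\mathcal{I}_i}[4\langle\varepsilon_{t+1},g(X_t)\rangle_2 - \|g(X_t)\|_2^2/2]$ with $g=\pi_{\underline K}(f)$. The supremum is effectively over the finite set $\mathcal{N}_{\underline K}$, the range of $\pi_{\underline K}$, whose cardinality is at most $\mathcal{N}(\mathcal{F}_i^*,2^{-\underline K},\|.\|_{\infty,\mathcal{B}_T})$. The plan is to prove a tail bound for one fixed $g$ and then take a union bound over this set. That union bound produces the $\log\mathcal{N}$ term in $B_3(\eta)$, and the per-$g$ tail produces the $\log(3/\eta)$ term.

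For fixed $g\in\mathcal{N}_{\underline K}$, write $M_T(g)=\sum_{t=0}^{T-1}4\langle\varepsilon_{t+1},g(X_t)\rangle_2\mathbf{1}\{s(t+1)=i\}$ and $V_T(g)=\sum_{t}\|g(X_t)\|_2^2\mathbf{1}\{s(t+1)=i\}$. Both are adapted to the filtration generated by $\mathcal{F}_t$ together with $s(t+1)$; call this enlarged filtration $\mathcal{G}_t$. Given $\mathcal{G}_t$, the increment is $4\langle\varepsilon_{t+1},g(X_t)\rangle_2$ times an indicator that is already known. Moreover $\varepsilon_{t+1}$ is independent of $\mathcal{G}_t$, since the modes are i.i.d. and independent of the noise, so including $s(t+1)$ in the conditioning is harmless. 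By sub-gaussianity with parameter $\sigma$, for every $\lambda$,
\[
\mathbb{E}\bigl[\exp(\lambda\cdot 4\langle\varepsilon_{t+1},v\rangle_2)\mid\mathcal{G}_t\bigr]\le\exp(8\lambda^2\sigma^2\|v\|_2^2)
\]
for $\mathcal{G}_t$-measurable $v$. Hence $\exp(\lambda M_T(g)-8\lambda^2\sigma^2V_T(g))$ is a nonnegative supermartingale with initial value $1$. This is the de la Pe\~na-type decoupling: the random mode occurrences enter only through the predictable indicator, so no joint mgf computation over modes and noise is needed.

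Now choose $\lambda=1/(16\sigma^2)$, so that $8\lambda^2\sigma^2=\lambda/2$. Then $\exp\bigl(\lambda[M_T(g)-V_T(g)/2]\bigr)$ is a supermartingale. By Ville's maximal inequality, or simply Markov's inequality at the final time $T$,
\[
\mathbb{P}\bigl(M_T(g)-V_T(g)/2\ge 16\sigma^2 u\bigr)\le e^{-u}.
\]
Set $u=\log\mathcal{N}+\log(3/\eta)$ and take a union bound over the at most $\mathcal{N}$ centers. With probability at least $1-\eta/3\ge 1-\eta$ we get $A_3\le 16\sigma^2[\log\mathcal{N}(\mathcal{F}_i^*,2^{-\underline K},\|.\|_{\infty,\mathcal{B}_T})+\log(3/\eta)]$, which is exactly $B_3(\eta)$.

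The main subtlety is measurability and integrability. The centers $g$ are fixed deterministic functions, but they are controlled only on $\mathcal{B}_T$, so $\|g(X_t)\|_2$ could be unbounded off the ball. For the supermartingale step this does not matter, because the conditional mgf bound holds for any finite $\mathcal{G}_t$-measurable $v$ and no exchange of limits is required. If an issue does arise, I would intersect with the event $\{X_t\in\mathcal{B}_T\ \forall t\}$. By Lemma~\ref{lem:traj-ball} under Assumption~\ref{ass:1}, with the stated choice of $R_T$, this event has probability at least $1-\eta/3$. The stated conditions on $R_T$ and $\mu_i$ therefore serve only to allot the failure probability budget as $\eta/3$ pieces and to keep the result consistent with the companion lemmas. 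The count bound on $n_i$ is not actually needed here, since the quadratic offset absorbs the variance.
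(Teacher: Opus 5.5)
Your proof is correct, and it takes a genuinely different and shorter route than the paper's.

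The paper reindexes the sum along the hitting times $\tau_\ell$. It restricts to $\{n_i<J\}\cap\mathcal{S}_T$, so that the random-length sum is dominated by $\max_{1\le m\le J-1}\overline S_m$ with truncation indicators $\mathbf{1}\{X_{\tau_\ell}\in\mathcal{B}_T\}$. It then applies Ville's inequality to the exponential supermartingale along $\mathcal{F}_{\tau_\ell}$, with the same $\xi=(16\sigma^2)^{-1}$. Finally it spends $\eta/3$ each on $\{n_i\ge J\}$ and $\mathcal{S}_T^c$, which is the only place the hypotheses on $R_T$ and $\mu_i$ enter.

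You instead stay in calendar time with $\mathcal{G}_t=\sigma(\mathcal{F}_t,s(t+1))$. In this filtration the mode indicator is predictable, so the self-normalized exponential is a supermartingale over all $t\le T$, and Markov's inequality at time $T$ suffices. Because the quadratic offset exactly cancels the sub-gaussian variance term, you need no control of $n_i$, no localization and no truncation, even though the centres may be unbounded off $\mathcal{B}_T$. This gives the bound with probability $1-\eta/3$ and shows that the conditions on $R_T$ and $\mu_i$ are superfluous for this lemma.

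The paper's hitting-time setup buys only uniformity with Lemma~\ref{lem:chaining}. There the deterministic horizon $J-1$ is genuinely used to bound the quadratic variation by $(J-1)c_k^2$ in Proposition~\ref{prop:unif-pena}.

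Your choice of filtration also repairs a point the paper glosses over. With $\mathcal{F}_t=\sigma(X_0,(\varepsilon_k,s(k))_{k\le t})$, the event $\{\tau_\ell\le t\}$ depends on $s(t+1)$, so $\tau_\ell$ is not an $(\mathcal{F}_t)$-stopping time. It is one for $(\mathcal{G}_t)$, and the independence of $\varepsilon_{t+1}$ from $\mathcal{G}_t$ that you note is exactly what Claim~\ref{cl:eps-stop-inde} needs.
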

\begin{lemma}[Coarse scale bound] \label{lem:coarse_martoffset}
Suppose the conditions of Lemma \ref{lem:unif-decomp} hold true
, and Assumption \ref{ass:1} holds. Then, there exists a constant $C_{\rho, B}$ such that if
\begin{equation*}
    R_T = 2\sqrt{\frac{1}{\alpha}\log\left(\frac{TC_{\rho, B}}{\eta} \right)} \quad \text{and} \quad \mu_i \geq \log\left(\frac{2}{\eta} \right)\frac{2(1 + \theta/3)}{\theta^2},
\end{equation*}
for $\eta \in (0, 1), \theta > 0$, then it holds with probability at least $1 - \eta$ that 
\begin{equation*}
    A_4 \leq (1+\theta)\mu_i 2^{-2\underline K}.
\end{equation*}
\end{lemma}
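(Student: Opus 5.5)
The plan is to bound $A_4$ deterministically on two good events and then combine them with a union bound.

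By construction of the cover in Lemma \ref{lem:unif-decomp}, for every $f \in \mathcal{F}_i^*$ we have $\sup_{x\in\mathcal{B}_T}\|\pi_{\underline K}(f)(x)-f(x)\|_2 \le 2^{-\underline K}$. So if the whole trajectory stays in the ball, i.e. $X_t \in \mathcal{B}_T$ for all $t \le T-1$, then every summand in $A_4$ is at most $2^{-2\underline K}$, uniformly in $f$. On that event
\begin{equation*}
A_4 \le n_i\, 2^{-2\underline K}.
\end{equation*}
Taking the supremum over $f$ costs nothing here, because the bound is pointwise in $t$ and uniform over the class. The remaining task is therefore to control the random count $n_i$ and the localisation event.

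\textbf{Step 1 (localisation).} I will invoke Lemma \ref{lem:traj-ball} from the appendix, which is derived from Assumption \ref{ass:1}. The argument behind it runs as follows. Iterating the drift bound gives $\mathbb{E}[\exp(\alpha\|X_t\|_2^2)] \le B/(1-\rho)$, since $X_0=0$. Markov's inequality then gives
\begin{equation*}
\mathbb{P}(\|X_t\|_2>R_T)\le \frac{B}{1-\rho}\,e^{-\alpha R_T^2},
\end{equation*}
and a union bound over the $T$ time points bounds the failure probability by $T C_{\rho,B} e^{-\alpha R_T^2}$. With $R_T = 2\sqrt{\alpha^{-1}\log(TC_{\rho,B}/\eta)}$ this is at most $\eta/2$, and in fact much smaller; the constant $2$ leaves slack. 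The only care needed is to absorb the constants into $C_{\rho,B}$.

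\textbf{Step 2 (mode count).} The indicators $\mathbf{1}\{s(t+1)=i\}$ are i.i.d. Bernoulli$(p_i)$, so $n_i$ is Binomial$(T,p_i)$ with mean $\mu_i$. Bernstein's inequality (the multiplicative Chernoff bound) gives
\begin{equation*}
\mathbb{P}\bigl(n_i > (1+\theta)\mu_i\bigr) \le \exp\!\left(-\frac{\theta^2\mu_i}{2(1+\theta/3)}\right).
\end{equation*}
The hypothesis $\mu_i \ge \log(2/\eta)\,\frac{2(1+\theta/3)}{\theta^2}$ makes the right-hand side at most $\eta/2$. Note that this step uses only the independence of the modes, not the dynamics.

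\textbf{Step 3 (combine).} A union bound shows that both events hold with probability at least $1-\eta$. On their intersection, $A_4 \le n_i 2^{-2\underline K} \le (1+\theta)\mu_i 2^{-2\underline K}$.

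The main obstacle is Step 1: making sure the constant $C_{\rho,B}$ and the factor in $R_T$ match the statement of Lemma \ref{lem:traj-ball}. Everything else is routine. I do not expect any difficulty from dependence, because the bound is deterministic once the two events hold.
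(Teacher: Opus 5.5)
Your proposal is correct and follows the paper's proof: on the localization event $\mathcal{S}_T$ the cover property gives $A_4 \le n_i 2^{-2\underline K}$ uniformly over $\mathcal{F}_i^*$. Then the binomial Chernoff/Bernstein tail for $n_i$ and Lemma \ref{lem:traj-ball} each get budget $\eta/2$, and a union bound finishes. The factor $2$ in $R_T$ gives $TC_{\rho,B}e^{-\alpha R_T^2}\le \eta/2$ once $C_{\rho,B}$ is adjusted, as you note.
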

With the above lemmas as hand, we are now in a position to prove Proposition \ref{prop:mart-offset}. Recall the following lower tail bound for binomial random variables \cite{boucheron2013}
\begin{equation*}
    \mathbb{P}(n_i < (1-\theta)\mu_i) < \exp\left(-\frac{\theta^2 \mu_i}{2} \right).
\end{equation*}
Assigning a budget of $\eta/5$ to the RHS, we have under the condition
\begin{equation*}
    \mu_i \geq \frac{2}{\theta^2}\log\left(\frac{5}{\eta} \right),
\end{equation*}
it holds with probability at least $1 - \eta/5$ that 
\begin{equation*}
    n_i \geq (1-\theta)\mu_i. 
\end{equation*}
Now, assigning a budget of $\eta/5$ to each of the $A_i, i = 1, \dots 4$ bounds, we have by the union bound, under the conditions
\begin{equation*}
    \mu_i \geq \frac{3}{\theta^2}\log\left(\frac{5}{\eta}\right) \quad \text{and} \quad R_T > \sqrt{\frac{1}{\alpha}\log\left(\frac{10TC_{\rho, B}}{\eta} \right)},
\end{equation*}
it holds with probability at least $1 - \eta$
\begin{equation*}
    \frac{1}{n_i}\sum_{j=1}^4 A_j \leq \sum_{j=1}^3 B_j\left(\frac{\eta}{5}\right)\left[(1 - \theta)\mu_i \right]^{-1} + C2^{-2\underline K}. 
\end{equation*}
Re-arranging and grouping terms together on the RHS, we have with probability at least $1 - \eta$
\begin{equation*}
\begin{aligned}
    \frac{1}{n_i}\sum_{j=1}^4 A_j  \lesssim \int_{\gamma_f}^{\gamma_c} \left[\sqrt{\frac{J_T(\epsilon)}{\mu_i}} + \frac{J_T(\epsilon)}{\mu_i} \right] d\epsilon + \frac{J_T(\gamma_c)}{\mu_i} 
     + \sqrt{d}\gamma_f + \gamma_c^2 +\left(\frac{1}{\sqrt{\mu_i}} \right)\log\left(\frac{e}{\eta} \right).
 \end{aligned}
\end{equation*}
The statement follows by the basic inequality of Lemma \ref{lem:unif-decomp}, and taking infimums over the scales.

\subsection{Proof of Lemma \ref{lem:diff}}\label{subsec:proof_diff_martoffset}
By the Cauchy-Schwarz inequality, 
\begin{equation*}
    \begin{aligned}
        \sup_{f \in \mathcal{F}_i^*}\sum_{t \in \mathcal{I}_i} 4 \langle \varepsilon_{t+1}, f(X_t) - \pi_{\overline K}(f)(X_t)\rangle_2 &\leq \sup_{f \in \mathcal{F}_i^*} 4 \sum_{t \in \mathcal{I}_i}\left[\|\varepsilon_{t+1}\|_2 \|f(X_t) - \pi_{\overline K}(f)(X_t)\|_2\right].
    \end{aligned}
\end{equation*}
For some $R_T > 0$, let 
\begin{equation*}
    \mathcal{S}_T := \left\{\max_{0 \leq t \leq T-1} \|X_t\|_2 \leq R_T \right\}.
\end{equation*}
Moreover, for $w > 0$, denote the event 
\begin{equation*}
    \mathcal{E}_1 := \left\{\sup_{f \in \mathcal{F}_i^*} 4 \sum_{t \in \mathcal{I}_i}\left[\|\varepsilon_{t+1}\|_2 \|f(X_t) - \pi_{\overline K}(f)(X_t)\|_2\right] > w \right\}.
\end{equation*}
Then, by intersecting $\mathcal{E}_1$ with $\mathcal{S}_T$, we have 
\begin{equation*}
\begin{aligned}
        \mathbb{P}(A_1 > w) &\leq \mathbb{P}\left(\mathcal{E}_1  \cap \mathcal{S}_T \right) + \mathbb{P}\left(\mathcal{E}_1  \cap \mathcal{S}_T^c \right) \\
        &\leq \mathbb{P}\left(\mathcal{E}_1  \cap \mathcal{S}_T \right) + \mathbb{P}(\mathcal{S}_T^c) \\
        &=: \mathfrak{p}_{11} + \mathfrak{p}_{12}. 
\end{aligned}
\end{equation*}
On the event $\mathcal{S}_T$, we have $\|X_t\|_2 \leq R_T$ for every $t=0, \dots, T-1$, which means that $X_t \in \mathcal{B}_T, \forall t =0, \dots, T-1$ by construction of the ball. This implies that 
\begin{equation*}
    \begin{aligned}
        \sup_{f \in \mathcal{F}_i^*} 4 \sum_{t \in \mathcal{I}_i}\|\varepsilon_{t+1}\|_2 \|f(X_t) - \pi_{\overline K}(f)(X_t)\|_2 &\leq \sup_{f \in \mathcal{F}_i^*} 4 \sum_{t \in \mathcal{I}_i}\|\varepsilon_{t+1}\|_2 \|f(X_t) - \pi_{\overline K}(f)(X_t)\|_{\infty, \mathcal{B}_T} \\
        &\leq 2^{2-\overline K}\sum_{t \in \mathcal{I}_i} \|\varepsilon_{t+1}\|_2.
    \end{aligned}
\end{equation*}

Recall that the realisations of the random variable $\varepsilon$ are i.i.d. By Claim \ref{cl:dist-eq-sum} in Appendix \ref{app:tech-lemmas}, we have
\begin{equation*}
    \sum_{t \in \mathcal{I}_i} \|\varepsilon_{t+1}\|_2 =\sum_{t=0}^{T-1} \|\varepsilon_{t+1}\|_2 \mathbf{1}\{s(t+1) = i \} \stackrel{d}{=} \sum_{k=1}^{n_i}\|\widetilde \varepsilon_{k}\|_2,
\end{equation*}
where $\stackrel{d}{=}$ denotes equality in distribution, and $\widetilde \varepsilon_k$ are independent copies of $\varepsilon$. For the boundary case of $n_i = 0$, we adopt the convention that the sum is simply zero.

Thus, we have  
\begin{equation*}
    \begin{aligned}
     \mathfrak{p}_{11} &\leq  \mathbb{P}\left(2^{2-\overline K}\sum_{t \in \mathcal{I}_i} \|\varepsilon_{t+1}\|_2 > w\right) \\ &= \mathbb{P}\left(2^{2-\overline K}\sum_{k=1}^{n_i}\|\widetilde \varepsilon_k \|_2 > w \right) \\
        &= \mathbb{P}\left(\left\{2^{2-\overline K}\sum_{k=1}^{n_i}\|\widetilde \varepsilon_k \|_2 > w\right\} \cap \{n_i < J\} \right) \\&+ \mathbb{P}\left(\left\{2^{2-\overline K}\sum_{k=1}^{n_i}\|\widetilde \varepsilon_k \|_2 > w\right\} \cap \{n_i \geq J \} \right) \\
        &\leq \mathbb{P}\left(2^{2-\overline K}\sum_{k=1}^{J}\|\widetilde \varepsilon_k \|_2 > w \right) + \mathbb{P}\left(n_i \geq J \right) \\
        &\leq \mathfrak{p}_{111} + \mathfrak{p}_{112},
    \end{aligned}
\end{equation*}
where $J$ is a threshold that will be suitably chosen, typically around the expected number of observations in state $i$, $\mu_i$.

Recall that $n_i = \sum_{t=0}^{T-1}\mathbf{1}\{s(t+1) = i\}$. That is, $n_i$ is a Binomial random variable with $T$ trials and probability of success $p_i$. Using the Chernoff tail bound for Binomial random variables, we have by choosing $J= \lfloor(1+\theta)\mu_i\rfloor + 1$ that
\begin{equation}\label{eq:A12-tail}
   \mathfrak{p}_{112} = \mathbb{P}(n_i \geq J) = \mathbb{P}(n_i \geq (1+\theta)\mu_i) \leq \exp\left(-\frac{\theta^2 \mu_i}{2(1 + \theta/3)} \right).
\end{equation}

On the other hand, since $\varepsilon$ has identity covariance, we have for every $k = 1, \dots, J$
\begin{equation*}
\mathbb{E}\left[\|\widetilde \varepsilon_{k} \|_2^2\right] = \sum_{\ell=1}^d \mathbb{E}\left[\left(\widetilde \varepsilon_{k}^{(\ell)}\right)^2\right] = d.
\end{equation*}
Moreover, by Jensen's inequality,
\begin{equation*}
    \mathbb{E}\|\widetilde \varepsilon_{k}\|_2 = \mathbb{E}\sqrt{\|\widetilde \varepsilon_{k}\|_2^2} \leq \sqrt{\mathbb{E}\|\widetilde \varepsilon_{k}\|_2^2} = \sqrt{d}.
\end{equation*}

Let $\varepsilon$ have sub-gaussian norm $\kappa$. Since $\widetilde \varepsilon_k$ are independent copies of $\varepsilon$, we have by Theorem 3.1.1 of \cite{vershynin2018} that
\begin{equation*}
   \left\|\|\widetilde \varepsilon_{k}\|_2 - \mathbb{E}\|\widetilde \varepsilon_{k}\|_2 \right\|_{\psi_2} \leq C\kappa^2.
\end{equation*}
Using the general Hoeffding inequality in Theorem 2.6.2 of \cite{vershynin2018}, we have for $w > (2+\theta)\mu_i \sqrt{d}2^{2 - \overline K}$,
\begin{equation}\label{eq:A11-bound}
    \begin{aligned}
    \mathfrak{p}_{111}  &= \mathbb{P}\left(\sum_{k=1}^{J} \|\widetilde \varepsilon_{k}\|_2 > w2^{\overline K - 2} \right) \\
    &= \mathbb{P}\left(\sum_{k=1}^J \|\widetilde \varepsilon_k\|_2 - \mathbb{E}\left[\sum_{k=1}^J \|\widetilde \varepsilon_k\|_2 \right] > w2^{\overline K - 2} -\mathbb{E}\left[\sum_{k=1}^J \|\widetilde \varepsilon_k \|_2 \right] \right) \\
    &\leq \mathbb{P}\left(\sum_{k=1}^J \|\widetilde \varepsilon_k\|_2 - \mathbb{E}\left[\sum_{k=1}^J \|\widetilde \varepsilon_k\|_2 \right] > w2^{\overline K - 2} - J\sqrt{d} \right) \\
    &\leq \exp\left(-\frac{c\left(w2^{\overline K - 2} - J\sqrt{d} \right)^2}{\sum_{k=1}^J \left\|\widetilde \varepsilon_k \right\|^2_{\psi_2}   } \right) \\ 
    &\leq \exp\left(-\frac{c\left(w2^{\overline K - 2} - J\sqrt{d} \right)^2}{JC\kappa^2 } \right) \\
    &\leq \exp\left(-\frac{c\left(w2^{\overline K - 2} - (2+\theta)\mu_i \sqrt{d} \right)^2}{(2+\theta)\mu_i C \kappa^2 } \right).
    \end{aligned}
\end{equation}

Gathering facts, we have for any $\theta > 0$, $\mu_i \geq 1$, $w > (2+\theta)\mu_i \sqrt{d}2^{2 - \overline K}$
\begin{equation*}
    \mathfrak{p}_{11} \leq \exp\left(-\frac{c\left(w2^{\overline K - 2} - (2+\theta)\mu_i \sqrt{d} \right)^2}{(2+\theta)\mu_i C^2\kappa^2} \right) + \exp\left(- \frac{\theta^2 \mu_i}{2(1 + \theta/3)} \right).
\end{equation*}

Upper bounding the RHS of \eqref{eq:A12-tail} with $\eta/3$, we obtain the condition 
\begin{equation}\label{eq:mu-cond}
    \mu_i \geq \frac{2(1 +\theta/3)}{\theta^2}\log\left(\frac{3}{\eta} \right),
\end{equation}
for the threshold to be satisfied. Similarly upper bounding $\mathfrak{p}_{12}$ by $\eta/3$, we obtain the condition
\begin{equation}\label{eq:radius-cond}
    R_T > \sqrt{\frac{1}{\alpha}\log\left(\frac{3TC_{\rho, B}}{\eta} \right)}.
\end{equation}
Finally, setting the RHS of \eqref{eq:A11-bound} to $\eta/3$, and solving for $w$, we obtain
\begin{equation*}
    w = 2^{2-\overline K}\left[(2+\theta)\mu_i\sqrt{d} + C\kappa\sqrt{(2+\theta)\mu_i  \log\left(\frac{3}{\eta} \right)}  \right].
\end{equation*}
Thus, whenever \eqref{eq:mu-cond} and \eqref{eq:radius-cond} is satisfied, we obtain with probability at least $1 - \eta$
\begin{equation*}
    A_1 \leq 2^{2-\overline K}\left[(2+\theta)\mu_i\sqrt{d} + C\kappa\sqrt{(2+\theta)\mu_i  \log\left(\frac{3}{\eta} \right)}  \right].
\end{equation*}
%
%
%
%
\subsection{Proof of Lemma \ref{lem:chaining}}\label{subsec:proof_chaining_martoffset}
The proof is broken into several steps.

\textit{Step 1: Bounding the visit counts.}
For $u_k > 0$, let
\begin{equation*}
    \mathcal{E} := \left\{\sup_{f \in \mathcal{F}} \sum_{k=\underline K + 1}^{\overline K} \left[M_{n_i}(\pi_k(f) - \pi_{k-1}(f))\right] > \sum_{k=\underline K + 1}^{\overline K} u_k \right\},
\end{equation*}
where we recal $M_{n_i}$ from \eqref{eq:mart-offset-ni}. We have 
\begin{equation*}
    \begin{aligned}
        \mathbb{P}\left(\mathcal{E} \right) &= \mathbb{P}(\mathcal{E} \cap \{n_i < J\}) + \mathbb{P}(\mathcal{E} \cap \{n_i \geq J \}) \\
        &\leq \mathbb{P}(\mathcal{E} \cap \{n_i < J\} + \mathbb{P}(n_i \geq J)) \\
        &=: \mathfrak{p}_{21} + \mathfrak{p}_{22}. 
    \end{aligned}
\end{equation*}
Note that in the analysis of $\mathfrak{p}_{21}$, we can safely ignore the boundary case of $n_i = 0$, since $\mathbb{P}(\mathcal{E} \cap \{n_i = 0\}) = 0$. By a standard Chernoff bound, we have by choosing $J = \lfloor (1+\theta)\mu_i\rfloor + 1$, 
\begin{equation*}
    \mathbb{P}\left(n_i \geq (1+\theta)\mu_i \right) \leq \exp\left(-\frac{\theta^2 \mu_i}{2(1 + \theta/3)} \right).
\end{equation*}

\textit{Step 2: Localizing away from the escape event.}

Define the hitting times recursively as 
\begin{equation}\label{eq:hit-times}
    \begin{aligned}
        \tau_1 &:= \inf\left\{t \geq 0: s(t+1) = i \right\} \\
        \tau_m &:= \inf\left\{t > \tau_{m-1}:s(t+1) = i\right\} \quad \text{for } m \geq 2.
    \end{aligned} 
\end{equation}
In view of \eqref{eq:hit-times}, we have $\mathcal{I}_i = \left\{\tau_1, \dots, \tau_{n_i} \right\}$. For a deterministic integer $m \geq 1$, define the truncated partial sum
\begin{equation*}
    \overline M_m( f) := \sum_{\ell=1}^m  4\langle \varepsilon_{\tau_{\ell} + 1}, f(X_{\tau_{\ell}})\rangle_2 \mathbf{1}\{X_{\tau_{\ell}} \in \mathcal{B}_T \}.
\end{equation*}
Denote $\Delta_k f := \pi_k(f) - \pi_{k-1}(f)$. For some $R_T > 0$, let 
\begin{equation*}
    \mathcal{S}_T := \left\{\max_{0 \leq t \leq T-1} \|X_t\|_2 \leq R_T \right\}.
\end{equation*}
Because $n_i = \sum_{t=0}^{T-1} \mathbf{1}\{s(t+1) = i\}$, i.e., the number of visits to state $i$ before time $T$, every hitting time $\tau_1, \dots, \tau_{n_i}$ must occur before time $T$. Thus, on the event $\mathcal{S}_T$, we have $X_{\tau_1}, \dots, X_{\tau_{n_i}} \in \mathcal{B}_T$. This implies that $\mathbf{1}\{X_{\tau_{\ell}} \in \mathcal{B}_T \} =1 $ for all $\ell \leq n_i$, and we have 
\begin{equation*}
    M_{n_i}(\Delta_k f) = \overline M_{n_i}(\Delta_k f).
\end{equation*}

On the event $\{n_i < J\}$, there exists an integer $m \in \{1, \dots, J-1 \}$ such that $n_i = m$. Thus, we have on $\{n_i < J \} \cap \mathcal{S}_T$,
\begin{equation*}
    M_{n_i}(\Delta_k f) \leq \max_{1 \leq m \leq J-1} \overline M_m(\Delta_k f).
\end{equation*}

Decomposing $\mathfrak{p}_{21}$ by further intersecting with the event $\mathcal{S}_T$, we have 
\begin{equation*}
    \begin{aligned}
        \mathfrak{p}_{21} &= \mathbb{P}\left(\mathcal{E} \cap \{n_i < J\} \cap \mathcal{S}_T \right) + \mathbb{P}\left(\mathcal{E} \cap \{n_i < J\} \cap \mathcal{S}_T^c \right) \\
        &\leq \mathbb{P}\left(\mathcal{E} \cap \{n_i < J\} \cap \mathcal{S}_T \right) + \mathbb{P}\left(\mathcal{S}_T^c\right) \\
        &=: \mathfrak{p}_{211} + \mathfrak{p}_{212},
    \end{aligned}
\end{equation*}
where
\begin{equation*}
    \begin{aligned}
        \mathfrak{p}_{211} &\leq \mathbb{P}\left(\sup_{f \in \mathcal{F}_i^*} \sum_{k=\underline K + 1}^{\overline K}\max_{1 \leq m \leq J-1}\overline M_m(\Delta_k f) > \sum_{k=\underline K + 1}^{\overline K} u_k  \right),
    \end{aligned}
\end{equation*}
and $\mathfrak{p}_{212}$ will be controlled by Lemma \ref{lem:traj-ball}.

\textit{Step 3: Bounding the localized  martingale.}

The event inclusion
\begin{equation*}
     \left\{\sup_{f \in \mathcal{F}_i^*} \sum_{k=\underline K + 1}^{\overline K} \max_{1 \leq m \leq J-1}\overline M_m(\Delta_k f) > \sum_{k=\underline K + 1}^{\overline K} u_k \right\} \subseteq \bigcup_{k=\underline K + 1}^{\overline K} \left\{\sup_{f \in \mathcal{F}_i^*} \max_{1 \leq m \leq J-1}\overline M_m(\Delta_k f) > u_k \right\},
\end{equation*}
holds, because if each of summands exceeds $u_k$, then the entire sum exceeds the sum of $u_k$ thresholds. By the union bound, we have
\begin{equation*}
\begin{aligned}
    \mathfrak{p}_{211} 
    &\leq \sum_{k=\underline K + 1}^{\overline K}\mathbb{P}\left(\sup_{f \in \mathcal{F}_i^*} \max_{1 \leq m \leq J-1} \overline  M_m\left(\Delta_k f \right) > u_k  \right) \\
    &\leq \sum_{k=\underline K + 1}^{\overline K} \mathcal{N}^2(\mathcal{F}_i^*, 2^{-k}, \|.\|_{\infty, \mathcal{B}_T})\max_{(\pi_k, \pi_{k-1}) \in \Pi_k}\mathbb{P}\left(\max_{1 \leq m \leq J-1} \overline M_m(\Delta_k f) > u_k  \right).
\end{aligned}
\end{equation*}

Let $\mathcal{F}_{\tau_m} := \left\{A \in \mathcal{A}: \forall t, A \cap \{\tau_m \leq t \} \in \mathcal{F}_t \right\}$ be the filtration up to hitting time $\tau_m$. Each of the summands in $\overline M_m$ is a martingale difference sequence, since for $1 \leq \ell \leq m$, we have 
\begin{equation*}
    \begin{aligned}
\mathbb{E}\left[\left\langle\varepsilon_{\tau_{\ell }+ 1}, \Delta_kf(X_{\tau_{\ell}}) \right\rangle_2 \mathbf{1}\left\{X_{\tau_{\ell}} \in \mathcal{B}_T \right\} \mid \mathcal{F}_{\tau_{\ell}} \right] 
 &= \langle \mathbb{E}\left[\varepsilon_{\tau_{\ell} +1} \mid \mathcal{F}_{\tau_{\ell}} \right], \Delta_k f(X_{\tau_{\ell}}) \rangle_2 \mathbf{1}\left\{X_{\tau_{\ell}} \in \mathcal{B}_T \right\} \\
&= \langle 0, \Delta_k f(X_{\tau_{\ell}}) \rangle_2 \mathbf{1}\left\{X_{\tau_{\ell}} \in \mathcal{B}_T \right\} \\
&= 0,
    \end{aligned}
\end{equation*}
where we used the fact that $X_{\tau_{\ell}}$ is $\mathcal{F}_{\tau_{\ell}}$-measurable. Moreover, the variance for each summand can be bounded for $1 \leq \ell \leq m$ as
\begin{equation*}
\begin{aligned}
 \mathbb{V}_{\ell} &:=\mathbb{E}\left[\left\langle\varepsilon_{\tau_{\ell }+ 1}, \Delta_kf(X_{\tau_{\ell}}) \right\rangle_2^2 \mathbf{1}\left\{X_{\tau_{\ell}} \in \mathcal{B}_T \right\} \mid \mathcal{F}_{\tau_{\ell}} \right] \\ &= [\Delta_k f(X_{\tau_{\ell}})]^\top \mathbb{E}[\varepsilon_{\tau_{\ell} + 1} \varepsilon_{\tau_{\ell} + 1}^{\top} \mid \mathcal{F}_{\tau_{\ell}}]\Delta_k f(X_{\tau_{\ell}}) \mathbf{1}\left\{X_{\tau_{\ell}} \in \mathcal{B}_T \right\} \\
&= \|\Delta_kf(X_{\tau_{\ell}})\|_2^2 \mathbf{1}\left\{X_{\tau_{\ell}} \in \mathcal{B}_T \right\} \\
&\leq 2\|\pi_k(f) - f\|^2_{\infty, \mathcal{B}_T} + 2\|\pi_{k-1}(f) - f\|^2_{\infty, \mathcal{B}_T} \\
&\leq C_12^{-2k}.
\end{aligned}
\end{equation*}

To control the $p$-th moment, define the conditional sub-gaussian norm of a random variable $X_n$ as
\begin{equation*}
    \|X_n\|_{\psi_2 \mid \mathcal{F}_{n-1}} := \inf\left\{t > 0:\mathbb{E}\left[\exp\left(\frac{X_n^2}{t^2}  \right) \mid \mathcal{F}_{n-1} \right] \leq 2 \right\}.
\end{equation*}
Let $v = \Delta_k f(X_{\tau_{\ell}})\mathbf{1}\{X_{\tau_{\ell}} \in \mathcal{B}_T \}$. By Claim \ref{cl:eps-stop-inde}, we have 
\begin{equation*}
    \begin{aligned}
        \|\langle \varepsilon_{\tau_{\ell}+1}, \Delta_k f(X_{{\tau_{\ell} }})\rangle_2 \mathbf{1}\{X_{\tau_{\ell}} \in \mathcal{B}_T \}\|_{\psi_2 \mid \mathcal{F}_{\tau_{\ell}}} &= \| v \|_2 \left\| \left\langle\varepsilon_{\tau_{\ell} + 1}, \frac{v}{\|v\|_2} \right\rangle_2 \right\|_{\psi_2 \mid \mathcal{F}_{\tau_{\ell}}} \\
        &\leq \|v\|_2 \sup_{u \in \mathbb{S}^{d-1}} \left\| \left\langle\varepsilon_{\tau_{{\ell} + 1} }, u\right\rangle_2 \right\|_{\psi_2 \mid \mathcal{F}_{\tau_{\ell}}} \\
        &= \|v\|_2 \|\varepsilon_{\tau_{\ell}+1}\|_{\psi_2\mid \mathcal{F}_{\tau_{\ell}}} \\
        &= \|v\|_2 \|\varepsilon\|_{\psi_2} \\
        &= \kappa\|v\|_2 .
        \end{aligned}
\end{equation*}
Using the triangle inequality,
\begin{equation*}
    \begin{aligned}
        \|v\|_2 &= \|\Delta_k f(X_{\tau_{\ell}})\mathbf{1}\{X_{\tau_{\ell}} \in \mathcal{B}_T \} \|_2 \\
        &\leq \|\pi_k(f) - f + f - \pi_{k-1}(f)\|_2\mathbf{1}\{X_{\tau_{\ell}} \in \mathcal{B}_T \} \\
        &\leq 2^{3-k}, \quad \text{a.s.},
    \end{aligned}
\end{equation*}
which implies
\begin{equation*}
    \|\langle \varepsilon_{\tau_{\ell}+1}, \Delta_k f(X_{{\tau_{\ell} }})\rangle_2 \mathbf{1}\{X_{\tau_{\ell}} \in \mathcal{B}_T \}\|_{\psi_2 \mid \mathcal{F}_{\tau_{\ell}}} \leq 2^{3 - k}\kappa.
\end{equation*}
Using the $L^p$ characterization of conditionally sub-gaussian random variables, we have by Stirling's formula \cite{boucheron2013}, 
\begin{equation}\label{eq:bernstein-cond-marti}
    \begin{aligned}
\mathbb{E}\left[\left|\left\langle\varepsilon_{\tau_{\ell }+ 1}, \Delta_kf(X_{\tau_{\ell}}) \right\rangle_2 \right|^p \mathbf{1}\left\{X_{\tau_{\ell}} \in \mathcal{B}_T \right\} \mid \mathcal{F}_{\tau_{\ell}} \right] &\leq C_2^p \|\langle \varepsilon_{\tau_{\ell} + 1}, \Delta_k f(X_{\tau_{\ell}}) \|_2 \mathbf{1}\{X_{\tau_{\ell}} \in \mathcal{B}_T \}\|^p_{\psi_2\mid \mathcal{F}_{\tau_{\ell}}} p^{p/2} \\
&\leq (C_2 \kappa 2^{-k})^2 (C_2 \kappa 2^{-k})^{p-2}p^{p/2} \\
&\leq \frac{p!}{2}( \underbrace{C_2 \kappa 2^{-k}}_{\text{variance term}})^2(\underbrace{C_2 \kappa 2^{-k}}_{\text{constant term}})^{p-2},
    \end{aligned}
\end{equation}
for all $p \geq 2$, so the Bernstein's condition \cite{boucheron2013} is satisfied. By choosing a sufficently large absolute constant $C \geq \max\{\sqrt{C_1}, C_2 \kappa\}$, define the parameter $c_k := C2^{-k}$. Since $\mathbb{V}_{\ell}$ is positive, $\sum_{\ell=1}^m \mathbb{V}_{\ell}$ is increasing with $m$, and the maximum over $\{1, \dots, J-1 \}$ is attained with $J-1$ summands. Thus, we have 
\begin{equation*}
    \begin{aligned}
       \max_{1 \leq m \leq J-1}\sum_{\ell=1}^m \mathbb{V}_{\ell} =
        \sum_{\ell=1}^{J-1} \mathbb{V}_{\ell} \leq (J-1)c_k^2, \quad \text{a.s.}
    \end{aligned}
\end{equation*}
Moreover, we have 
\begin{equation*}
    \frac{p!}{2}(C_2 \kappa 2^{-k})^2(C_2 \kappa 2^{-k})^{p-2} \leq \frac{p!}{2}c_k^2 c_k^{p-2},
\end{equation*}
so the parameter $c_k$ helps to both upper bound the variance while simulatenously acting as a valid scale parameter for the Bernstein condition.

Since $\overline M_m(\Delta_k f)$ is a martingale difference sequence, by applying Proposition \ref{prop:unif-pena} in Appendix \ref{app:appendix-prob}, we have $\forall (\pi_k, \pi_{k-1}) \in \Pi_k$
\begin{equation*}
\begin{aligned}
    \mathbb{P}\left(\max_{1 \leq m \leq J-1}\overline M_m(\Delta_k f) > u_k \right) &=
    \mathbb{P}\left(\left\{\max_{1 \leq m \leq J-1}\overline M_m(\Delta_k f) \geq u_k\right\} \cap \left\{\sum_{\ell=1}^{J-1}\mathbb{V}_{\ell} \leq (J-1)c_k^2\right\} \right) \\
    &\leq \exp\left(-\frac{u_k^2}{2((J-1)c_k^2  + c_k u_k)  } \right),
\end{aligned}
\end{equation*}
which implies that by choosing $J = \lfloor (1+\theta)\mu_i\rfloor + 1$, we have for any $\mu_i \geq 1$, 
\begin{equation*}
    \mathfrak{p}_{211} \leq \sum_{k=\underline K + 1}^{\overline K} \mathcal{N}^2(\mathcal{F}_i^*, 2^{-k}, \|.\|_{\infty, \mathcal{B}_T})\exp\left(-\frac{u_k^2}{2( (2+\theta)\mu_i c_k^2  + c_k u_k)  } \right).
\end{equation*}

\textit{Step 4: Threshold simplification.
}
Choose $u_k$ such that 
\begin{equation*}
    \mathcal{N}^2(\mathcal{F}_i^*, 2^{-k}, \|.\|_{\infty, \mathcal{B}_T})\exp\left(-\frac{u_k^2}{2( (2+\theta)\mu_i c_k^2  + c_k u_k)  } \right) \leq \eta_k := 2^{-k}\eta,
\end{equation*}
which is equivalent to
\begin{equation*}
\begin{aligned}
   \frac{u_k^2}{2( (2+\theta)\mu_i c_k^2  + c_k u_k)  } &\geq \log\left(\frac{\mathcal{N}^2(\mathcal{F}_i^*, 2^{-k}, \|.\|_{\infty, \mathcal{B}_T})}{\eta_k}\right) \\
   &= 2\log\mathcal{N}(\mathcal{F}_i^*, 2^{-k}, \|.\|_{\infty, \mathcal{B}_T}) + k\log(2) + \log\left(\frac{1}{\eta} \right) \\
   &=: L_k.
\end{aligned}
\end{equation*}
Applying the quadratic formula, we obtain the zero solution as
\begin{equation*}
    u_k^* = c_k\left[L_k + \sqrt{L_k^2 + 2(2+\theta)\mu_i L_k} \right].
\end{equation*}
Note that since we have an upward opening parabola, choosing a $u_k \geq u_k^*$ is valid, since it still satisfies the quadratic inequality. Using the bound $\sqrt{a + b} \leq \sqrt{a} + \sqrt{b}$,
\begin{equation}\label{eq:uk-star}
\begin{aligned}
   u_k^* &\leq  c_k \left\{\sqrt{2(2+\theta)\mu_iL_k} + 2L_k \right\} \\
   &\leq C2^{-k}\sqrt{(2+\theta)\mu_i \log\mathcal{N}(\mathcal{F}_i^*, 2^{-k}, \|.\|_{\infty, \mathcal{B}_T})} + C2^{-k}\sqrt{(2+\theta)\mu_i k} \\
    &+ C2^{-k}\sqrt{(2+\theta)\mu_i\log\left(\frac{1}{\eta} \right)} + C2^{-k}\log\mathcal{N}(\mathcal{F}_i^*, 2^{-k}, \|.\|_{\infty, \mathcal{B}_T}) \\
    &+ C2^{-k}k + C2^{-k}\log\left(\frac{1}{\eta}\right) \\
    &\leq C\sqrt{(2+\theta)\mu_i}\int_{2^{-(k+1)}}^{2^{-k}} \sqrt{\log\mathcal{N}(\mathcal{F}_i^*, \epsilon, \|.\|_{\infty, \mathcal{B}_T})} d\epsilon + C\sqrt{(2+\theta)\mu_i}2^{-k}\sqrt{k} \\
    &+ C\sqrt{(2+\theta)\mu_i\log\left(\frac{1}{\eta} \right)}2^{-k} + C\int_{2^{-(k+1)}}^{2^{-k}} \log\mathcal{N}(\mathcal{F}_i^*, \epsilon, \|.\|_{\infty, \mathcal{B}_T}) d\epsilon \\
    &+ C2^{-k}k + C\log\left(\frac{1}{\eta}\right)2^{-k}. 
\end{aligned}
\end{equation}
Choosing $u_k$ as in \eqref{eq:uk-star}, and 
summing over $k = \underline K + 1, \dots, \overline K$, we obtain 
\begin{equation*}
    \begin{aligned}
       \sum_{k=\underline K + 1}^{\overline K} u_k &= C\sqrt{(2+\theta)\mu_i}\left\{\int_{2^{-(\overline K + 1)}}^{2^{-(\underline K + 1)}} \sqrt{\log\mathcal{N}(\mathcal{F}_i^*, \epsilon, \|.\|_{\infty, \mathcal{B}_T})} d\epsilon + 1 + \log\left(\frac{1}{\eta} \right) \right\}
       \\
       &+ C\left\{\int_{2^{-(\overline K + 1)}}^{2^{-(\underline K + 1)}} \log\mathcal{N}(\mathcal{F}_i^*, \epsilon, \|.\|_{\infty, \mathcal{B}_T}) d\epsilon + 1 + \log\left(\frac{1}{\eta} \right)\right\}.
    \end{aligned}
\end{equation*}

\textit{Step 5: Union bound over the good events.}
By Lemma \ref{lem:traj-ball} in Appendix \ref{app:appendix-dynamics}, and setting a probability threshold of at most $\eta/3$ to the escape event and the visit probability respectively, we obtain the conditions
\begin{equation}\label{eq:intersect-gd-events}
    R_T > \sqrt{\frac{1}{\alpha}\log\left(\frac{3TC_{\rho, B}}{\eta} \right)} \quad \text{and} \quad  \mu_i \geq \log\left(\frac{3}{\eta} \right)\frac{2(1 + \theta/3)}{\theta^2},
\end{equation}
where the first condition in the last display ensures that $\mathbb{P}(S_T^c) \leq \eta/3$. By the union bound, when \eqref{eq:intersect-gd-events} are satisfied, we have with probability at least $1 - \eta$,
\begin{equation*}
    \begin{aligned}
        A_2 &\leq C\sqrt{(2+\theta)\mu_i}\left\{\int_{2^{-(\overline K + 1)}}^{2^{-(\underline K + 1)}} \sqrt{\log\mathcal{N}(\mathcal{F}_i^*, \epsilon, \|.\|_{\infty, \mathcal{B}_T})} d\epsilon + 1 + \log\left(\frac{1}{\eta} \right) \right\}
       \\
       &+ C\left\{\int_{2^{-(\overline K + 1)}}^{2^{-(\underline K + 1)}} \log\mathcal{N}(\mathcal{F}_i^*, \epsilon, \|.\|_{\infty, \mathcal{B}_T}) d\epsilon + 1 + \log\left(\frac{1}{\eta} \right)\right\}.
    \end{aligned}
\end{equation*}
The proof is now complete.
\subsection{Proof of Lemma \ref{lem:final}} \label{subsec:proof_final_martoffset}
Let $\Delta_{t}(f) := 4\langle \varepsilon_{t+1}, f(X_t)\rangle_2 - (\|f(X_t) \|_2^2/2)$. Following similar arguments as Lemma \ref{lem:chaining}, we have 
\begin{equation*}
    \begin{aligned}
        \mathbb{P}\left(A_3 > w \right) &= \mathbb{P}\left(\sup_{f \in \mathcal{F}_i^*} \sum_{t \in \mathcal{I}_i}\Delta_{t}(\pi_{\underline K}(f) ) > w \right) \\
        &\leq \mathbb{P}\left(\left\{\sup_{f \in \mathcal{F}_i^*} \sum_{t \in \mathcal{I}_i} \Delta_{t}(\pi_{\underline K}(f) ) > w \right\} \cap \left\{n_i < J \right\} \right) + \mathbb{P}\left(n_i \geq J\right) \\
        &=: \mathfrak{p}_{31} + \mathfrak{p}_{32}. 
    \end{aligned}
\end{equation*}

Intersecting with the event $\mathcal{S}_T$, we have 
\begin{equation*}
    \begin{aligned}
        \mathfrak{p}_{31} &\leq \mathbb{P}\left(\left\{\sup_{f \in \mathcal{F}_i^*} \sum_{t \in \mathcal{I}_i} \Delta_{t}(\pi_{\underline K}(f)) > w\right\} \cap \left\{n_i < J  \right\} \cap \mathcal{S}_T \right) + \mathbb{P}\left(\mathcal{S}_T^c \right) \\
        &=: \mathfrak{p}_{311} + \mathfrak{p}_{312}. 
    \end{aligned}
\end{equation*}

Define the truncated processes
\begin{equation*}
    \overline \Delta_{\ell}(f) := \left[4\langle \varepsilon_{\tau_{\ell} + 1}, f(X_{\tau_{\ell}})\rangle_2 -  
    \frac{\|f(X_{\tau_{\ell}})\|_2^2}{2}
     \right]\mathbf{1}\left\{X_{\tau_{\ell}} \in \mathcal{B}_T \right\},
\end{equation*}
and
\begin{equation*}
\begin{aligned}
        \overline S_m(f) &:= \sum_{\ell=1}^m \overline \Delta_{\ell}(f).
\end{aligned}
\end{equation*}
Again, we can ignore the boundary term $n_i = 0$, since $\mathbb{P}(\{A_3 > w \} \cap \{n_i = 0\} \cap \mathcal{S}_T) = 0$. On the event $\{n_i < J\} \cap \mathcal{S}_T$, $n_i$ is an integer from $\{1, \dots J-1\}$. Moreover, all the indicators in the truncated process evaluate to 1. Thus, we have on $\{n_i < J\} \cap \mathcal{S}_T$ that
\begin{equation*}
    \begin{aligned}
        \sup_{f \in \mathcal{F}_i^*}\sum_{t \in \mathcal{I}_i}\Delta_t(\pi_{\underline K}(f) ) &\leq \sup_{f \in \mathcal{F}_i^*}\max_{1 \leq m \leq J-1}\sum_{\ell=1}^m \overline \Delta_{\ell}(\pi_{\underline K}(f)) \\
        &= \sup_{f \in \mathcal{F}_i^*}\max_{1 \leq m \leq J-1}\overline S_m(\pi_{\underline K}(f)).
    \end{aligned}
\end{equation*}
By the union bound, we have 
\begin{equation*}
    \begin{aligned}
        \mathfrak{p}_{311} &\leq \mathbb{P}\left(\sup_{f \in \mathcal{F}_i^*}\max_{1 \leq m \leq J-1}\overline S_m(\pi_{\underline K}(f) ) > w \right) \\
        &\leq \mathcal{N}\left(\mathcal{F}_i^*, 2^{-\underline K}, \|.\|_{\infty, \mathcal{B}_T} \right)\mathbb{P}\left(\max_{1 \leq m \leq J-1}\overline S_m(\pi_{\underline K}(f) ) > w  \right).
    \end{aligned}
\end{equation*}
Since the maximum of the exponential is equals to the exponential of the maximum, we have for any $\xi > 0$
\begin{equation*}
    \begin{aligned}
        \mathbb{P}\left(\max_{1 \leq m \leq J-1}\overline S_m(\pi_{\underline K}(f) ) > w  \right) &= \mathbb{P}\left(\max_{1 \leq m \leq J-1}\sum_{\ell=1}^m \overline \Delta_{\ell}(\pi_{\underline K}(f)) > w \right) \\
        &= \mathbb{P}\left(\exp\left\{\max_{1 \leq m \leq J-1}\xi \sum_{\ell=1}^m \overline \Delta_{\ell}(\pi_{\underline K}(f)) \right\} > \exp\left\{\xi w \right\} \right) \\
        &= \mathbb{P}\left(\max_{1 \leq m \leq J-1}\exp\left\{\xi \sum_{\ell=1}^m \overline \Delta_{\ell}(\pi_{\underline K}(f)) \right\} > \exp\left\{\xi w\right\} \right).
    \end{aligned}
\end{equation*}
We will apply Ville's inequality to the probability in the last display (recalled in Theorem \ref{thm:ville} of Appendix \ref{app:appendix-prob}). In order to do so, we need to show that the exponential term on the LHS is a non-negative supermartingale. Since $X_{\tau_{\ell}}$ is $\mathcal{F}_{\tau_{\ell}}$-measurable, we have 
\begin{equation}\label{eq:xi-supermart-cond}
    \begin{aligned}
        \mathbb{E}\left[\exp\left\{\xi \overline \Delta_{\ell}(\pi_{\underline K}(f))\right\}  \mid \mathcal{F}_{\tau_{\ell}} \right]  &= \exp\left\{-\frac{\xi}{2} \left\|\pi_{\underline K}(f)(X_{\tau_{\ell}}) \right\|_2^2 \mathbf{1}\left\{X_{\tau_{\ell}} \in \mathcal{B}_T \right\} \right\} \\ &\times \mathbb{E}\left[\exp\left\{4\xi \langle \varepsilon_{\tau_{\ell} + 1}, \pi_{\underline K}(f)(X_{\tau_{\ell}} \rangle_2 \mathbf{1}\left\{X_{\tau_{\ell}} \in \mathcal{B}_T \right\} \right\} \mid \mathcal{F}_{\tau_{\ell}} \right] \\
        &\leq \exp\left\{-\frac{\xi}{2} \left\|\pi_{\underline K}(f)(X_{\tau_{\ell}}) \right\|_2^2 \mathbf{1}\left\{X_{\tau_{\ell}} \in \mathcal{B}_T \right\} \right\} \\
        &\times \exp\left(8\xi^2 \sigma^2 \|\pi_{\underline K}(f)(X_{\tau_{\ell}}) \|_2^2 \mathbf{1}\left\{X_{\tau_{\ell}} \in \mathcal{B}_T \right\} \right) \\
        &= \exp\left\{\|\pi_{\underline K}(f)(X_{\tau_{\ell}}) \|_2^2 \mathbf{1}\left\{X_{\tau_{\ell}} \in \mathcal{B}_T \right\} \left[8\xi^2\sigma^2 - \frac{\xi}{2} \right]\right\} \\
        &\leq 1, \qquad \forall \xi \leq (16\sigma^2)^{-1},
    \end{aligned}
\end{equation}
using the property that $\varepsilon$ is a $\sigma^2$ sub-gaussian random vector. This implies the supermartingale property, since 
\begin{equation*}
    \begin{aligned}
        \mathbb{E}\left[\exp\left\{\xi \sum_{\ell=1}^m \overline \Delta_{\ell}(\pi_{\underline K}(f)) \right\} \mid \mathcal{F}_{\tau_{m}} \right] &= \exp\left\{\xi \sum_{\ell=1}^{m-1} \overline \Delta_{\ell}(\pi_{\underline K}(f)) \right\}\mathbb{E}\left[\exp\left\{\xi \Delta_m(\pi_{\underline K}(f)) \right\} \mid \mathcal{F}_{\tau_m} \right] \\
        &\leq \exp\left\{\xi \sum_{\ell=1}^{m-1} \overline \Delta_{\ell}(\pi_{\underline K}(f)) \right\}.
    \end{aligned}
\end{equation*}
Applying Ville's inequality, we have by \eqref{eq:xi-supermart-cond} 
\begin{equation*}
    \begin{aligned}
        \mathbb{P}\left(\max_{1 \leq m \leq J-1}\exp\left\{\xi \sum_{\ell=1}^m \overline \Delta_{\ell}(\pi_{\underline K}(f)) \right\} > \exp\left\{\xi w\right\} \right) &\leq \frac{\mathbb{E}\left[\exp\left\{\xi \overline \Delta_1(\pi_{\underline K}(f) ) \right\}\right]}{\exp\left\{\xi w \right\}} \\
        &\leq  \exp\left(-\xi w\right).
    \end{aligned}
\end{equation*}
Choosing $\xi = (16\sigma^2)^{-1}$, we obtain
\begin{equation*}
    \mathfrak{p}_{311} \leq \mathcal{N}(\mathcal{F}_i^*, 2^{-\underline K}, \|.\|_{\infty, \mathcal{B}_T})\exp\left(-\frac{w}{16\sigma^2} \right).
\end{equation*}

Choosing $J = \lfloor (1+\theta)\mu_i\rfloor + 1$, and assigning the probability of at most $\eta/3$ to $\mathfrak{p}_{32}$ and $\mathfrak{p}_{312}$ respectively, we obtain under the same conditions as in \eqref{eq:intersect-gd-events}, with probability at least $1 - \eta$, 
\begin{equation*}
    A_3 \leq 16\sigma^2\left[\log\mathcal{N}(\mathcal{F}_i^*, 2^{-\underline K}, \|.\|_{\infty, \mathcal{B}_T}) + \log\left(\frac{3}{\eta} \right) \right].
\end{equation*}
\subsection{Proof of Lemma \ref{lem:coarse_martoffset}} \label{subsec:proof_coarse_martoffset}

Let us denote
\begin{equation*}
    \mathcal{E}_4 := \left\{\sup_{f \in \mathcal{F}_i^*} \sum_{t \in \mathcal{I}_i} \left\|\pi_{\underline K}(f)(X_t) - f(X_t) \right\|_2^2 >w \right\}.
\end{equation*}
Again, we have 
\begin{equation*}
    \begin{aligned}
        \mathbb{P}(\mathcal{E}_4) &\leq \mathbb{P}(\mathcal{E}_4 \cap \mathcal{S}_T ) + \mathbb{P}(\mathcal{S}_T^c) \\
        &=: \mathfrak{p}_{41} + \mathfrak{p}_{42}. 
    \end{aligned}
\end{equation*}
On the event $\mathcal{S}_T$, we have
\begin{equation*}
    \begin{aligned}
     \sup_{f \in \mathcal{F}_i^*} \sum_{t \in \mathcal{I}_i} \|\pi_{\underline K}(f)(X_t) - f(X_t)\|_2^2 &\leq 
     \sup_{f \in \mathcal{F}_i^*}\sum_{t \in \mathcal{I}_i} \|\pi_{\underline K}(f)(X_t) - f(X_t)\|_{\infty, \mathcal{B}_T}^2 \\
        &\leq n_i 2^{-2\underline K}.
    \end{aligned}
\end{equation*}
Thus, we have by the standard Chernoff bounds for binomial random variables
\begin{equation*}
    \begin{aligned}
        \mathfrak{p}_{41} &\leq \mathbb{P}(n_i 2^{-2\underline K} > w) \\
        &\leq \mathbb{P}(\left\{n_i 2^{-2\underline K} > w \right\} \cap \left\{n_i \leq (1+\theta)\mu_i \right\}) + \mathbb{P}(n_i > (1+\theta)\mu_i) \\
        &\leq \mathbf{1}\left\{(1+\theta)\mu_i 2^{-2\underline K} > w \right\} + \exp\left(-\frac{\theta^2 \mu_i}{2(1 + \theta/3)} \right).
    \end{aligned}
\end{equation*}
By assigning a maximum probability of $\eta/2$ to the binomial term and the escape event respectively, and setting $w = (1+\theta)\mu_i 2^{-2\underline K}$, we have under the conditions 
\begin{equation*}
    R_T > \sqrt{\frac{1}{\alpha}\log\left(\frac{2TC_{\rho, B}}{\eta} \right)} \quad \text{and} \quad \mu_i \geq \log\left(\frac{2}{\eta} \right)\frac{2(1 + \theta/3)}{\theta^2},
\end{equation*}
it holds with probability at least $1 - \eta$ that
\begin{equation*}
    A_4 \leq (1+\theta)\mu_i 2^{-2\underline K} 
\end{equation*}

\section{Proof of Proposition \ref{prop: ulln-master}} \label{sec:proof_prop_ulln}
%
%
%
In what follows, we first restrict ourselves to the sphere, before closing the gap by generalizing to every function outside the sphere.

\textit{Step 1: Reduction to an additive gap with truncation}. 

Denote $\gapf := \ltrans(f) - \lemp(f)$. On the sphere, we have $\ltrans(f) = p_i \|f\|^2_{L^2( \mu)} = p_i r^2$. We will prove the equivalent statement
\begin{equation*}
    \mathbb{P}\left(\sup_{f \in \mathcal{F}^*_{i, r}} G_{T,i}(f) \leq \frac{p_i r^2}{2}  \right) \geq 1 - \delta,
\end{equation*}
since 
\begin{equation*}
\begin{aligned}
    \mathbb{P}\left(\forall f \in \mathcal{F}^*_{i, r}: \lemp(f) > \frac{\ltrans(f)}{2} \right) &= \mathbb{P}\left(\bigcap_{f \in \mathcal{F}^*_{i,r}} \left\{\lemp(f) >  \frac{\ltrans(f)}{2} \right\} \right) \\
    &= \mathbb{P}\left(\bigcap_{f \in \mathcal{F}^*_{i,r}} \left\{\lemp(f) > \frac{p_i r^2}{2} \right\} \right) \\
    &= \mathbb{P}\left(\inf_{f \in \mathcal{F}^*_{i, r}} \lemp(f) > \frac{p_i r^2}{2} \right) \\
    &= \mathbb{P}\left(\inf_{f \in \mathcal{F}^*_{i, r}} \left[\lemp(f) - \ltrans(f) + \ltrans(f) \right] > \frac{p_i r^2}{2} \right) \\
    &= \mathbb{P}\left(\sup_{f \in \mathcal{F}^*_{i, r}} \left[\ltrans(f) - \lemp(f)\right] \leq \frac{p_i r^2}{2}  \right) \\
    &= \mathbb{P}\left(\sup_{f \in \mathcal{F}^*_{i, r}} G_{T,i}(f) \leq \frac{p_i r^2}{2}  \right).
\end{aligned}
\end{equation*}

Let 
\begin{equation*}
    \mathcal{E} := \left\{\sup_{f \in \mathcal{F}_{i,r}^* } \gapf > \frac{p_i r^2}{2} \right\},
\end{equation*}
and for some $R_T > 0$ recall the safe event
\begin{equation*}
    \mathcal{S}_T = \left\{\max_{0 \leq t \leq T-1} \|X_t\|_2 \leq R_T \right\}.
\end{equation*}
Intersecting with $\mathcal{E}$ with $\mathcal{S}_T$, we have 
\begin{equation*}
    \begin{aligned}
        \mathbb{P}(\mathcal{E}) \leq \mathbb{P}(\mathcal{E} \cap \mathcal{S}_T) + \mathbb{P}(\mathcal{S}_T^c).
    \end{aligned}
\end{equation*}

Recall that on the event $\mathcal{S}_T$, we have $\|f(X_t)\|_2 \leq M_T$ for all $t = 0, \dots, T-1$, which implies that $f(X_t) = \overline f(X_t)$, since no truncation occurs, and we can readily interchange $f$ with $\overline f$ in the empirical loss. Then intercalating by $\ltrans(\overline f)$, we have
\begin{equation*}
    \begin{aligned}
        \gapf &= \left[\ltrans(\overline f) - \lemp(\overline f) \right] +
        \left[\ltrans(f) - \ltrans(\overline f) \right]
       \\&= \gapof + \left[\ltrans(f) - \ltrans(\overline f)\right] \\
       &= \gapof + \left[\ltrans(f) - \ltrans(T_{M_T}[f])\right].
    \end{aligned}
\end{equation*}
Thus, we have on $\mathcal{S}_T$
\begin{equation*}
    \sup_{f \in \mathcal{F}_{i,r}^* } \gapf \leq \sup_{\overline f \in \overline{\mathcal{F}}_r} \gapof + \sup_{f \in \mathcal{F}_{i,r}^* }\left[\ltrans(f) - \ltrans(T_M[f])\right].
\end{equation*}
For an event $A$, denote $\PS(A) := \mathbb{P}(A \cap \mathcal{S}_T)$. By the last display, we have 
\begin{equation}\label{eq:step1_ullnproof_prop}
    \begin{aligned}
        \mathbb{P}(\mathcal{E} \cap \mathcal{S}_T) \leq \PS\left(\sup_{\overline f \in \overline{\mathcal{F}}_r}  G_{T,i}(\overline f) > \frac{p_i r^2}{4} \right) + \PS \left(\sup_{f \in \mathcal{F}_{i,r}^*}\left\{\ltrans(f) - \ltrans(T_{M_T}[f]) \right\} > \frac{p_i r^2}{4} \right). 
    \end{aligned}
\end{equation}

\textit{Step 2: Chaining setup.}

Let $\{\pi_0(\overline f), \dots, \pi_{\overline K}(\overline f)\}$ be a cover of $\overline{\mathcal{F}}_r$ with respect to the $\|.\|_{\infty, \mathcal{B}_T}$ norm, where for every $\overline{f} \in \overline{\mathcal{F}}_r$, there exists an element $\pi_k$ such that such that $\|\pi_k(\overline f) - \overline f \|_{\infty, \mathcal{B}_T} \leq \epsilon_k =r2^{-k}$. Using a telescoping sum argument, we have 
\begin{equation*}
    \begin{aligned}
        \gap(\overline f) = \gap(\pi_0 (\overline f)) + \sum_{k=1}^{\overline K}\gap(\pi_k(\overline f)) - \gap(\pi_{k-1}(\overline f)) + \gap(\overline f) - \gap(\pi_{\overline K}(\overline f))
    \end{aligned}
\end{equation*}
By the union bound, we have for $t_1 = t_0 + \sum_{k=1}^{\overline K} t_k + \overline t$, 
\begin{equation*}
    \begin{aligned}
\PS\left(\sup_{\overline f \in \overline{\mathcal{F}}_r} \gap(\overline f) > t_1 \right) &\leq \PS\left(\sup_{\overline f \in \overline{\mathcal{F}}_r} \gap(\pi_0(\overline f)) > t_0 \right) \\ &+ \PS\left(\sup_{\overline f \in \overline{\mathcal{F}}_r} \sum_{k=1}^{\overline K} \left[\gap(\pi_k(\overline f)) - \gap(\pi_{k-1}(\overline f))\right] > \sum_{k=1}^{\overline K} t_k\right) \\
&+ \PS\left(\sup_{\overline f \in \overline{\mathcal{F}}_r} \left[\gap(\overline f) - \gap(\pi_{\overline K}(\overline f))\right] > \overline t \right).
    \end{aligned}
\end{equation*}

\textit{Step 3: Applying the chaining Lemmas.}
The terms above are bounded by Lemmas \ref{lem:anchor-ulln}, \ref{lem:telescope-ulln}, \ref{lem:approx-ulln} respectively.
\begin{lemma}\label{lem:anchor-ulln}
Suppose the conditions of Proposition \ref{prop: ulln-master} are satisfied, and adopt the same notation. Then for $\delta \in (0, 1)$, it holds with probability at least $1 - \delta$
\begin{equation*}
      \sup_{\overline f \in \overline{\mathcal{F}}_r} \gap(\pi_0(\overline f)) \leq 4M_T r \sqrt{\frac{Sp_i}{T}}\left\{\sqrt{\log \mathcal{N}(\overline{\mathcal{F}}_r, \epsilon_0, \|.\|_{\infty, \mathcal{B}_T})} + \sqrt{\log(1/\delta)} \right\}.
\end{equation*}
\end{lemma}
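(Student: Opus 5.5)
The plan is to run a finite union bound over the coarsest net and to control each fixed element by the $S$-persistence inequality of Theorem~\ref{thm:S-persis}, applied to the Markov chain $Z_t=(X_t,s(t+1))$.

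\textbf{Finite union bound.} The map $\overline f\mapsto\pi_0(\overline f)$ takes at most $N_0:=\mathcal{N}(\overline{\mathcal{F}}_r,\epsilon_0,\|.\|_{\infty,\mathcal{B}_T})$ values. Each value $g$ is (up to replacing the cover by an internal one) bounded by $M_T$ on $\mathcal{B}_T$, and can be taken to satisfy $\|g\|_{L^2(\mu_T)}\lesssim r$, since $\epsilon_0=r$ and $\overline f$ has $L^2(\mu_T)$ norm at most $r$. The truncation only decreases norms, so $\|\overline f\|_{L^2(\mu_T)}\le r$. Hence it suffices to show that for each fixed such $g$,
\[
\mathbb{P}\Big(G_{T,i}(g)>u\Big)\le \exp\Big(-\frac{T u^2}{C M_T^2 r^2 S p_i}\Big),
\]
and then take $u=4M_Tr\sqrt{Sp_i/T}\,\{\sqrt{\log N_0}+\sqrt{\log(1/\delta)}\}$. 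This choice makes $N_0e^{-(\cdot)}\le\delta$, using $(a+b)^2\ge a^2+b^2$.

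\textbf{Lower tail for a fixed $g$.} Write $h(z)=\|g(x)\|_2^2\mathbf{1}\{s=i\}$, or rather the bounded function $\bar h=\min(h,M_T^2)$ to respect the boundedness required in Theorem~\ref{thm:S-persis}. Note that $\lemp(g)\ge \frac1T\sum_t\bar h(Z_t)$, so a lower deviation of the truncated sum suffices. For the gap we also need $\mathbb{E}\bar h$ close to $\ltrans(g)$; this discrepancy is absorbed since $g$ is bounded by $M_T$ on $\mathcal{B}_T$. Apply Chernoff with Theorem~\ref{thm:S-persis} to $\sqrt{\bar h}$ on the chain $Z_t$:
\[
\mathbb{P}\Big(\sum_t(\mathbb{E}\bar h(Z_t)-\bar h(Z_t))>Tu\Big)\le \exp\Big(-\xi Tu+\frac{\xi^2S}{2}\sum_t\mathbb{E}\bar h(Z_t)^2\Big).
\]
Bound $\mathbb{E}\bar h^2\le M_T^2\,\mathbb{E}\bar h\le M_T^2 p_i\|g\|^2_{L^2(\mu_T)}$ summed over $t$, which is $\le TM_T^2p_ir^2$. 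The independence of $s(t+1)$ from $X_t$ is what produces the factor $p_i$. Optimizing over $\xi$ gives $\exp(-Tu^2/(2SM_T^2p_ir^2))$, which yields the displayed form.

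\textbf{Main obstacle.} The delicate step is the bookkeeping between $g$, its truncation, and the localized sup norm. The cover is only controlled on $\mathcal{B}_T$, while $\ltrans$ integrates over all of $\mathbb{R}^d$. I would first try to keep the net inside $\overline{\mathcal{F}}_r$, so that each $g$ is globally bounded by $M_T$ and has $L^2(\mu_T)$ norm at most $r$; then Theorem~\ref{thm:S-persis} applies to $\|g\|_2^2\mathbf{1}\{s=i\}$ directly, with no truncation discrepancy. A net inside the set costs at most a factor-two change in radius, which is absorbed into the constant $4$. The probability is stated unconditionally, so no intersection with $\mathcal{S}_T$ is needed here; that intersection is handled in the master proof.
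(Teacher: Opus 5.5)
\textbf{Verdict.} Your argument is correct, and it takes a somewhat cleaner route than the paper's.

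\textbf{How the paper proceeds.} The paper proves the bound only on $\mathcal{S}_T$. It localizes to $Z_t=W_t\mathbf{1}\{X_t\in\mathcal{B}_T\}$, with $W_t=\|\pi_0(\overline f)(X_t)\|_2^2\mathbf{1}\{s(t+1)=i\}$, and applies Theorem~\ref{thm:S-persis} to these localized variables. It then gets the variance proxy by transferring the $L^2(\mu_T)$ information from $\overline f$ to the net element through sup-norm closeness on the ball: $\|\pi_0(\overline f)(x)\|_2^2\le 2\epsilon_0^2+2\|\overline f(x)\|_2^2$ for $x\in\mathcal{B}_T$. With $\epsilon_0=r$ this gives $\frac1T\sum_t\mathbb{E}[Z_t^2]\le 4M_T^2p_ir^2$, hence the constant $2\sqrt2$ before relaxing to $4$.

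\textbf{How your route differs.} You use directly that an internal net element $g$ lies in $\overline{\mathcal{F}}_r$. So $g$ is globally bounded by $M_T$ and satisfies $\|g\|_{L^2(\mu_T)}\le r$ outright. As a result:
\begin{itemize}
\item no localization and no intersection with $\mathcal{S}_T$ are needed;
\item the approximation radius of $\pi_0$ plays no role;
\item the variance proxy is $M_T^2p_ir^2$, giving an unconditional tail with constant $\sqrt2$.
\end{itemize}
Your route also sidesteps a slip in the paper's localization. The paper equates $\mathbb{E}\bigl[\exp\bigl(\tfrac{\xi}{T}\sum_t(\mathbb{E}W_t-W_t)\bigr)\mathbf{1}_{\mathcal{S}_T}\bigr]$ with the same expression written in $Z_t$. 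On $\mathcal{S}_T$ only the random parts agree, and $\mathbb{E}W_t\ge\mathbb{E}Z_t$. That step therefore silently drops a small bias term coming from the mass of $\mu_T$ outside $\mathcal{B}_T$. Your proof has no such term, because $\lemp$ and $\ltrans$ are compared for the same bounded function on all of $\mathbb{R}^d$.

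\textbf{Two caveats on the write-up.}
\begin{itemize}
\item The internal net is essential, not a convenience. The justification in your first paragraph (closeness at scale $\epsilon_0=r$ plus $\|\overline f\|_{L^2(\mu_T)}\le r$) would fail for an external net, because $\|\cdot\|_{\infty,\mathcal{B}_T}$ ignores $\mathbb{R}^d\setminus\mathcal{B}_T$ and so gives no control of $\|g-\overline f\|_{L^2(\mu_T)}$. For the same reason, the truncation discrepancy $\mathbb{E}[h-\bar h]$ would not be ``absorbed'' in that case without an extra tail term. State the internal-net choice up front and drop the $\bar h$ detour.
\item The factor two incurred by an internal net is not absorbed into the constant $4$. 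It is harmless for a different reason: your proof never uses the approximation radius of $\pi_0$. You may therefore take an internal $2\epsilon_0$-net with at most $\mathcal{N}(\overline{\mathcal{F}}_r,\epsilon_0,\|\cdot\|_{\infty,\mathcal{B}_T})$ elements; only the constants in the first link of Lemma~\ref{lem:telescope-ulln} change. Alternatively, adopt the paper's implicit convention that covers are internal.
\end{itemize}
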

\begin{lemma}\label{lem:telescope-ulln}
    Suppose the conditions of Proposition \ref{prop: ulln-master} are satisfied, and adopt the same notation. Then for $\delta \in (0, 1)$, it holds with probability at least $1 - \delta$
    \begin{equation*}
\begin{aligned}
    &\sup_{f \in \overline{\mathcal{F}_r}} \sum_{k=1}^{\overline K} \gap(\pi_k(\overline f)) - \gap(\pi_{k-1}(\overline f)) \leq \\  &10M_T \sqrt{\frac{Sp_i}{T}}\left\{\int_{r2^{-(\overline K + 1)}}^{r/2} \sqrt{\log\mathcal{N}(\overline{\mathcal{F}}_r, \epsilon, \|.\|_{\infty, \mathcal{B}_T})} d\epsilon + r + r\sqrt{\log(1/\delta)}  \right\}.
\end{aligned}
\end{equation*}
\end{lemma}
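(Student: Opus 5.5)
The plan is standard chaining at scales $\epsilon_k = r2^{-k}$, $k = 1,\dots,\overline K$, with a per-link concentration bound supplied by the $S$-persistence inequality of Theorem \ref{thm:S-persis}. We apply it to the Markov chain $Z_t = (X_t, s(t+1))$ with the dependency matrix $\Gamma_Z$.

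\textbf{Per-link bound.} Fix a link $(g,h) = (\pi_k(\overline f), \pi_{k-1}(\overline f))$. The increment is
\begin{equation*}
\gap(g) - \gap(h) = \frac{1}{T}\sum_{t=0}^{T-1}\Big(\mathbb{E}[\psi(Z_t)] - \psi(Z_t)\Big), \qquad \psi(x,s) := \big(\|g(x)\|_2^2 - \|h(x)\|_2^2\big)\mathbf{1}\{s = i\}.
\end{equation*}
Since every element of $\overline{\mathcal{F}}_r$ is bounded by $M_T$, and we work on $\mathcal{S}_T$ (so only $x \in \mathcal{B}_T$ matters, and we may replace $g,h$ by their restrictions to $\mathcal{B}_T$), we get
\begin{equation*}
|\psi| \le \|g-h\|_2\,\|g+h\|_2\,\mathbf{1}\{s=i\} \le 2M_T \cdot 3\epsilon_k\,\mathbf{1}\{s=i\}.
\end{equation*}

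Theorem \ref{thm:S-persis} is stated for $\|f\|_2^2$ with a negative sign. Samson's inequality (3.12) also controls the upper deviation of a general bounded additive functional, with variance proxy $S\sum_t \mathbb{E}[\psi(Z_t)^2]$. Here
\begin{equation*}
\sum_t \mathbb{E}[\psi(Z_t)^2] \le T p_i (6M_T\epsilon_k)^2,
\end{equation*}
because $s(t+1)$ is independent of $X_t$ and equals $i$ with probability $p_i$. Samson's bound then yields a sub-Gaussian tail
\begin{equation*}
\mathbb{P}\big(\gap(g)-\gap(h) > u\big) \le \exp\!\Big(-\frac{T u^2}{c\,S p_i M_T^2 \epsilon_k^2}\Big).
\end{equation*}
If only the one-sided form of Theorem \ref{thm:S-persis} is available, we write the difference of squares as a combination of squared norms of bounded functions and apply the inequality to each part. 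This is the step I expect to be the main obstacle: getting a two-sided, genuinely sub-Gaussian (not Bernstein) tail with variance factor $Sp_i$ rather than $S$, while keeping the indicator $\mathbf{1}\{s=i\}$ inside the chain.

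\textbf{Union bound and summation.} There are at most $\mathcal{N}^2(\overline{\mathcal{F}}_r,\epsilon_k,\|.\|_{\infty,\mathcal{B}_T})$ links at level $k$. We give level $k$ the budget $\delta_k = \delta 2^{-k}$ and set
\begin{equation*}
t_k = c M_T \epsilon_k\sqrt{\tfrac{Sp_i}{T}}\Big(\sqrt{2\log\mathcal{N}(\overline{\mathcal{F}}_r,\epsilon_k,\|.\|_{\infty,\mathcal{B}_T})} + \sqrt{k\log 2} + \sqrt{\log(1/\delta)}\Big).
\end{equation*}
Summing over $k = 1,\dots,\overline K$ gives three contributions:
\begin{itemize}
\item Using $\epsilon_k = 2(\epsilon_k - \epsilon_{k+1})$ and monotonicity of entropy, the entropy terms are bounded by the integral $\int_{r2^{-(\overline K+1)}}^{r/2}\sqrt{\log\mathcal{N}}\,d\epsilon$.
\item $\sum_k r2^{-k}\sqrt{k} \lesssim r$.
\item $\sum_k r2^{-k}\sqrt{\log(1/\delta)} \le r\sqrt{\log(1/\delta)}$.
\end{itemize}
Tracking the numerical constants, these should give the prefactor $10M_T\sqrt{Sp_i/T}$. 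Since $\sum_k\delta_k \le \delta$, the bound holds simultaneously for all links with probability at least $1-\delta$, restricted to $\mathcal{S}_T$ as in \eqref{eq:step1_ullnproof_prop}. This concludes the proof.
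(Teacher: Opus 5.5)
\textbf{The per-link tail step you flagged is a genuine gap, and neither of your routes closes it.} The rest of your chaining skeleton matches the paper's: scales $\epsilon_k=r2^{-k}$, at most $\mathcal{N}^2$ links per level, budget $\delta 2^{-k}$, the entropy sum turned into the integral, and geometric sums for the $r$ and $r\sqrt{\log(1/\delta)}$ terms.

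Theorem \ref{thm:S-persis} is a \emph{lower}-tail bound for sums of \emph{nonnegative} bounded functionals. Its sub-Gaussian form with a second-moment proxy rests on nonnegativity; for independent data it reduces to $e^{-x}\le 1-x+x^2/2$ for $x\ge 0$. No bound of the form you assert (sub-Gaussian with proxy $S\sum_t\mathbb{E}[\psi(Z_t)^2]$) holds for general bounded signed $\psi$. Take independent data ($S=1$) and $\psi=-a\mathbf{1}\{s=i\}$, so the increment equals $a(n_i/T-p_i)$. At $u=a/2$ its tail is binomial, of order $\exp(-\frac{T}{2}\log\frac{1}{4p_i})$ up to polynomial factors. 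This is far larger than your claimed $\exp(-T/(4cp_i))$ once $p_i$ is small. What is genuinely available is one of two things:
\begin{itemize}
\item a Bernstein-type bound, which has an extra range term of order $M_T\epsilon_k u$ in the denominator;
\item a bounded-differences (Hoeffding-type) bound, which replaces $Sp_i$ by $S$ in your exponent and so loses the factor $p_i$.
\end{itemize}
Your fallback, applying the one-sided inequality to $\sum_t\|g(X_t)\|_2^2\mathbf{1}\{s(t+1)=i\}$ and $\sum_t\|h(X_t)\|_2^2\mathbf{1}\{s(t+1)=i\}$ separately, fails for two reasons. First, each piece has variance proxy of order $p_iM_T^2r^2$ (the size of the functions near the sphere) rather than $p_iM_T^2\epsilon_k^2$, so the decay that makes chaining work disappears. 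Second, the $h$-piece enters through its \emph{upper} tail, which Theorem \ref{thm:S-persis} does not control. Splitting $\psi=\psi^+-\psi^-$ instead keeps the $\epsilon_k$ scale, but leaves the same upper-tail problem for $\psi^-$.

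\textbf{The paper's device does not resolve this either.} The paper bounds $\|\pi_k(\overline f)\|_2^2-\|\pi_{k-1}(\overline f)\|_2^2\le 2M_T\|\pi_k(\overline f)-\pi_{k-1}(\overline f)\|_2$. It then replaces the increment by $\frac{2M_T}{T}\sum_t(\mathbb{E}[h_t]-h_t)$ with $h_t\ge 0$, and applies Theorem \ref{thm:S-persis} to $g_t=h_t\mathbf{1}\{X_t\in\mathcal{B}_T\}$ with $\mathbb{E}[g_t^2]\le 10p_i\epsilon_k^2$. Do not treat this as the missing idea. Writing $\psi_t=\psi(Z_t)$, the bound $|\psi_t|\le 2M_Th_t$ only gives $\mathbb{E}[\psi_t]-\psi_t\le 2M_T(\mathbb{E}[h_t]+h_t)$, not $2M_T(\mathbb{E}[h_t]-h_t)$. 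So the paper's reduction fails at exactly the point you identified.

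A sound version needs a Bernstein-type upper-tail inequality for the chain. Let $L_k=2\log\mathcal{N}(\overline{\mathcal{F}}_r,\epsilon_k,\|.\|_{\infty,\mathcal{B}_T})+k\log 2+\log(1/\delta)$. The Bernstein bound adds a term of order $M_T\epsilon_kL_k/T$ per level. Summed over levels, this is a linear entropy integral alongside the square-root one, as in Lemma \ref{lem:chaining}. The purely sub-Gaussian form of the statement is recovered, up to constants and dependence factors, when $L_k\lesssim\mu_i$ at every level.

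\textbf{A smaller point.} Restricting $g,h$ to $\mathcal{B}_T$ localizes the empirical sum on $\mathcal{S}_T$, but not the centering $\mathbb{E}[\psi(Z_t)]$. You therefore also need to carry a deterministic bias of order $p_iM_T^2\max_t\mathbb{P}(\|X_t\|_2>R_T)$, which is negligible under Assumption \ref{ass:1}.
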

\begin{lemma}\label{lem:approx-ulln}
      Suppose the conditions of Proposition \ref{prop: ulln-master} are satisfied, and adopt the same notation. Then for $\delta \in (0, 1)$, it holds with probability at least $1 - \delta$
      \begin{equation*}
    \begin{aligned}
        \sup_{\overline f \in \overline{\mathcal{F}}_r}\left[ \gap(\overline f) - \gap(\pi_{\overline K}(\overline f))\right] \leq  \sqrt{2}M_T r2^{-\overline K} \sqrt{\frac{\log(1/\delta)}{T}}.
    \end{aligned}
\end{equation*}
\end{lemma}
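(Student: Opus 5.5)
The plan is to treat the fine-scale remainder as a bounded, essentially one-dimensional deviation. On $\mathcal{S}_T$ the difference $\gap(\overline f) - \gap(\pi_{\overline K}(\overline f))$ is an average of $T$ increments, each controlled pointwise by the fine-scale radius $r2^{-\overline K}$ times the truncation level $M_T$. A Hoeffding-type tail bound then gives the $\sqrt{\log(1/\delta)/T}$ factor.

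\emph{Step 1: reduce to an average of increments.} Write $g := \pi_{\overline K}(\overline f)$. Then
\begin{equation*}
\gap(\overline f) - \gap(g) = \frac{1}{T}\sum_{t=0}^{T-1}\Big(\mathbb{E}[h_t] - h_t\Big), \qquad h_t := \big(\|\overline f(X_t)\|_2^2 - \|g(X_t)\|_2^2\big)\mathbf{1}\{s(t+1)=i\}.
\end{equation*}

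\emph{Step 2: pointwise envelope.} Both $\overline f$ and $g$ take values of norm at most $M_T$, since $g$ may be chosen in $\overline{\mathcal{F}}_r$. Using $\big|\|a\|_2^2-\|b\|_2^2\big| \le \|a-b\|_2(\|a\|_2+\|b\|_2)$, on $\mathcal{B}_T$ we get
\begin{equation*}
|h_t| \le 2M_T\, r2^{-\overline K}.
\end{equation*}
Hence on $\mathcal{S}_T$ every summand is bounded by $2M_T r 2^{-\overline K}$. This is where working under $\PS$ matters: the bound only holds while the trajectory stays in $\mathcal{B}_T$.

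\emph{Step 3: concentration for one chain.} For a fixed pair $(\overline f, g)$, the centered sum $\sum_t(\mathbb{E}h_t - h_t)$ is a bounded functional of the chain $Z_t=(X_t,s(t+1))$. I would control it with an Azuma--Hoeffding argument in the form of Proposition \ref{prop:unif-pena}, or with a bounded-differences inequality for the Markov chain in the style of \cite{samson2000}. This gives a tail of the form
\begin{equation*}
\exp\!\Big(-\frac{T u^2}{2(M_T r2^{-\overline K})^2}\Big).
\end{equation*}
Setting it equal to $\delta$ yields $u = \sqrt{2}M_T r 2^{-\overline K}\sqrt{\log(1/\delta)/T}$, which is the stated threshold.

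\emph{Step 4: uniformity over $\overline f$ (main obstacle).} The supremum over $\overline f \in \overline{\mathcal{F}}_r$ is the hardest part. A union bound over the fine cover would add an entropy term that the stated bound does not contain, so I would instead apply the concentration directly to the supremum functional
\begin{equation*}
\Psi := \sup_{\overline f}\frac{1}{T}\sum_{t=0}^{T-1}\big(\mathbb{E}h_t - h_t\big).
\end{equation*}
Changing one coordinate of the chain moves $\Psi$ by at most $2M_T r2^{-\overline K}/T$, uniformly in $\overline f$, so the same bounded-difference inequality applies to $\Psi$. What remains is its mean. I would try to show $\mathbb{E}\Psi$ is nonpositive or negligible, for instance by centering at the same anchor $g$, or else absorb it into the chaining term $\Phi_2$ at scale $r2^{-\overline K}$. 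The dependence constant $S$ should enter only through the bounded-difference step, which is the reason the stated $\Phi_3$ carries no $S$ factor.
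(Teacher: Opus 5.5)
Your proposal does not reach the stated bound. Steps 3 and 4 each have a gap that cannot be closed as written.

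\textbf{Step 3.} The summand $\mathbb{E}[h_t]-h_t$ is centered only unconditionally. Since $h_t$ depends on $X_t$, which is driven by the past, $\mathbb{E}[h_t\mid\mathcal{F}_{t-1}]\neq\mathbb{E}[h_t]$, so Proposition~\ref{prop:unif-pena} does not apply. A Samson-type bounded-differences inequality for the chain $Z_t$ does apply, but it puts $S=\|\Gamma_Z\|_2^2$ into the exponent. Your own closing remark that $S$ ``should enter only through the bounded-difference step'' contradicts the $S$-free target. The constant is also not what Hoeffding gives. Because $|h_t|\le 2M_T r2^{-\overline K}$, the centered summands range over an interval of length $4M_T r2^{-\overline K}$. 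So even for independent data you get $\exp\{-Tu^2/(8M_T^2r^22^{-2\overline K})\}$, i.e.\ the threshold $2\sqrt2\,M_T r2^{-\overline K}\sqrt{\log(1/\delta)/T}$, not $\sqrt2$.

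\textbf{Step 4.} Each fixed-$\overline f$ sum has mean zero, so $\mathbb{E}\Psi\ge 0$, with equality only in degenerate cases. In general $\mathbb{E}\Psi$ is a genuine complexity term, controlled by the entropy integral of $\overline{\mathcal F}_r$ below scale $r2^{-\overline K}$. That integral diverges in the rough regime, which is exactly why $\gamma_f>0$ is kept. Absorbing it into $\Phi_2$ changes the statement, and concentrating $\Psi$ by bounded differences again needs independent coordinates or costs a factor $S$.

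\textbf{The paper's route, and why it does not work either.} The paper avoids your Step 4 entirely. It asserts that on $\mathcal{S}_T$ each increment is at most $2M_T r2^{-\overline K}\bigl(p_i-\mathbf{1}\{s(t+1)=i\}\bigr)$, which is free of $\overline f$ and of $X_t$. The supremum and the Markov dependence then disappear, and Hoeffding for the i.i.d.\ mode indicators (range $1$) gives exactly $\exp\{-\overline t^2T/(2M_T^2r^22^{-2\overline K})\}$. That is where the $\sqrt2$ and the absence of $S$ and entropy come from.

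This reduction runs two inequalities the wrong way:
\begin{itemize}
\item The bound $\|\overline f(X_t)\|_2^2-\|\pi_{\overline K}(\overline f)(X_t)\|_2^2\le 2M_T\|\overline f(X_t)-\pi_{\overline K}(\overline f)(X_t)\|_2$ is used to bound $-\bigl(\lemp(\overline f)-\lemp(\pi_{\overline K}(\overline f))\bigr)$ from above, which needs the reverse inequality.
\item The step $-U_t\le -r2^{-\overline K}\mathbf{1}\{s(t+1)=i\}$ is used, although in fact $U_t\le r2^{-\overline K}\mathbf{1}\{s(t+1)=i\}$.
\end{itemize}

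\textbf{Counterexample.} The stated threshold genuinely fails. Take $f_1^*=f_2^*=0$, so that $X_t$ is i.i.d.\ $N(0,1)$ for $t\ge1$, and take $p_i$ close to $1$. Let $\mathcal{F}_i^*$ be the star hull of $a\equiv c$ and $b=c'+\epsilon'\operatorname{sgn}$. Choose $\epsilon'$ slightly below $r2^{-\overline K}\ll c$, and choose $c'$ so that $\|b\|_{L^2(\mu_T)}=c=r$.
\begin{itemize}
\item Then $\overline{\mathcal F}_r=\{a,b\}$, and one point covers it at scale $r2^{-\overline K}$.
\item Up to negligible terms, the left side equals $\max\bigl\{0,\mp\tfrac{2c'\epsilon'}{T}\sum_t\operatorname{sgn}(X_t)\mathbf{1}\{s(t+1)=i\}\bigr\}$.
\item This is a symmetric sum with variance about $4c'^2\epsilon'^2p_i/T$, while $M_T\approx c$.
\item By the CLT it exceeds $\sqrt2M_T r2^{-\overline K}\sqrt{\log(1/\delta)/T}$ with probability about $\bar\Phi\bigl(\sqrt{\log(1/\delta)/(2p_i)}\bigr)$, where $\bar\Phi$ is the standard normal tail. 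For $\delta=0.01$ this is roughly $0.06$.
\end{itemize}

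\textbf{What is provable.} Your sense that the supremum cannot be removed for free is correct, and neither route proves the lemma as stated. Fix the signs in the pointwise idea and take cover centres in $\overline{\mathcal F}_r$, so both functions are bounded by $M_T$ everywhere. Then, on $\mathcal S_T$ and uniformly in $\overline f$,
\[
\gap(\overline f)-\gap\bigl(\pi_{\overline K}(\overline f)\bigr)\le 2M_T r2^{-\overline K}\Bigl(p_i+\frac{n_i}{T}\Bigr)+4M_T^2p_i\max_{0\le t\le T-1}\mathbb{P}\bigl(\|X_t\|_2>R_T\bigr).
\]
Hoeffding for $n_i$ then gives, with probability at least $1-\delta$, the bound $2M_T r2^{-\overline K}\bigl(2p_i+\sqrt{\log(1/\delta)/(2T)}\bigr)$ plus the escape term. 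This is free of $S$ and of entropy. Its non-vanishing part, of order $M_T p_i r\gamma_f$, is harmless because $\gamma_f$ is free. It matches the $M_T r\gamma_f$ term already carried in the fixed-point equation in Case II of the proof of Corollary~\ref{cor:holder-rates}.
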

\textit{Step 4: Bounding the residual term.}
Decomposing the squared terms with $a^2 - b^2 = (a+b)(a-b)$, and using similar arguments as before, we have 
\begin{equation*}
    \begin{aligned}
        &\ltrans(f) - \ltrans(\overline f) \\
        &= \frac{1}{T}\sum_{t=0}^{T-1}\mathbb{E}\left[\mathbf{1}\{s(t+1) = i \}\left\{\|f(X_t)\|_2^2 - \|\overline f(X_t)\|_2^2 \right\} \right] \\
        &= \frac{1}{T}\sum_{t=0}^{T-1}\mathbb{E}\left[\mathbf{1}\{s(t+1) = i \}\left\{\|f(X_t)\|_2 + \|\overline f(X_t)\|_2 \right\}\left\{\|f(X_t)\|_2 - \|\overline f(X_t)\|_2 \right\} \right] \\
        &\leq \frac{1}{T}\sum_{t=0}^{T-1}\mathbb{E}\left[\mathbf{1}\{s(t+1) = i \}\left\{\|f(X_t)\|_2 + M_T \right\}\left\{\|f(X_t) - \overline f(X_t)\|_2 \right\} \right].
    \end{aligned}
\end{equation*}
Recall that on the event $X_t \in \mathcal{B}_T$, $\|f(X_t)\|_2 \leq M_T$ by the construction of $M_T$, which implies that $\overline f = f$. Thus, we have 
\begin{equation*}
    \begin{aligned}
        &\frac{1}{T}\sum_{t=0}^{T-1}\mathbb{E}\left[\mathbf{1}\{s(t+1) = i \}\left\{\|f(X_t)\|_2 + M_T \right\}\left\{\|f(X_t) - \overline f(X_t)\|_2 \right\} \right] \\
        &=\frac{1}{T}\sum_{t=0}^{T-1}\mathbb{E}\left[\mathbf{1}\{s(t+1) = i \}\left\{\|f(X_t)\|_2 + M_T \right\}\left\{\|f(X_t) - \overline f(X_t)\|_2 \right\}\{\mathbf{1}\{X_t \in \mathcal{B}_T\} + \mathbf{1}\{X_t \notin \mathcal{B}_T\} \} \right] \\
        &= \frac{1}{T}\sum_{t=0}^{T-1}\mathbb{E}\left[\mathbf{1}\{s(t+1) = i \}\left\{\|f(X_t)\|_2 + M_T \right\}\left\{\|f(X_t) - \overline f(X_t)\|_2 \right\}\mathbf{1}\{X_t \notin \mathcal{B}_T\}\right] \\
        &= \frac{1}{T}\sum_{t=0}^{T-1}\mathbb{E}\left[\mathbf{1}\{s(t+1) = i \}\left\{\|f(X_t)\|_2 + M_T \right\}\left\{\|f(X_t) - M_T \frac{f(X_t)}{\|f(X_t)\|_2}\|_2 \right\}\mathbf{1}\{X_t \notin \mathcal{B}_T\}\right] \\
        &= \frac{1}{T}\sum_{t=0}^{T-1}\mathbb{E}\left[\mathbf{1}\{s(t+1) = i \}\left\{\|f(X_t)\|_2 + M_T \right\}\left\|f(X_t)\left[1 - \frac{M_T}{\|f(X_t)\|_2}\right] \right\|_2 \mathbf{1}\{X_t \notin \mathcal{B}_T\}\right] \\
        &= \frac{1}{T}\sum_{t=0}^{T-1}\mathbb{E}\left[\mathbf{1}\{s(t+1) = i \}\left\{\frac{\|f(X_t)\|_2^2 - M_T^2}{\|f(X_t)\|_2}  \right\}\left\|f(X_t)\right\|_2 \mathbf{1}\{X_t \notin \mathcal{B}_T\}\right] \\
        &= \frac{1}{T}\sum_{t=0}^{T-1}\mathbb{E}\left[\mathbf{1}\{s(t+1) = i \}\left\{\|f(X_t)\|_2^2 - M_T^2 \right\} \mathbf{1}\{X_t \notin \mathcal{B}_T\}\right] \\
        &= \frac{p_i}{T}\sum_{t=0}^{T-1}\mathbb{E}\left[\left\{\|f(X_t)\|_2^2 - M_T^2 \right\} \mathbf{1}\{X_t \notin \mathcal{B}_T\}\right] \\
        &\leq \frac{p_i}{T}\sum_{t=0}^{T-1}\mathbb{E}\left[\|f(X_t)\|_2^2\mathbf{1}\{\|X_t\|_2 > R_T\}\right].
    \end{aligned}
\end{equation*}

Thus, for some $t_2 > 0$, we have by using the envelope function
\begin{equation*}
    \begin{aligned}
        \mathbb{P}\left(\sup_{f \in \mathcal{F}_{i,r}^*}\left\{\ltrans(f) - \ltrans(T_M[f]) \right\} > t_2 \right) &= \mathbf{1}\left\{\sup_{f \in \mathcal{F}_{i,r}^*}\left(\ltrans(f) - \ltrans(T_M[f]) \right) > t_2 \right\} \\
        &\leq \mathbf{1}\left\{\sup_{f \in \mathcal{F}_{i,r}^*}\frac{p_i}{T}\sum_{t=0}^{T-1}\mathbb{E}\left[\|f(X_t)\|_2^2\mathbf{1}\{\|X_t\|_2 > R_T\}\right] > t_2 \right\} \\
        &\leq \mathbf{1}\left\{\frac{p_i}{T}\sum_{t=0}^{T-1}\mathbb{E}\left[F^2(X_t)\mathbf{1}\{\|X_t\|_2 > R_T\}\right] > t_2 \right\} \\
        &=: \mathbf{1}\left\{I_T(F) > t_2 \right\}
    \end{aligned}
\end{equation*}
since the residual term is deterministic.

\textit{Step 5: Assembly of the critical radius equation.}

Denote the bounds in the chaining terms from Lemmas \ref{lem:anchor-ulln}, \ref{lem:telescope-ulln} and \ref{lem:approx-ulln} as 
\begin{equation*}
\begin{aligned}
    &\Phi_1(r, \delta) := 4M_T r \sqrt{\frac{Sp_i}{T}}\left\{\sqrt{\log \mathcal{N}(\overline{\mathcal{F}}_r, \epsilon_0, \|.\|_{\infty, \mathcal{B}_T})} + \sqrt{\log(1/\delta)} \right\}, \\
    &\Phi_2(r, \delta) := 10M_T \sqrt{\frac{Sp_i}{T}}\left\{\int_{r2^{-(\overline K + 1)}}^{r/2} \sqrt{\log\mathcal{N}(\overline{\mathcal{F}}_r, \epsilon, \|.\|_{\infty, \mathcal{B}_T})} d\epsilon + r + r\sqrt{\log(1/\delta)}  \right\}, \\
    &\Phi_3(r, \delta) := \sqrt{2}M_T r2^{-\overline K} \sqrt{\frac{\log(1/\delta)}{T}}.
\end{aligned}
\end{equation*}
Assigning a probability budget of $\delta/4$ to each of the terms above, we have
\begin{equation*}
\begin{aligned}
    &\PS\left(\sup_{\overline f \in \overline{\mathcal{F}}_r} \gap(\pi_0(\overline f)) \leq \Phi_1(r, \delta/4)  \right) \geq 1 - \delta/4, \\
    &\PS\left(\sup_{\overline f \in \overline{\mathcal{F}}_r} \sum_{k=1}^{\overline K} \left[\gap(\pi_k(\overline f)) - \gap(\pi_{k-1}(\overline f))\right] \leq \Phi_2(r, \delta/4) \right)  \geq 1 - \delta/4, \\
    &\PS\left(\sup_{\overline f \in \overline{\mathcal{F}}_r} \left[\gap(\overline f) - \gap(\pi_{\overline K}(\overline f))\right] \leq \Phi_3(r, \delta/4) \right) \geq 1 - \delta/4.
\end{aligned}
\end{equation*}
Moreover, the residual term can be made to vanish by choosing $t_2$ larger than $I_T(F)$.
Therefore for the probability bounds to hold, we have the conditions
\begin{equation*}
    \frac{p_i (r^*)^2}{4} = \Phi_1(r^*, \delta/4) + \Phi_2(r^*, \delta/4) + \Phi_3(r^*, \delta/4),
\end{equation*}
and 
\begin{equation*}
     \frac{p_i (r^*)^2}{4}  = I_T(F),
\end{equation*}
which after summing both sides gives us the critical radius equation
\begin{equation}\label{eq:fixed-point-cond}
     \frac{p_i (r^*)^2}{2} = \Phi_1(r^*, \delta/4) + \Phi_2(r^*, \delta/4) + \Phi_3(r^*, \delta/4) +  I_T(F).
\end{equation}
Thus, we have from \eqref{eq:step1_ullnproof_prop}, for any $r \geq r^*$ satisfying the fixed point equation \eqref{eq:fixed-point-cond}, that
\begin{equation*}
    \mathbb{P}(\mathcal{E} \cap \mathcal{S}_T) \leq \frac{3\delta}{4}.
\end{equation*}
Finally, assigning a probability budget of $\delta/4$ to the escape event $\mathcal{S}_T^c$, which is satisfied as soon as $R_T > \sqrt{\log(4TC_{\rho, B}/\delta)/\alpha}$, we have for any $r \geq r^*$ satisfying \eqref{eq:fixed-point-cond}, it holds that
    \begin{equation*}
    \mathbb{P}\left(\sup_{f \in \mathcal{F}^*_{i, r}} G_{T,i}(f) \leq \frac{p_i r^2}{2}  \right) \geq 1 - \delta,
\end{equation*}
as desired.

\textit{Step 6: Rescaling functions outside the sphere.}

Let $f \in \mathcal{F}_i^*$ be an arbitrary function such that $\ltrans(f) > p_i r^2$. Define $\lambda := r\sqrt{p_i}/\sqrt{\ltrans(f)}$. Consider the scaled function $h = \lambda f$. Since $\lambda < 1$ by construction, by the star-shapedness of $\mathcal{F}_i^*$, $h \in \mathcal{F}_i^*$. Note that 
\begin{equation*}
    \ltrans(h) = \ltrans(\lambda f) = \lambda^2 \ltrans(f) \qquad \text{and} \qquad \lemp(h) = \lemp(\lambda f ) = \lambda^2 \lemp(f).
\end{equation*}
Moreover, by the definition of $\lambda$, 
\begin{equation*}
    \ltrans(h) = \frac{r^2 p_i}{\ltrans(f)}  \ltrans(f) = p_i r^2 ,
\end{equation*}
so the scaled function $h \in \mathcal{F}^*_{i,r}$. By the previous step, we have for any $r \geq r^*$ satisfying the fixed point equation and $R_T \geq \sqrt{\log(4TC/\delta)/\alpha)}$, with probability at least $1 - \delta$, 
\begin{equation*}
    \begin{aligned}
       \gap(h) &\leq \frac{p_i r^2}{2} \\
       \iff \ltrans(h) - \lemp(h) &\leq \frac{p_i r^2}{2} \\
       \iff \lambda^2 \ltrans(f) - \lambda^2 \lemp(f) &\leq \frac{p_i r^2}{2} \\
       \iff \frac{r^2p_i}{\ltrans(f)}\ltrans(f) - \lambda^2 \lemp(f) &\leq \frac{p_i r^2}{2} \\
       \iff \lambda^2 \lemp(f) &> \frac{p_i r^2}{2} \\
       \iff \lemp(f) > \frac{p_i r^2}{2 \lambda^2} &= \frac{\ltrans(f)}{2},
    \end{aligned}
\end{equation*}
so our desired statement similarly holds for every $f$ outside the sphere.
%
%

%
%
%
\subsection{Proof of Lemma \ref{lem:anchor-ulln}}
Set $W_t := \|\pi_0(\overline f)(X_t)\|_2^2 \mathbf{1}\{s(t+1) = i\}$. Recall that for an event $A$, we denote $\PS(A) := \mathbb{P}(A \cap \mathcal{S}_T)$. By Markov's inequality, we have 
\begin{equation*}
    \begin{aligned}
        &\PS\left(\gap(\pi_0(\overline f)) > t_0 \right) = \PS\left(\ltrans(\pi_0(\overline f )) -\lemp(\pi_0(\overline f))  > t_0 \right) \\ 
        &= \PS\left(\frac{1}{T}\sum_{t=0}^{T-1}\left\{ \mathbb{E}\left[\|\pi_0(\overline f)(X_t)\|_2^2 \mathbf{1}\{s(t+1) = i \} \right] - \|\pi_0(\overline f)(X_t)\|_2^2 \mathbf{1}\{s(t+1) = i \} \right\} > t_0  \right) \\
        &\leq  \PS\left(\frac{1}{T}\sum_{t=0}^{T-1} \left\{\mathbb{E}\left[W_t\right] - W_t \right\}  > t_0  \right) \\
        &\leq \PS\left(\exp\left(\frac{\xi}{T}\sum_{t=0}^{T-1} \left\{\mathbb{E}\left[W_t\right] - W_t \right\} \right) > \exp\left(\xi t_0 \right) \right) \\
        &\leq \inf_{\xi >0}\mathbb{E}\left[\exp\left(\frac{\xi}{T}\sum_{t=0}^{T-1} \left\{\mathbb{E}\left[W_t\right] - W_t \right\} \right) \mathbf{1}_{\mathcal{S}_T} \right]\exp\left(-\xi t_0\right).
    \end{aligned}
\end{equation*}

Now, define $Z_t := W_t\mathbf{1}\{X_t \in \mathcal{B}_T\}$, so that on the event $\mathcal{S}_T$, we have $Z_t = W_t$. Thus, we have
\begin{equation}\label{eq:anchor-trun-var}
    \begin{aligned}
        \mathbb{E}\left[\exp\left(\frac{\xi}{T}\sum_{t=0}^{T-1} \left\{\mathbb{E}\left[W_t\right] - W_t \right\} \right) \mathbf{1}_{\mathcal{S}_T} \right] &= \mathbb{E}\left[\exp\left(\frac{\xi}{T}\sum_{t=0}^{T-1} \left\{\mathbb{E}\left[Z_t\right] - Z_t \right\} \right) \mathbf{1}_{\mathcal{S}_T} \right] \\
        &\leq \mathbb{E}\left[\exp\left(\frac{\xi}{T}\sum_{t=0}^{T-1} \left\{\mathbb{E}\left[Z_t\right] - Z_t \right\} \right) \right] \\
        &= \exp\left(\frac{\xi}{T}\sum_{t=0}^{T-1}\mathbb{E}[Z_t] \right)\mathbb{E}\left[\exp\left(-\frac{\xi}{T}\sum_{t=0}^{T-1}Z_t \right) \right].
    \end{aligned}
\end{equation}
Again applying Theorem \ref{thm:S-persis}, we have
\begin{equation*}
    \begin{aligned}
        \mathbb{E}\left[\exp\left(-\frac{\xi}{T}\sum_{t=0}^{T-1}Z_t \right) \right] &\leq \exp\left(-\frac{\xi}{T}\sum_{t=0}^{T-1}\mathbb{E}[Z_t] + \frac{\xi^2 S}{2T^2}\sum_{t=0}^{T-1}\mathbb{E}[Z_t^2] \right),
    \end{aligned}
\end{equation*}
and substituting back into \eqref{eq:anchor-trun-var}, we have
\begin{equation*}
    \begin{aligned}
        \mathbb{E}\left[\exp\left(\frac{\xi}{T}\sum_{t=0}^{T-1} \left\{\mathbb{E}\left[W_t\right] - W_t \right\} \right) \mathbf{1}_{\mathcal{S}_T} \right] &\leq \exp\left(\frac{\xi^2 S}{2T}\frac{1}{T}\sum_{t=0}^{T-1}\mathbb{E}[Z_t^2] \right),
    \end{aligned}
\end{equation*}
which implies that 
\begin{equation*}
    \begin{aligned}
        \PS\left(\gap(\pi_0(\overline f)) > t_0 \right) &\leq \exp\left(\frac{\xi^2 S}{2T}\frac{1}{T}\sum_{t=0}^{T-1}\mathbb{E}[Z_t^2] \right)\exp\left(-\xi t_0 \right).
    \end{aligned}
\end{equation*}
By the union bound, we have 
\begin{equation*}
    \begin{aligned}
        \PS\left(\sup_{\overline f \in \overline{\mathcal{F}}_r} \gap(\pi_0(\overline f)) > t_0 \right) \leq \mathcal{N}(\overline{\mathcal{F}}_r, \epsilon_0, \|.\|_{\infty, \mathcal{B}_T})\inf_{\xi >0}\exp\left(\frac{\xi^2 S}{2T}\frac{1}{T}\sum_{t=0}^{T-1}\mathbb{E}[Z_t^2] \right)\exp\left(-\xi t_0 \right).
    \end{aligned}
\end{equation*}

Intercalating by $\overline f(X_t)$, and recalling that on the sphere, $L_{T,i}(f) = p_i r^2$, we have
\begin{equation*}
    \begin{aligned}
        \frac{1}{T}\sum_{t=0}^{T-1}\mathbb{E}[Z_t^2] &= \frac{1}{T}\sum_{t=0}^{T-1} \mathbb{E}[W_t^2 \mathbf{1}\{X_t \in \mathcal{B}_T\}] \\
        &\leq \frac{M_T^2}{T}\sum_{t=0}^{T-1}\mathbb{E}\left[\|\pi_0(\overline f)(X_t)\|_2^2 \mathbf{1}\{s(t+1) = i \}\mathbf{1}\{X_t \in \mathcal{B}_T \} \right] \\
        &\leq \frac{2M_T^2}{T}\sum_{t=0}^{T-1}\mathbb{E}\left[\|\pi_0(\overline f)(X_t) - \overline f(X_t) \|_2^2 \mathbf{1}\{s(t+1) = i \}\mathbf{1}\{X_t \in \mathcal{B}_T \} \right] \\
        &+ \frac{2M_T^2}{T}\sum_{t=0}^{T-1}\mathbb{E}\left[\|\overline f(X_t)\|_2^2 \mathbf{1}\{s(t+1) = i \}\mathbf{1}\{X_t \in \mathcal{B}_T \} \right] \\
        &\leq \frac{2M_T^2 p_i }{T}\sum_{t=0}^{T-1}\|\pi_0(\overline f) - \overline f \|_{\infty, \mathcal{B}_T}^2 + \frac{2M_T^2}{T}\sum_{t=0}^{T-1}\mathbb{E}\left[\|\overline f(X_t)\|_2^2 \mathbf{1}\{s(t+1) = i \}\mathbf{1}\{X_t \in \mathcal{B}_T \} \right] \\
        &\leq 2M_T^2 p_i \epsilon_0^2 + \frac{2M_T^2}{T}\sum_{t=0}^{T-1}\mathbb{E}\left[\|f(X_t)\|_2^2 \mathbf{1}\{s(t+1) = i \}\mathbf{1}\{X_t \in \mathcal{B}_T \} \right] \\
        &\leq 2M_T^2 p_i \epsilon_0^2 + \frac{2M_T^2}{T}\sum_{t=0}^{T-1}\mathbb{E}\left[\|f(X_t)\|_2^2 \mathbf{1}\{s(t+1) = i \} \right] \\
        &= 2M_T^2 p_i \epsilon_0^2 + 2M_T^2 L_{T,i}(f) \\
        &= 4M_T^2 p_i r^2.
    \end{aligned} 
\end{equation*}
Thus, we have
\begin{equation*}
     \PS\left(\sup_{\overline f \in \overline{\mathcal{F}}_r} \gap(\pi_0(\overline f)) > t_0 \right) \leq \mathcal{N}(\overline{\mathcal{F}}_r, \epsilon_0, \|.\|_{\infty, \mathcal{B}_T})\inf_{\xi >0}\exp\left(\frac{2\xi^2 S M_T^2 p_i r^2}{T} - \xi t_0\right).
\end{equation*}

Optimizing with respect to $\xi$, we obtain the minimizer as
\begin{equation*}
    \xi^* = \frac{t_0 T}{4SM_T^2 p_i r^2}, 
\end{equation*}
which after substitution yields
\begin{equation*}
    \PS\left(\sup_{\overline f \in \overline{\mathcal{F}}_r} \gap(\pi_0(\overline f)) > t_0 \right) \leq \mathcal{N}(\overline{\mathcal{F}}_r, \epsilon_0, \|.\|_{\infty, \mathcal{B}_T}) \exp\left(-\frac{t_0^2 T}{8S M_T^2 p_i r^2} \right).
\end{equation*}
Setting the RHS to at most $\delta$, we obtain with probability at least $1 - \delta$, 
\begin{equation*}
    \begin{aligned}
        \sup_{\overline f \in \overline{\mathcal{F}}_r} \gap(\pi_0(\overline f)) &\leq 2\sqrt{2}M_T r \sqrt{\frac{Sp_i}{T}}\left(\sqrt{\log \mathcal{N}(\overline{\mathcal{F}}_r, \epsilon_0, \|.\|_{\infty, \mathcal{B}_T}) + \log(1/\delta)} \right) \\
        &\leq 4M_T r \sqrt{\frac{Sp_i}{T}}\left\{\sqrt{\log \mathcal{N}(\overline{\mathcal{F}}_r, \epsilon_0, \|.\|_{\infty, \mathcal{B}_T})} + \sqrt{\log(1/\delta)} \right\}.
    \end{aligned}
\end{equation*}
\subsection{Proof of Lemma \ref{lem:telescope-ulln}} \label{subsec:proof_telescope-ulln}
By suitably grouping terms together, we have 
\begin{equation*}
    \begin{aligned}
        \gap(\pi_k(\overline f)) - \gap(\pi_{k-1}(\overline f)) &= \ltrans(\pi_k(\overline f)) - \ltrans(\pi_{k-1}(\overline f)) - \left\{\lemp(\pi_k(\overline f)) - \lemp(\pi_{k-1}(\overline f))  \right\} \\
        &= \frac{1}{T}\sum_{t=0}^{T-1}\mathbb{E}\left[\mathbf{1}\{s(t+1) = i \}\left\{\|\pi_k(\overline f)(X_t)\|_2^2 - \|\pi_{k-1}(\overline f)(X_t) \|_2^2 \right\}\right] \\
        &- \frac{1}{T}\sum_{t=0}^{T-1}\mathbf{1}\{s(t+1) = i \}\left\{\|\pi_k(\overline f)(X_t)\|_2^2 - \|\pi_{k-1}(\overline f)(X_t)\|_2^2  \right\}.
    \end{aligned}
\end{equation*}

Recall that $M_T = \sup_{\|x\|_2 \leq R_T} \|f(x)\|_2$. Then, using the identity $a^2 - b^2 = (a+b)(a-b)$, and the $1$-Lipschitz property of $\|.\|_2$,  we have 
\begin{equation*}
    \begin{aligned}
        \|\pi_k(\overline f)(X_t)\|_2^2 - \|\pi_{k-1}(\overline f)(X_t)\|_2^2 &= \left(\|\pi_k(\overline f)(X_t)\|_2 + \|\pi_{k-1}(\overline f)(X_t)\|_2  \right) \\ &\times \left(\|\pi_k(\overline f)(X_t)\|_2 - \|\pi_{k-1}(\overline f)(X_t)\|_2 \right) \\
        &\leq 2M_T \|\pi_k(\overline f)(X_t) - \pi_{k-1}(\overline f)(X_t)\|_2.
    \end{aligned}
\end{equation*}

Set $h_t := \mathbf{1}\{s(t+1) = i \}\|\pi_k(\overline f)(X_t) - \pi_{k-1}(\overline f)(X_t)\|_2$. By Markov's inequality, we have  
\begin{equation*}
\begin{aligned}
    \PS\left( \gap(\pi_k(\overline f)) - \gap(\pi_{k-1}(\overline f)) > t_k  \right) &\leq \PS\left(\frac{2M_T}{T}\sum_{t=0}^{T-1}\mathbb{E}[h_t] - h_t > t_k \right) \\
    &= \PS\left(\exp\left(\frac{2\xi M_T}{T}\sum_{t=0}^{T-1} \mathbb{E}[h_t] - h_t \right) > \exp\left(\xi t_k \right) \right) \\
    &\leq \inf_{\xi > 0} \mathbb{E}\left[ \exp\left(\frac{2\xi M_T}{T}\sum_{t=0}^{T-1} \mathbb{E}[h_t] - h_t \right) \mathbf{1}_{\mathcal{S}_T} \right]\exp\left(-\xi t_k \right).
\end{aligned}
\end{equation*}

Define $g_t := h_t \mathbf{1}\{X_t \in \mathcal{B}_T\}$. Note that on the safe event $\mathcal{S}_T$, $g_t = h_t$. Thus, we have 
\begin{equation*}
    \begin{aligned}
        \mathbb{E}\left[ \exp\left(\frac{2\xi M_T}{T}\sum_{t=0}^{T-1} \mathbb{E}[h_t] - h_t \right) \mathbf{1}_{\mathcal{S}_T} \right] &= \mathbb{E}\left[ \exp\left(\frac{2\xi M_T}{T}\sum_{t=0}^{T-1} \mathbb{E}[g_t] - g_t \right) \mathbf{1}_{\mathcal{S}_T} \right] \\
        &= \exp\left(\frac{2\xi M_T}{T}\sum_{t=0}^{T-1} \mathbb{E}[g_t] \right)\mathbb{E}\left[\exp\left(-\frac{2\xi M_T}{T}\sum_{t=0}^{T-1} g_t \right) \mathbf{1}_{\mathcal{S}_T} \right].
    \end{aligned}
\end{equation*}
Since $g_t$ is positive and bounded, we can apply Theorem \ref{thm:S-persis} to obtain
\begin{equation*}
    \begin{aligned}
        \mathbb{E}\left[\exp\left(-\frac{2\xi M_T}{T}\sum_{t=0}^{T-1} g_t \right) \mathbf{1}_{\mathcal{S}_T} \right] &\leq \mathbb{E}\left[\exp\left(-\frac{2\xi M_T}{T}\sum_{t=0}^{T-1} g_t \right)  \right] \\
        &\leq \exp\left(-\frac{2\xi M_T}{T}\sum_{t=0}^{T-1} \mathbb{E}[g_t] + \frac{2\xi^2 S M_T^2}{T^2}\sum_{t=0}^{T-1}\mathbb{E}[g_t^2] \right).
    \end{aligned}
\end{equation*}
Substituting back, we have 
\begin{equation*}
    \begin{aligned}
         \mathbb{E}\left[ \exp\left(\frac{2\xi M_T}{T}\sum_{t=0}^{T-1} \mathbb{E}[h_t] - h_t \right) \mathbf{1}_{\mathcal{S}_T} \right]  \leq \exp\left(\frac{2\xi^2 S M_T^2}{T^2}\sum_{t=0}^{T-1}\mathbb{E}[g_t^2] \right).
    \end{aligned}
\end{equation*}

Using the identity $(a+b)^2 \leq 2(a^2 + b^2)$, and recalling that $\epsilon_k = 2^{-1}\epsilon_{k-1}$, we have
\begin{equation*}
    \begin{aligned}
        \mathbb{E}[g_t^2] &= \mathbb{E}[h_t^2 \mathbf{1}\{X_t \in \mathcal{B}_T\}]
        \\
        &= \mathbb{E}\left[\mathbf{1}\{s(t+1) = i \}\mathbf{1}\{X_t \in \mathcal{B}_T\} \|\pi_k(\overline f)(X_t) - \pi_{k-1}(\overline f)(X_t)  \|_2^2 \right] \\
        &\leq 2p_i \mathbb{E}\left[ \left\{\| \pi_k(\overline f)(X_t) - \overline f(X_t) \|_2^2 + \|\overline f(X_t) - \pi_{k-1}(\overline f)(X_t)\|_2^2\right\}\mathbf{1}\{X_t \in \mathcal{B}_T \} \right] \\
        &\leq 2p_i\mathbb{E}\left[\| \pi_k(\overline f) - \overline f \|_{\infty, \mathcal{B}_T}^2 + \|\overline f - \pi_{k-1}(\overline f)\|_{\infty, \mathcal{B}_T}^2 \right] \\
        &\leq 2p_i (\epsilon_k^2 + \epsilon_{k-1}^2) \\
        &\leq 10p_i \epsilon_k^2.
    \end{aligned}
\end{equation*}
This implies that 
\begin{equation}\label{eq:telescope-chernoff}
    \begin{aligned}
        \PS\left( \gap(\pi_k(\overline f)) - \gap(\pi_{k-1}(\overline f)) > t_k  \right) \leq  \inf_{\xi > 0}\exp\left(\frac{20\xi^2 M_T^2 S p_i \epsilon_k^2}{T} - \xi t_k \right).
    \end{aligned}
\end{equation}
Optimizing the RHS above with respect to $\xi$, we obtain the solution
\begin{equation*}
    \xi^* = \frac{t_k T}{40 M_T^2 S \epsilon_k^2 p_i},
\end{equation*}
which after substituting back into \eqref{eq:telescope-chernoff} gives us
\begin{equation*}
    \begin{aligned}
        \PS\left( \gap(\pi_k(\overline f)) - \gap(\pi_{k-1}(\overline f)) > t_k  \right) \leq  \exp\left(-\frac{t_k^2 T}{10 SM_T^2 \epsilon_k^2 p_i } \right).
    \end{aligned}
\end{equation*}

It is easy to see that 
\begin{equation*}
    \begin{aligned}
        \left\{\sup_{f \in \overline{\mathcal F}_r} \sum_{k=1}^{\overline K} \left[ \gap(\pi_k(\overline f)) - \gap(\pi_{k-1}(\overline f)) \right] > \sum_{k=1}^{\overline K} t_k  \right\} \subseteq
        \\
        \bigcup_{k=1}^{\overline K}\left\{\sup_{f \in \overline{\mathcal F}_r} \left[ \gap(\pi_k(\overline f)) - \gap(\pi_{k-1}(\overline f)) \right] >  t_k \right\},
    \end{aligned}
\end{equation*}
which implies
\begin{equation*}
    \begin{aligned}
         \PS\left(\sup_{f \in \overline{\mathcal{F}_r}} \sum_{k=1}^{\overline K} \gap(\pi_k(\overline f)) - \gap(\pi_{k-1}(\overline f)) > \sum_{k=1}^{\overline K} t_k  \right) \\ \leq \sum_{k=1}^{\overline K}\PS\left(\sup_{f \in \overline{\mathcal{F}_r}}  \gap(\pi_k(\overline f)) - \gap(\pi_{k-1}(\overline f)) >  t_k  \right) \\
         \leq \sum_{k=1}^{\overline K}\mathcal{N}^2\left(\overline{\mathcal{F}}_r, \epsilon_k, \|.\|_{\infty, \mathcal{B}_T}\right)\exp\left(-\frac{t_k^2 T}{10SM_T^2 \epsilon_k^2 p_i} \right).
    \end{aligned}
\end{equation*}

For a scale $k$, set the RHS of the display above to at most $\delta/2^{k}$, to obtain 
\begin{equation*}
\begin{aligned}
    t_k &= \sqrt{\frac{10 S p_i}{T}}M_T \epsilon_k \sqrt{\log\left(\mathcal{N}^2\left(\overline{\mathcal{F}_r}, \epsilon_k, \|.\|_{\infty, \mathcal{B}_T}  \right) 2^{k}/\delta \right)} \\
    &= \sqrt{\frac{10 S p_i}{T}}M_T \epsilon_k   \left\{\sqrt{\log\mathcal{N}^2\left(\overline{\mathcal{F}_r}, \epsilon_k, \|.\|_{\infty, \mathcal{B}_T}\right)} + \sqrt{\log\left(2^{k} \right)} + \sqrt{\log\left(\frac{1}{\delta}\right)} \right\},
\end{aligned}
\end{equation*}
and summing over $k = 1, 2, \dots$ gives us
\begin{equation*}
\begin{aligned}
    \sum_{k=1}^{\overline K} t_k &= \sqrt{\frac{10 S p_i}{T}}M_T   \left\{\sum_{k=1}^{\overline K} \epsilon_k \sqrt{\log\mathcal{N}^2\left(\overline{\mathcal{F}_r}, \epsilon_k, \|.\|_{\infty, \mathcal{B}_T}\right)} +\sum_{k=1}^{\overline K} \epsilon_k \sqrt{\log(2^{k})} +  \sum_{k=1}^{\overline K}\epsilon_k \sqrt{\log\left(\frac{1}{\delta}\right)} \right\} \\
    &=: \sqrt{\frac{10 S p_i}{T}}M_T  \left\{\sum_{k=1}^{\overline K}t_{1,k} + \sum_{k=1}^{\overline K}t_{2,k} + \sum_{k=1}^{\overline K}t_{3,k} \right\}.
\end{aligned}
\end{equation*}
Since $\epsilon \mapsto \log\mathcal{N}(\overline{\mathcal{F}}_r, \epsilon, \|.\|_{\infty, \mathcal{B}_T})$ is non-increasing, we have 
\begin{equation*}
    \begin{aligned}
      \int_{r2^{-(\overline K + 1)}}^{r/2} \sqrt{\log\mathcal{N}(\overline{\mathcal{F}}_r, \epsilon, \|.\|_{\infty, \mathcal{B}_T}}) d\epsilon &=   \int_{\epsilon_{\overline K+1}}^{\epsilon_1} \sqrt{\log\mathcal{N}(\overline{\mathcal{F}}_r, \epsilon, \|.\|_{\infty, \mathcal{B}_T}}) d\epsilon \\
      &\geq \frac{1}{2}\sum_{k=1}^{\overline K}r2^{-k} \sqrt{\log\mathcal{N}(\overline{\mathcal{F}}_r, r2^{-k}, \|.\|_{\infty, \mathcal{B}_T})} \\
      &= \frac{1}{2}\sum_{k=1}^{\overline K}\epsilon_k \sqrt{\log\mathcal{N}\left(\overline{\mathcal{F}_r}, \epsilon_k, \|.\|_{\infty, \mathcal{B}_T}\right)}.
    \end{aligned}
\end{equation*}
Thus, we have 
\begin{equation*}
    \begin{aligned}
        \sum_{k=1}^{\overline K}t_{1,k} &\leq 2\sqrt{2}\int_{r2^{-(\overline K + 1)}}^{r/2} \sqrt{\log\mathcal{N}(\overline{\mathcal{F}}_r, \epsilon, \|.\|_{\infty, \mathcal{B}_T}}) d\epsilon .
    \end{aligned}
\end{equation*}
Moreover, by the polylogarithmic function, we have  
\begin{equation*}
\begin{aligned}
    \sum_{k =1}^{\overline K}t_{2,k}  &\leq \sum_{k=1}^\infty t_{2,k}     
    = \sqrt{\log(2)} \sum_{k =1}^{\infty} r2^{-k}\sqrt{k} \leq 2r,
\end{aligned}
\end{equation*}
and by the geometric series,
\begin{equation*}
    \sum_{k=1}^{\overline K} t_{3,k} \leq \sqrt{\log(1/\delta)}\sum_{k=1}^{\infty} r2^{-k}  = r\sqrt{\log(1/\delta)}.
\end{equation*}

Gathering facts, we have with probability at least $1 - \delta$
\begin{equation*}
\begin{aligned}
    &\sup_{f \in \overline{\mathcal{F}_r}} \sum_{k=1}^{\overline K} \gap(\pi_k(\overline f)) - \gap(\pi_{k-1}(\overline f)) \leq \\  &10M_T \sqrt{\frac{Sp_i}{T}}\left\{\int_{r2^{-(\overline K + 1)}}^{r/2} \sqrt{\log\mathcal{N}(\overline{\mathcal{F}}_r, \epsilon, \|.\|_{\infty, \mathcal{B}_T})} d\epsilon + r + r\sqrt{\log(1/\delta)}  \right\}.
\end{aligned}
\end{equation*}
\subsection{Proof of Lemma \ref{lem:approx-ulln}} \label{subsec:proof_approx-ulln}
Using $a^2 - b^2 = (a+b)(a-b)$ and 1-Lipschitzness of $\|.\|_2$, we have
\begin{equation*}
\begin{aligned}
    \|\overline f(X_t)\|_2^2 - \|\pi_{\overline K}(\overline f)(X_t) \|_2^2 &= \left\{\|\overline f(X_t)\|_2 + \|\pi_{\overline K}(X_t)\|_2\right\}\left\{\|\overline f(X_t)\|_2 - \|\pi_{\overline K}(X_t)\|_2 \right\} \\
    &\leq 2M_T\|\overline f(X_t) - \pi_{\overline K}(X_t)\|_2.
\end{aligned}
\end{equation*}
Set $U_t := \|\overline f(X_t) - \pi_{\overline K}(X_t)\|_2\mathbf{1}\{s(t+1) = i\}$. Then we can decompose the fine scale approximation error as
\begin{equation*}
    \begin{aligned}
        \gap(\overline f) - \gap(\pi_{\overline K}(\overline f)) &= \ltrans(\overline f) - \ltrans(\pi_{\overline K}(\overline f) - (\lemp(\overline f) - \lemp(\pi_{\overline K}(\overline f))) \\
    &\leq \frac{2M_T}{T}\sum_{t=0}^{T-1}\left(\mathbb{E}[U_t] - U_t \right)
    \end{aligned}
\end{equation*}
Note that on the event $\mathcal{S}_T$, we have 
\begin{equation*}
    \begin{aligned}
       \mathbb{E}[U_t] - U_t &= \mathbb{E}\left[\|\overline f(X_t) - \pi_{\overline K}(X_t)\|_2\mathbf{1}\{s(t+1) = i\}\right] - \|\overline f(X_t) - \pi_{\overline K}(X_t)\|_2\mathbf{1}\{s(t+1) = i\} \\
       &\leq p_i\|\overline f - \pi_{\overline K}\|_{\infty, \mathcal{B}_T}  - \|\overline f - \pi_{\overline K}\|_{\infty, \mathcal{B}_T}\mathbf{1}\{s(t+1) = i \} \\
       &\leq p_i \epsilon_{\overline K} - \epsilon_{\overline K}\mathbf{1}\{s(t+1) = i \} \\
       &= \epsilon_{\overline K}\left(p_i - \mathbf{1}\{s(t+1) = i \} \right),
    \end{aligned}
\end{equation*}
where we remark that the upper bound in the last display no longer depends on $\overline f$. Thus, we have
\begin{equation*}
    \begin{aligned}
        \PS\left(\sup_{\overline f \in \overline{\mathcal{F}}_r} \gap(\overline f) - \gap(\pi_{\overline K}(\overline f)) > \overline t \right)  &\leq
        \PS\left(\sup_{\overline f \in \overline{\mathcal{F}}_r} \frac{2M_T}{T}\sum_{t=0}^{T-1}\left(\mathbb{E}[U_t] - U_t \right) > \overline t \right) \\ &\leq \PS\left(\frac{2M_T\epsilon_{\overline K}}{T}\sum_{t=0}^{T-1} \left(p_i - \mathbf{1}\{s(t+1) = i \} \right) > \overline t \right) \\
        &= \PS\left(\sum_{t=0}^{T-1} \left(p_i - \mathbf{1}\{s(t+1) = i \} \right) > \frac{\overline t T}{2M_T\epsilon_{\overline K}} \right).
    \end{aligned}
\end{equation*}
In the last display above, we have the sum of centered i.i.d random variables. Moreover, they are bounded, with each summand $p_i - 1 \leq  p_i - \mathbf{1}\{s(t+1) = i \} \leq p_i$ almost surely. By Hoeffding's inequality, we have
\begin{equation*}
    \begin{aligned}
        \PS\left(\sum_{t=0}^{T-1} \left(p_i - \mathbf{1}\{s(t+1) = i \} \right) > \frac{\overline t T}{2M_T\epsilon_{\overline K}} \right) &\leq \mathbb{P}\left(\sum_{t=0}^{T-1} \left(p_i - \mathbf{1}\{s(t+1) = i \} \right) > \frac{\overline t T}{2M_T\epsilon_{\overline K}} \right) \\
        &\leq \exp\left(-\frac{2\overline t^2 T^2}{4M_T^2 \epsilon_{\overline K}^2} \times \frac{1}{\sum_{t=0}^{T-1} (-1)^2}\right) \\
        &= \exp\left(-\frac{\overline t^2 T}{2M_T^2 \epsilon_{\overline K}^2} \right).
    \end{aligned}
\end{equation*}
Setting the RHS to at most $\delta$, and solving for $\overline t$, we obtain with probability at least $1 - \delta$, 
\begin{equation*}
    \begin{aligned}
        \sup_{\overline f \in \overline{\mathcal{F}}_r}\left[ \gap(\overline f) - \gap(\pi_{\overline K}(\overline f))\right] \leq  \sqrt{2}M_T r2^{-\overline K} \sqrt{\frac{\log(1/\delta)}{T}}.
    \end{aligned}
\end{equation*}

\newpage
\section{Proof of Corollaries}\label{sec:instantiations}
We now instantiate our general results on different classes of functions, to get a clear view on their respective rates of convergence. We will focus on the linear and Hölder class of functions. 
\subsection{Proof of Corollary \ref{cor:holder-rates}}
We first address the envelope $F(x) = \sup_{g \in \mathcal{F}_i^*} \|g(x)\|_2$. Since $g = f - f_i^*, f \in \mathcal{F}$, we have 
\begin{equation}\label{eq:g-shift-bound}
    \|g(x)\|_2 \leq \|f(x)\|_2 + \|f_i^*(x)\|_2.
\end{equation}
Assuming both $f$ and $f_i^*$ belongs to $\mathcal{S}(\alpha, \rho, B)$, we have 
\begin{equation*}
    F(x) \leq C_{\alpha, \rho, B}(1 + \|x\|_2).
\end{equation*}

This is because by Jensen's inequality and the fact that $\varepsilon$ is centered, we have
\begin{align*}
    \exp(\alpha \|f(x)\|_2^2) &= \exp(\alpha \|\mathbf{E}[f(x) + \varepsilon\|_2^2] \\
    &\leq \mathbf{E}[\exp(\alpha \|f(x) + \varepsilon\|_2^2] \\
    &\leq \rho\exp(\alpha \|x\|_2^2) + B \\
    &\leq (\rho + B)\exp(\alpha \|x\|_2^2),
\end{align*}
which after taking logarithms on both sides of the last display and using $\sqrt{a + b} \leq \sqrt{a} + \sqrt{b}$, 
\begin{align*}
    \|f(x)\|_2 \leq \|x\|_2 + \underbrace{\sqrt{\log(\rho + B)/\alpha}}_{c_{\alpha,\rho,B}}.
\end{align*}
Choosing $C_{\alpha, \rho, B} := \max\{1, c_{\alpha, \rho, B}\}$ and taking supremums on both sides of the last display, we have by \eqref{eq:g-shift-bound}, 
\begin{equation*}
    F(x) \leq C_{\alpha, \rho, B}(1 + \|x\|_2).
\end{equation*}

Thus, the envelope is upper bounded by
\begin{equation*}
    \begin{aligned}
        I_T(F) &= \frac{1}{T}\sum_{t=0}^{T-1}\mathbb{E}[F^2(X_t) \mathbf{1}\{\|X_t\|_2 > R_T \}]  \\
        &\leq \frac{C_{\alpha, \rho, B}}{T}\sum_{t=0}^{T-1}\mathbb{E}[(1 + \|X_t\|_2^2)\mathbf{1}\{\|X_t\|_2^2 > R_T^2\}].
    \end{aligned}
\end{equation*}

We now bound the RHS of the last display by a simple reduction. Note that for any $u > 0$ and $\eta > 0$, we have
\begin{equation*}
    \begin{aligned}
        \mathbf{1}\{u > R_T^2\} &= \mathbf{1}\{u - R_T^2 > 0 \} \leq 1 \leq \exp(\eta(u- R_T^2)),
    \end{aligned}
\end{equation*}
which implies that for any $0 < \eta < \alpha$, we have 
\begin{equation*}
\begin{aligned}
   (1+u) \mathbf{1}\{u > R_T^2\} &\leq \exp(\eta u)(1+u)\exp(-\eta R_T^2), \quad u, \eta > 0 \\
   &\leq (1+u)\exp(-(\alpha - \eta) u)\exp(\alpha u)\exp(-\eta R_T^2) \\
   &\leq C_{\alpha, \eta}\exp(\alpha u)\exp(-\eta R_T^2), \quad C_{\alpha, \eta} < \infty.
\end{aligned}
\end{equation*}
Thus, for $u = \|X_t\|_2^2$, we have by Claim \ref{cl: uniform-exp-moment},
\begin{equation*}
\begin{aligned}
    \mathbb{E}[(1+\|X_t\|_2^2) \mathbf{1}\{\|X_t\|_2^2 > R_T^2\}] &\leq C_{\alpha, \eta}\exp(-\eta R_T^2)\mathbf{E}[\exp(\alpha \|X_t\|_2^2)] \\
    &\leq C_{\alpha, \eta}K_{B, \rho}\exp(-\eta R_T^2) \\
    &= C_{\alpha, \eta, B, \rho}\exp(-\eta R_T^2).
\end{aligned}
\end{equation*}
Thus, if $R_T \gtrsim C'_{\alpha, \eta, B, \rho}\sqrt{\log(T)}$ for a suitably large $C'_{\alpha, \eta, B, \rho} > 0$, we have
\begin{equation*}
    \mathbb{E}[(1+\|X_t\|_2^2) \mathbf{1}\{\|X_t\|_2^2 > R_T^2\}] \lesssim \exp(-\log(T)) = T^{-1},
\end{equation*}
which is negligible with respect to the other terms in the critical radius fixed point equation.

The metric entropy of the localized class can be bounded by the metric entropy of the localized Hölder class $\mathcal{H}_T^\beta(L)$, which is given by \cite{nickl2007}
    \begin{equation*}
\log\mathcal{N}\left(\overline{\mathcal{F}}_r, \epsilon, \|.\|_{\infty, \mathcal{B}_R} \right) \lesssim R_T^{d}\left(\frac{L_T}{\epsilon} \right)^{\frac{d}{\beta}}.
    \end{equation*}
Let $\alpha_f = d/2\beta$, $C_T = R_T^{\alpha}$, and $\underline \gamma = r$. We now split the analysis into two different regimes. 

\textbf{Case I: $2\beta > d$}. In this smooth regime, we have $\alpha_f < 1$. Thus, the entropy integral is given by
\begin{equation*}
     \int_{\gamma_f}^{r} \epsilon^{-\alpha_f} d\epsilon = \int_{\gamma_f}^{r} \epsilon^{-\alpha_f} d\epsilon = \frac{1}{1-\alpha_f}\left[r^{1-\alpha_f} - \gamma_f^{1-\alpha_f} \right].
\end{equation*}
Since the integral above converges, we can take the limit $\gamma_f \rightarrow 0$ so the approximation at the fine scale goes to zero. By the arguments in the envelope bound above, we have for $g = f - f_i^*$, 
\begin{equation*}
    \|g(x)\|_2 \leq 2C_{\alpha, \rho, B}(1 + R_T),
\end{equation*}
and taking supremums over $x \in \mathcal{B}_T, g \in \mathcal{F}_i^*$, 
\begin{equation*}
    M_T \leq 2C_{\alpha, \rho, B}(1 + R_T) \lesssim 1 + R_T \asymp \sqrt{\log(T)}.
\end{equation*}
Combined with $L_T \leq C_L \log^{q_L} (eT)$, the fixed point equation in Proposition \ref{prop: ulln-master} reduces to 
\begin{equation*}
\begin{aligned}
    (r^*)^2 &\asymp M_T\sqrt{\frac{S}{\mu_i}} \int_{r\gamma_f}^{r/2} \sqrt{\log\mathcal{N}\left(\overline{\mathcal{F}}_r, \epsilon, \|.\|_{\infty, \mathcal{B}_T} \right)} d\epsilon \\
    &\asymp M_T\sqrt{\frac{S}{\mu_i}}R_T^{d/2}L^{\alpha_f}\int_0^{r^*} 
   \epsilon^{-\alpha_f} d\epsilon \\
    &\asymp M_T\sqrt{\frac{S}{\mu_i}}\log^{d/4}(T) L^{\alpha_f} (r^*)^{1 - \alpha_f},
    \\ \implies  (r^*)^{\alpha_f + 1} &\asymp \operatorname{polylog}(T)\sqrt{\frac{S}{\mu_i}}.
\end{aligned}
\end{equation*}
Since $\alpha_f + 1 = (d+2\beta)/2\beta$, we have 
\begin{equation*}
\begin{aligned}
  r^* &\asymp \operatorname{polylog}(T)S^{\frac{\beta}{2\beta+d}}\mu_i^{-\frac{\beta}{2\beta+d}}.
\end{aligned}
\end{equation*}
Substituting this choice of $r^*$, the martingale offset bound has order 
\begin{equation*}
    \begin{aligned}
       \int_0^{r^*} \sqrt{\frac{J_T(\epsilon)}{\mu_i}} d \epsilon 
        &\asymp \frac{\operatorname{polylog}(T)}{\sqrt{\mu_i}} (r^*)^{1-\alpha_f} \\
        &\asymp \frac{\operatorname{polylog}(T)}{\sqrt{\mu_i}} \left[\left(\sqrt{S}\right)^{\frac{2\beta}{2\beta+d}}\mu_i^{-\frac{\beta}{2\beta+d}} \right]^{\frac{2\beta-d}{2\beta}}.
    \end{aligned}
\end{equation*}
We focus on the exponent on the $\mu_i$ term. Firstly, $-\beta/(2\beta+d) \times (2\beta-d)/2\beta = -(2\beta-d)/[2(2\beta+d)]$. Writing $-1/2 = 2\beta + d/[2(2\beta+d)]$, we have $-(2\beta-d)/[2(2\beta+d)] - (2\beta+d)/[2(2\beta+d)] = -2\beta/[2\beta+d]$. Thus, the martingale offset bound has order
\begin{equation*}
    \begin{aligned}
    \operatorname{polylog}(T) S^{\frac{2\beta-d}{2(2\beta+d)}} \mu_i^{-\frac{2\beta}{2\beta+d}}.
    \end{aligned}
\end{equation*}
On the other hand, we have 
\begin{equation*}
    (r^*)^2 \asymp  \operatorname{polylog}(T)\left(\sqrt{S}\right)^{\frac{4\beta}{2\beta+d}}\mu_i^{-\frac{2\beta}{2\beta+d}} = \operatorname{polylog}(T)S^{\frac{2\beta}{2\beta+d}}\mu_i^{-\frac{2\beta}{2\beta+d}}.
\end{equation*}
Thus by Theorem \ref{thm:final-risk}, we have with probability at least $1 - \delta$
\begin{equation*}
    R_{T,i}^{cond}(\widehat f_i) \lesssim \operatorname{polylog}(T)
    \left(\frac{S}{\mu_i}\right)^{\frac{2\beta}{2\beta + d}}.
\end{equation*}

\textbf{Case II: $2\beta < d$}. In the rough regime, we have $1 - \alpha_f < 0$. The entropy integral is given by
\begin{equation*}
\begin{aligned}
    \int_{r\gamma_f}^{r/2} \sqrt{J_T(\epsilon)} d\epsilon &\lesssim R_T^{d/2}L_T^{\alpha_f}\int_{r\gamma_f}^{r/2} \epsilon^{-\alpha_f} d\epsilon  \\
    &= R_T^{d/2}L_T^{\alpha_f} \frac{(r\gamma_f)^{1 - \alpha_f} - (r/2)^{1-\alpha_f}}{\alpha_f - 1} \\
    &\lesssim R_T^{d/2}L_T^{\alpha_f} (r\gamma_f)^{1 - \alpha_f}.
\end{aligned}
\end{equation*}
Thus, the fixed point equation in Proposition \ref{prop: ulln-master} reduces to 
\begin{equation*}
    (r^*)^2 \asymp M_T R_T^{d/2}L_T^{\alpha_f} \sqrt{\frac{S}{\mu_i}} (r\gamma_f)^{1 - \alpha_f} + M_T r \gamma_f,
\end{equation*}
where we note the presence of the fine scale approximation error above. The RHS of the equation above is minimized by balancing 
\begin{equation*}
    \begin{aligned}
        R_T^{d/2}L_T^{\alpha_f} \sqrt{\frac{S}{\mu_i}} (r\gamma_f)^{1 - \alpha_f} \asymp r\gamma_f \\
        \iff (r\gamma_f)^{\alpha_f} \asymp R_T^{d/2}L_T^{\alpha_f} \sqrt{\frac{S}{\mu_i}} \\
        \iff r\gamma_f \asymp \left(R_T^{d/2}L_T^{\alpha_f} \sqrt{\frac{S}{\mu_i}}\right)^{1/\alpha_f}.
    \end{aligned}
\end{equation*}
Substituting the fine scale chain into the critical radius equation, we obtain 
\begin{equation*}
    \begin{aligned}
        (r^*)^2 &\asymp M_T \left(R_T^{d/2}L_T^{\alpha_f}\sqrt{\frac{S}{\mu_i}} \right)^{1/\alpha_f} \\
        &= M_TR_T^{\beta}L_T \left(\frac{S}{\mu_i} \right)^{\beta/d} \\
        &\asymp \operatorname{polylog}(T) \left(\frac{S}{\mu_i} \right)^{\beta/d}.
    \end{aligned}
\end{equation*}
On the other hand, the martingale offset bound has order 
\begin{equation*}
 \frac{R_T^{d} L_T^{2\alpha_f}}{\mu_i}\gamma_f^{1 - 2\alpha_f} + \sqrt{d}\gamma_f,
\end{equation*}
since the dominating term is given by the lower cutoff $\gamma_f$. Balancing the two terms above, we obtain
\begin{equation*}
    \gamma_f^* \asymp \left(\frac{R_T^d L_T^{2\alpha_f}}{\sqrt{d}\mu_i} \right)^{1/2\alpha_f},
\end{equation*}
which after substituting back gives us the fine scale error of order 
\begin{equation*}
    d^{(d-\beta)/2d}R_T^{\beta}L_T \mu_i^{-\beta/d} \leq \sqrt{d}R_T^{\beta}L_T \mu_i^{-\beta/d} \asymp \operatorname{polylog(T)}\sqrt{d}\mu_i^{-\beta/d}, 
\end{equation*}
and by Theorem \ref{thm:final-risk}, we obtain with probability at least $1 - \delta$
\begin{equation*}
    R_{T,i}^{cond}(\widehat f_i) \lesssim \operatorname{polylog}(T)\left[\sqrt{d} + \left(\frac{S}{\mu_i} \right)^{\beta/d} \right].
\end{equation*}
Finally, for the boundary case where $2\beta = d$, we have $\int_{\gamma_f}^{\gamma_c} \epsilon^{-1} d\epsilon = \log(\gamma_c / \gamma_f)$, and a similar calculation yields the same rate $\operatorname{polylog(T)}\mu_i^{-\beta/d}$.

\subsection{Proof of Corollary \ref{cor:linear-rates}}
Using similar arguments as in Corollary \ref{cor:holder-rates}, the envelope function is of order at most $T^{-1}$. The metric entropy is given by 
    \begin{equation*}
\log\mathcal{N}\left(\overline{\mathcal{F}}_r, \epsilon, \|.\|_{\infty, \mathcal{B}_R} \right) \asymp d^2\log\left(\frac{r}{\epsilon} \right).
    \end{equation*}
The entropy integral is thus
\begin{equation*}
    \int_{r\gamma_f}^{r/2}\sqrt{d^2 \log\left(\frac{r}{\epsilon} \right)} d\epsilon = d\int_{r\gamma_f}^{ r/2}\sqrt{\log\left(\frac{r}{\epsilon} \right)} d\epsilon.
\end{equation*}
Since the integral converges, we can let $\gamma_f \rightarrow 0$. In this case, the fine  scale approximation error goes to zero. Using a change-of-variable argument, by setting $u = \epsilon / r$,  we have 
\begin{equation*}
\begin{aligned}
    \int_0^r \sqrt{\log\left(\frac{r}{\epsilon} \right)} d\epsilon &= r\int_0^1 \sqrt{\log\left( \frac{1}{u}\right)} du  \\
    &= r \int_0^1 \sqrt{-\log(u)} du \\
    &= r\sqrt{\frac{\pi}{2}}.
\end{aligned}
\end{equation*}
Thus, the fixed point equation is given by
\begin{equation*}
    \begin{aligned}
        (r^*)^2 &\asymp \sqrt{\frac{S}{\mu_i}}\int_0^{r^*}\sqrt{\log\left(\mathcal{N}(\overline{\mathcal{F}}_r, \epsilon, \|.\|_{\infty, \mathcal{B}_R} \right)} d\epsilon \\
        &\asymp d\sqrt{\frac{S}{\mu_i}}r^* \\
        \implies r^* &\asymp d\sqrt{\frac{S}{\mu_i}}.
    \end{aligned}
\end{equation*}
With the choice of $r^*$ according to the last display, the martingale offset bound has order 
\begin{equation*}
\begin{aligned}
\int_0^{r^*} \sqrt{\frac{J_T(\epsilon)}{\mu_i}} d\epsilon &\asymp \frac{d}{\sqrt{\mu_i}}\int_0^{r^*} \sqrt{\log(r/\epsilon)} d\epsilon \\
&\asymp \frac{d}{\sqrt{\mu_i}}r^* \\
&\asymp \frac{d^2\sqrt{S}}{\mu_i}.
\end{aligned}
\end{equation*}
By Theorem \ref{thm:final-risk}, we have with probability at least $1 - \delta$,
\begin{equation*}
\begin{aligned}
    R_{T,i}^{cond}(\widehat f_i) \lesssim \frac{d^2 \sqrt{S}}{\mu_i} + \frac{d^2 S}{\mu_i} 
    \asymp \frac{d^2 S}{\mu_i}. 
\end{aligned}
\end{equation*}
To prove the second part, we note that $f_i^*(x) = A_i^* x$ and $\est{f}_i(x) = \est{A}_i x$. Then  starting with the definition of $R_{T,i}^{cond}(\widehat f_i)$, in \eqref{eq:cond-pop-risk}, we obtain
\begin{align*}
     R_{T,i}^{cond}(\widehat f_i) 
     &= \frac{1}{Tp_i}\sum_{t=0}^{T-1}\mathbb{E}\left[\norm{(A_i^* - \est{A}_i) \widetilde X_t}_2^2 \mathbf{1}\{\widetilde s(t+1) = i \} \mid \mathcal{D}_T \right] \\
     &= \frac{1}{Tp_i}\sum_{t=0}^{T-1}\mathbb{E}\left[\norm{(A_i^* - \est{A}_i) A^*_{\tilde{s}(t)} \widetilde X_{t-1} + (A_i^* - \est{A}_i) \widetilde \varepsilon_t}_2^2 \mathbf{1}\{\widetilde s(t+1) = i \} \mid \mathcal{D}_T \right] \\
     &= \frac{1}{Tp_i}\sum_{t=0}^{T-1}\mathbb{E}\left[\norm{(A_i^* - \est{A}_i) A^*_{\tilde{s}(t)} \widetilde X_{t-1}}_2^2 \mathbf{1}\{\widetilde s(t+1) = i \} \mid \mathcal{D}_T \right] \\
     &+ \frac{1}{Tp_i}\sum_{t=0}^{T-1}\mathbb{E}\left[\norm{(A_i^* - \est{A}_i) \widetilde \varepsilon_t}_2^2 \mathbf{1}\{\widetilde s(t+1) = i \} \mid \mathcal{D}_T \right] \\
     &\geq \frac{1}{Tp_i}\sum_{t=0}^{T-1}\mathbb{E}\left[\norm{(A_i^* - \est{A}_i) \widetilde \varepsilon_t}_2^2 \mathbf{1}\{\widetilde s(t+1) = i \} \mid \mathcal{D}_T \right] \\
     &= \frac{1}{T} \sum_{t=0}^{T-1}\mathbb{E}\left[\norm{(A_i^* - \est{A}_i) \widetilde \varepsilon_t}_2^2 \mid \mathcal{D}_T \right] \\
     &= \norm{A_i^* - \est{A}_i}_F^2.
\end{align*}
This completes the proof.

\newpage
\section{Experiments}
 
We illustrate the numerical properties of the empirical risk minimizer in the linear and Hölder case, as seen in our corollaries. The Python code for the experiments can be found at \url{https://github.com/sunnywang93/switchnds}.

\subsection{Linear setting}

We explore the simple case of $K = 2$ regimes and $d = 2$ dimensions. Trajectories were simulated according to model \eqref{eq:model}, which simplifies in the linear setting to 
\begin{equation*}
    X_{t+1} = A^*_{s(t+1)}X_t + \varepsilon_t, \quad 0 \leq t \leq T-1,
\end{equation*}
with $X_0 = [0, 0]^\top$. The excitation sequence was chosen to be Gaussian random variables $(\varepsilon_t) \sim \mathcal{N}(0, \sigma^2 I_d)$. The state matrices were chosen to be 
\begin{equation*}
    A_0 = \begin{bmatrix}
        0.4 & 0.3 \\
        -0.1 & 0.5
    \end{bmatrix}, \quad
    A_1 = \begin{bmatrix}
        \cos(\theta) & -\sin(\theta) \\
        \sin(\theta) & \cos(\theta)
    \end{bmatrix},
\end{equation*}
respectively, with $\theta = 30^\circ$. Note that $\|A_0\|_2 \approx 0.6$, so Assumption \ref{ass:1} is satisfied, whereas $\|A_1\|_2 = 1$. The latter choice is made to investigate the numerical behavior in the presence of a marginally stable regime. Moreover, $A_1$ has a natural interpretation of being a rotation matrix. The trajectory lengths $T \in \{200, 500, 1000\}$, probabilities $(p_0, p_1) \in \{(0.1, 0.9), (0.5, 0.5), (0.9, 0.1) \}$ and noise levels $\sigma \in \{0.1, 0.5 \}$ were investigated, with all $18$ combinations of these parameters explored. $500$ replications were run for each configuration, leading to $9000$ simulation runs.

In each replication, estimation of the matrices $A_i, i \in \{0, 1\}$ were performed using least-squares with the regime-specific observations of the training trajectory. An independent test trajectory drawn from the same law is then generated, and the conditional excess prediction error computed, of the form
\begin{equation}\label{eq:cond-risk-linear-sim}
    \widehat L_{n_i, i}(\widehat A_i - A_i^*) = \frac{1}{\widetilde n_i}\sum_{t=0}^{T-1} \|\widehat A_i \widetilde X_t - A_i \widetilde X_t \|_2^2 \mathbf{1}\{\widetilde s(t+1) = i \}.
\end{equation}
Note that looking at the quantiles in \eqref{eq:cond-risk-linear-sim} gives us an estimate of the $R_{T,i}^{cond}(\widehat f_i)$, our quantity of interest. This is because for any measurable $g$, $\lemp(g) = n_i/T \widehat L_{n_i,i}(g)$ is the empirical proxy $\ltrans(g)$, and since $n_i/T \approx p_i$ with high probability which implies $\lemp(g) \approx p_i \widehat L_{n_i,i}(g)$, one has that $\widehat L_{n_i,i}(g)$ is the empirical analogue of $1/p_i \ltrans(g) = L_{T,i}^{cond}(g)$.

Simulation results can be seen in Figure \ref{fig:lin-experiments}, where log-log plots on trajectory lengths vs the prediction error are provided. In this case, note that if the error $L_{n_i, i} \approx CT^{-a}$, then the log-log plots gives $\log(L_{n_i, i}) \approx \log(C) - a \log(T)$, so $a$ serves as a proxy for the rate of convergence. The solid lines in each group represents the median prediction error, which is obtained by fitting a linear regression on the trajectory lengths. The shaded region represents the $0.05$-th to the $0.95$-th quantile. The estimated slope $\widehat a$ is very close to 1, implying a convergence rate of $O_{\mathbb{P}}(T^{-1})$ for a fixed regime $i$. Moreover, whenever the regime probability differs, the log errors only differ by a constant shift, as predicted by the theory, with the higher probability regime naturally enjoying lower prediction errors. Lastly, the errors move in the same direction as the noise levels, as expected.

\begin{figure}
    \centering
    \includegraphics[width=1\linewidth]{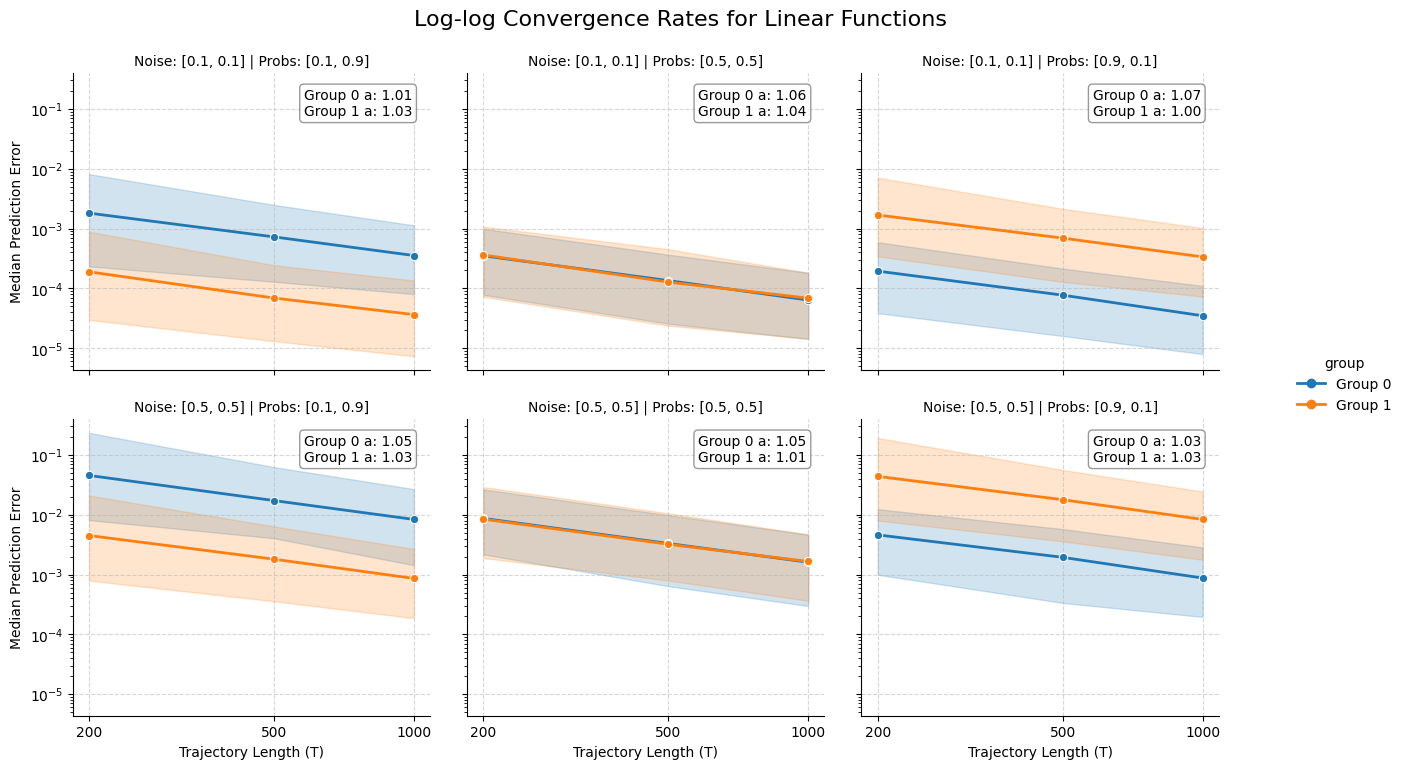}
    \caption{Log-log plots for the linear case.}
    \label{fig:lin-experiments}
\end{figure}

\subsection{Hölder setting}
In the non-linear setting, we restrict ourselves to dimension $d = 1$, again with $K = 2$ regimes. The true function was taken to be the fractional power function, given by
\begin{equation*}
    f_i^*(x) = c_i \operatorname{sgn}(x) |x|^{\beta_i}, \quad \beta_i \in (0, 1), \quad i = 0, 1,
\end{equation*}
which is clearly $\beta$-Hölder, and satisfies the Lyapunov type condition in Assumption \ref{ass:1} (see Claim \ref{cl:simu-lyapunov}). In the smooth regime, $c_1 = 0.7$, $c_2 = -0.6$, and $\beta_0 = 0.8$, $\beta_1 = 0.6$ were selected. The trajectory lengths $T \in \{200, 500, 1000, 2000, 4000\}$, probability $(p_0, p_1) \in \{(0.1, 0.9), (0.5, 0.5), (0.9, 0.1)\}$, and noise levels $\sigma \in \{0.1, 0.5\}$ were investigated, with all $30$ combinations explored. 500 replications were run for each configuration, leading to $15000$ simulation runs. 

In each replication, training trajectories were simulated and used to produce the estimates $\widehat f_i$ of the true functions $f_i^*$, for $i = 0, 1$. A B-splines basis expansion (with cubic splines) was used to estimate $f_i^*$, of the form
\begin{equation*}
    \widehat f_i(.) = \sum_{j=1}^{J_i} \theta_j \varphi_j(.).
\end{equation*}
The number of basis functions $J_i$ were selected as
\begin{equation*}
    J^*_i \asymp  \mu_i^{\frac{1}{2\beta_i + 1}}, \quad i = 0, 1,
\end{equation*}
according to the familiar optimal rate for splines estimation, assuming the Hölder exponent $\beta_i, i = 0, 1$ are known. A floor of at least 4 basis functions were taken to prevent numerical issues, which might occur with shorter trajectory lengths $T$. 

Analogous to the linear case, the prediction errors were evaluated on a fresh test trajectory drawn from the same law, computed according to
\begin{equation*}
   \widehat  L_{n_i, i}\left(f_i^* - \widehat f_i \right) = \frac{1}{\widetilde n_i}\sum_{t=0}^{T-1} \left|\widehat f_i(\widetilde X_t) - \ f_i^*(\widetilde X_t)\right|^2.
\end{equation*}
Simulation results can be seen in Figure \ref{fig:nonlin-experiments}. Again, a polynomial fit was performed on the log prediction errors against the trajectory lengths, with the solid line displaying the median prediction error and the shaded region containing the $0.05$-th to the $0.95$-th quantiles. The experiment results largely aligns with the theory, with estimated slopes mostly between $0.6$ and $0.9$ for a fixed regime $i$, which is in align with our choices of the Hölder exponent. Moreover, the regime probabilities scales inversely with the prediction error, as expected. Finally, the prediction errors are larger with higher noise, and vice-versa.

\begin{figure}
    \centering
    \includegraphics[width=1\linewidth]{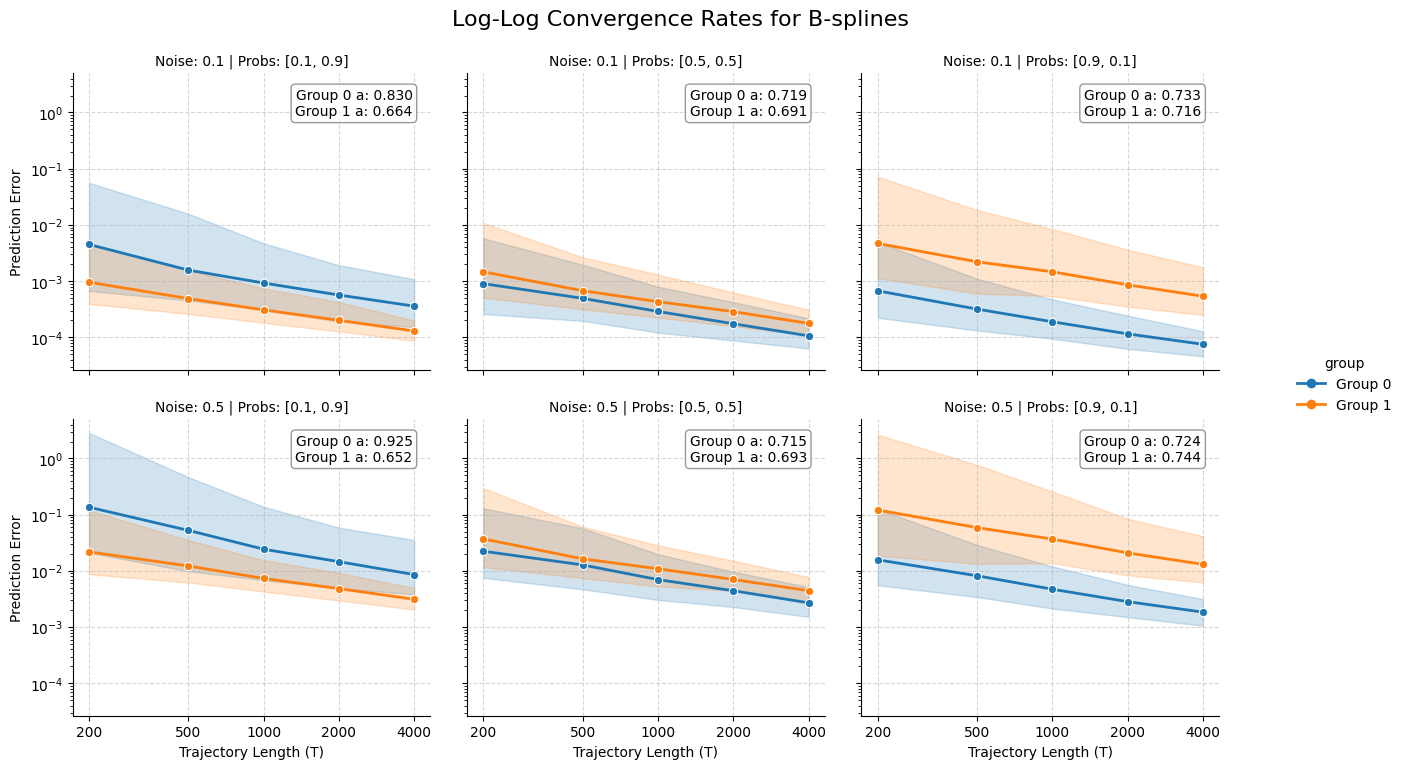}
    \caption{Log-log plots for the Hölder regime.}
    \label{fig:nonlin-experiments}
\end{figure}


\section*{Acknowledgments}
This work was supported by a Nanyang Associate Professrship (NAP) award from NTU Singapore.

\newpage
\appendix

\section{Probabilistic Tools}\label{app:appendix-prob}

\begin{theorem}[\cite{ville1939}]\label{thm:ville}
Let $X_0, X_1, \dots$ be a non-negative supermartingale. Then for any real number $a > 0$, it holds that
\begin{equation*}
    \mathbb{P}(\sup_{n \geq 0} X_n \geq a) \leq \frac{\mathbb{E}[X_0]}{a}.
\end{equation*}
\end{theorem}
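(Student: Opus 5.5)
We prove Ville's maximal inequality by reducing it to a finite-horizon statement and applying optional stopping at a bounded stopping time. Fix $a>0$ and $N\in\mathbb{N}$, and let $\tau := \inf\{n\ge 0: X_n \ge a\}\wedge N$. This is a bounded stopping time with respect to the filtration $(\mathcal{G}_n)$ for which $(X_n)$ is a supermartingale.

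The first step is to show that the stopped process $X_{n\wedge\tau}$ is again a supermartingale. Write
\begin{equation*}
X_{(n+1)\wedge\tau}-X_{n\wedge\tau}=\mathbf{1}\{\tau>n\}(X_{n+1}-X_n).
\end{equation*}
The event $\{\tau>n\}$ lies in $\mathcal{G}_n$. Taking conditional expectations and using $\mathbb{E}[X_{n+1}-X_n\mid\mathcal{G}_n]\le 0$ gives the supermartingale property. Iterating from $0$ to $N$ then yields $\mathbb{E}[X_\tau]=\mathbb{E}[X_{N\wedge\tau}]\le \mathbb{E}[X_0]$.

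The second step uses non-negativity. On the event $\{\max_{0\le n\le N}X_n\ge a\}$ the process hits level $a$ by time $N$, so $X_\tau\ge a$. Off this event we only know $X_\tau=X_N\ge 0$. Hence
\begin{equation*}
a\,\mathbb{P}\Big(\max_{0\le n\le N}X_n\ge a\Big)\le \mathbb{E}\big[X_\tau\mathbf{1}\{\max_{n\le N}X_n\ge a\}\big]\le \mathbb{E}[X_\tau]\le \mathbb{E}[X_0].
\end{equation*}
This is the only place non-negativity is needed: it lets us drop the contribution of $X_\tau$ on the complementary event.

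The last step passes to the infinite horizon. The events $\{\max_{n\le N}X_n\ge a\}$ increase in $N$ to $\{\exists n: X_n\ge a\}$. By continuity from below,
\begin{equation*}
\mathbb{P}\big(\exists n: X_n\ge a\big)\le \mathbb{E}[X_0]/a.
\end{equation*}
The statement is about $\{\sup_n X_n\ge a\}$, which can be strictly larger when the supremum equals $a$ but is never attained. To handle this, apply the bound at level $a'<a$, noting that $\{\sup_n X_n\ge a\}\subseteq\{\exists n: X_n\ge a'\}$. Letting $a'\uparrow a$ gives $\mathbb{P}(\sup_n X_n\ge a)\le \mathbb{E}[X_0]/a$.

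The main subtlety is this final limiting step, specifically the distinction between $\sup_n X_n\ge a$ and attainment of the level $a$; the approximation at level $a'<a$ resolves it. The rest is routine optional stopping for a bounded stopping time.
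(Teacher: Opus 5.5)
Your proof is correct. The stopped-process identity $X_{(n+1)\wedge\tau}-X_{n\wedge\tau}=\mathbf{1}\{\tau>n\}(X_{n+1}-X_n)$ is sound, as is using nonnegativity to discard $X_N$ off the hitting event. Approximating at a lower level $a'$ to pass from attainment of $a$ to $\{\sup_n X_n\ge a\}$ also works; state explicitly that $a'\in(0,a)$, since your finite-level bound was derived for positive levels.

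The paper does not prove this statement. It quotes it from \cite{ville1939} and uses it as a black box, so there is no in-paper argument to compare with; yours is the standard optional-stopping proof of the maximal inequality for nonnegative supermartingales.

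The paper only ever invokes the finite-horizon form, a maximum over finitely many indices bounded by the expectation of the first term divided by the level. This appears in the proofs of Proposition~\ref{prop:unif-pena} and Lemma~\ref{lem:final}. That form is exactly your second step, applied to the process started at its first index, so those applications do not need your final limiting argument.
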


\begin{proposition}\label{prop:unif-pena}
    Let $\{d_t, \mathcal{F}_t\}_{t \geq 1}$ be a martingale difference sequence. Suppose there exists $\sigma^2_t  > 0, c > 0$ such that $\mathbb{E}[d_t^2 \mid \mathcal{F}_{t-1}] \leq \sigma^2_t$ almost surely, and the following Bernstein condition holds for all $p \geq 2$:
    \begin{equation}\label{eq:bern-mds}
        \mathbb{E}[|d_t|^p \mid \mathcal{F}_{t-1}] \leq \frac{p!}{2}\sigma^2_t c^{p-2}.
    \end{equation}
Let $M_n = \sum_{t=1}^n d_t$ be a martingale, and $V_n^2 = \sum_{t=1}^n \sigma^2_t$ be the quadratic variation. Then for all $x, y > 0$, we have 
\begin{equation*}
    \mathbb{P}\left(\max_{1 \leq j \leq n}M_j > x, V_n^2 \leq y \right) \leq \exp\left(-\frac{x^2}{2(y + cx)} \right).
\end{equation*}
\end{proposition}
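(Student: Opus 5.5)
The plan is the exponential supermartingale method in the style of Freedman and de la Pe\~na, followed by a maximal inequality. Set $V_j^2 := \sum_{t=1}^j \sigma_t^2$; this process is predictable because each $\sigma_t^2$ is $\mathcal{F}_{t-1}$-measurable in the setting where we apply the bound. For $\lambda \in (0, 1/c)$ we show
\begin{equation*}
\mathbb{E}[e^{\lambda d_t} \mid \mathcal{F}_{t-1}] \leq \exp\left(\frac{\lambda^2 \sigma_t^2}{2(1-\lambda c)}\right).
\end{equation*}
To see this, expand the exponential as a power series. The linear term vanishes because $\mathbb{E}[d_t \mid \mathcal{F}_{t-1}] = 0$. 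Bound the $p$-th term, $p \geq 2$, by the Bernstein condition \eqref{eq:bern-mds}:
\begin{equation*}
\frac{\lambda^p}{p!}\,\mathbb{E}\big[|d_t|^p \mid \mathcal{F}_{t-1}\big] \leq \frac{\lambda^2 \sigma_t^2}{2}(\lambda c)^{p-2}.
\end{equation*}
Summing the geometric series gives $\mathbb{E}[e^{\lambda d_t} \mid \mathcal{F}_{t-1}] \leq 1 + \lambda^2\sigma_t^2/(2(1-\lambda c))$, and $1+u \leq e^u$ finishes the step. Consequently
\begin{equation*}
Y_j := \exp\left(\lambda M_j - \frac{\lambda^2 V_j^2}{2(1-\lambda c)}\right)
\end{equation*}
is a nonnegative supermartingale with $Y_0 = 1$. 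Indeed, $V_j^2$ is $\mathcal{F}_{j-1}$-measurable, so it can be pulled out of the conditional expectation, and the one-step bound applies directly.

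For the maximal step, note that $V_j^2$ is nondecreasing in $j$. On the event $\{\max_{j \leq n} M_j > x,\ V_n^2 \leq y\}$, let $j$ be an index with $M_j > x$. Then $V_j^2 \leq V_n^2 \leq y$, so
\begin{equation*}
Y_j \geq \exp\left(\lambda x - \frac{\lambda^2 y}{2(1-\lambda c)}\right).
\end{equation*}
Hence the event is contained in $\{\sup_j Y_j \geq e^{\lambda x - \lambda^2 y/(2(1-\lambda c))}\}$. Ville's inequality (Theorem \ref{thm:ville}) then bounds the probability by
\begin{equation*}
\exp\left(-\lambda x + \frac{\lambda^2 y}{2(1-\lambda c)}\right).
\end{equation*}
Alternatively, one can stop at the first such $j$ and apply optional stopping.

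It remains to optimize over $\lambda$. We do not need the exact minimizer; the choice $\lambda = x/(y + cx)$ suffices, and it lies in $(0,1/c)$. With this choice $1 - \lambda c = y/(y+cx)$, so the quadratic term equals $\lambda^2 y/(2(1-\lambda c)) = x^2/(2(y+cx))$. The exponent is therefore $-x^2/(y+cx) + x^2/(2(y+cx)) = -x^2/(2(y+cx))$, which is the claimed bound.

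The main obstacle is measurability. The argument needs $\sigma_t^2$ to be predictable, or at least needs $V_j^2$ to be controllable pathwise. If the $\sigma_t^2$ are meant only as deterministic bounds, the argument goes through verbatim, and the event $\{V_n^2 \leq y\}$ is then either trivially true or trivially false. If instead $\sigma_t^2$ are arbitrary random variables, the one-step bound cannot be pulled out of the conditional expectation. In that case I would first replace $\sigma_t^2$ by the conditional variance $\mathbb{E}[d_t^2 \mid \mathcal{F}_{t-1}]$ where possible, noting that the Bernstein condition is only used through its upper bound. In the application in Lemma \ref{lem:chaining}, the bound $c_k^2$ is deterministic, so this issue does not arise.
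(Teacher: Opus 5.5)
Your proposal is correct and follows essentially the same route as the paper: the Bernstein-moment bound on the conditional mgf, the exponential supermartingale $\exp\bigl(\lambda M_j - \lambda^2 V_j^2/(2(1-\lambda c))\bigr)$, Ville's inequality via the event inclusion that uses monotonicity of $V_j^2$, and the explicit suboptimal choice $\lambda = x/(y+cx)$. Your measurability remark makes explicit an assumption the paper uses silently, namely that $\sigma_t^2$ is $\mathcal{F}_{t-1}$-measurable or deterministic; this holds in Lemma \ref{lem:chaining}, so your proposed fallback of substituting the conditional variance is not needed (and on its own it would not preserve the $p \geq 3$ Bernstein bound).
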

\begin{proof}

\textit{Step 1:Moment generating function (mgf) bounds. }

    Fix $0 \leq \lambda < c^{-1}$. Using the mgf, together with the Bernstein condition and variance bound, we have 
    \begin{equation*}
        \begin{aligned}
            \mathbb{E}\left[\exp\left(\lambda d_t \right) \mid \mathcal{F}_{t-1} \right] &= 1 + \frac{\lambda^2}{2}\mathbb{E}[d_t^2 \mid \mathcal{F}_{t-1}] + \sum_{p \geq 3}\frac{\lambda^p}{p!}\mathbb{E}[d_t^p \mid \mathcal{F}_{t-1}] \\
            &\leq 1 + \frac{\lambda^2 \sigma^2_t}{2} + \sum_{p \geq 3}\frac{\lambda^2\sigma^2_t}{2}(\lambda c)^{p-2} \\
            &= 1 + \frac{\lambda^2 \sigma^2_t}{2}\sum_{p \geq 2}(\lambda c)^{p-2} \\
            &\leq 1 + \frac{\lambda^2 \sigma^2_t}{2} \frac{1}{1 - \lambda c} \\
            &\leq \exp\left(\frac{\lambda^2 \sigma^2_t}{2(1 - \lambda c)} \right).
        \end{aligned}
    \end{equation*}

\textit{Step 2: Constructing the supermartingale.}

Define
\begin{equation*}
    Z_j := \exp\left(\lambda M_j - \frac{\lambda^2 V_j^2}{2(1 - \lambda c)} \right).
\end{equation*}
Since $M_j = M_{j-1} + d_j$ and $V_j^2 = V_{j-1}^2 + \sigma_j^2$, we have 
\begin{equation*}
    \begin{aligned}
        Z_j &= \exp\left(\lambda\{M_{j-1} + d_j \} - \frac{\lambda^2\{V^2_{j-1} - \sigma_j^2 \}}{2(1 - \lambda c)} \right) \\
        &= Z_{j-1}\exp\left(\lambda d_j - \frac{\lambda^2\sigma_j^2}{2(1 - \lambda c)} \right).
    \end{aligned}
\end{equation*}
Taking conditional expectations and using the mgf bounds from Step 1, we have 
\begin{equation*}
    \begin{aligned}
        \mathbb{E}\left[Z_j \mid \mathcal{F}_{j-1} \right] &= Z_{j-1}\exp\left(-\frac{\lambda^2 \sigma^2_j}{2(1 - \lambda c)} \right)\mathbb{E}\left[\exp\left(\lambda d_j \mid \mathcal{F}_{j-1} \right) \right] \\
        &\leq Z_{j-1}\exp\left(-\frac{\lambda^2 \sigma_j^2}{2(1 - \lambda c)} \right)\exp\left(\frac{\lambda^2 \sigma^2_t}{2(1 - \lambda c)} \right) \\
        &= Z_{j-1},
    \end{aligned}
\end{equation*}
so $Z_j$ is a non-negative supermartingale. By Ville's inequality, we have
\begin{equation*}
    \mathbb{P}\left(\max_{1 \leq j \leq n}Z_j \geq a \right) \leq \frac{\mathbb{E}[Z_1]}{a} \leq \frac{\mathbb{E}[Z_0]}{a} = \frac{1}{a}.
\end{equation*}

\textit{Step 3: Event inclusion.}

Let 
\begin{equation*}
    \mathcal{C} = \left\{\max_{1 \leq j \leq n} M_j \geq x, V_n^2\leq y \right\}.
\end{equation*}
Note that if $\mathcal{C}$ occurs, then there exists some $j^* \in \{1, \dots, n\}$ such that $M_{j^*} \geq x$. Since the variances are non-negative, we have $V^2_{j^*} \leq V_n^2 \leq y$. Now set
\begin{equation*}
    a = \exp\left(\lambda x - \frac{\lambda^2 y}{2(1 - \lambda c)} \right).
\end{equation*}
Then we have 
\begin{equation*}
    \begin{aligned}
        Z_j^* &= \exp\left(\lambda M_{j^*} - \frac{\lambda^2 V^2_{j^*}}{2(1 - \lambda c)} \right) \\
        &\geq \exp\left(\lambda x - \frac{\lambda^2 y}{2(1 - \lambda c)} \right) \\
        &= a.
    \end{aligned}
\end{equation*}
Since $\max_{1 \leq j \leq n} Z_j \geq Z_{j^*} \geq a$, we have that $\mathcal{C} \subseteq \{\max_{1 \leq j \leq n} Z_j \geq a \}$. Thus, we have
\begin{equation}\label{eq:mgf-bound-bf-opt}
    \begin{aligned}
        \mathbb{P}\left(\max_{1 \leq j \leq n} M_j \geq x, V_n^2 \leq y \right) &\leq \mathbb{P}\left(\max_{1 \leq j \leq n}Z_j \geq a \right) \\
        &\leq \frac{1}{a} \\
        &= \inf_{\lambda \in [0, 1/c)} \exp\left(-\left(\lambda x - \frac{\lambda^2 y}{2(1 - \lambda c)} \right) \right).
    \end{aligned}
\end{equation}

\textit{Step 4: Optimizing the mgf bound.}
Let 
\begin{equation*}
    f(\lambda ) = \lambda x - \frac{\lambda^2 y}{2(1 - \lambda c)}.
\end{equation*}
Taking the derivative with respect to $\lambda$, we have 
\begin{equation*}
    f^\prime(\lambda) = x - \frac{y\lambda}{1 - c\lambda}\frac{2 - c\lambda}{2(1 - c\lambda)}.
\end{equation*}
Since $c\lambda < 1$, we have 
\begin{equation*}
    \begin{aligned}
        2 - 2c\lambda \leq 2 - c\lambda \ 
        \iff \ \frac{2 - c\lambda}{2 - 2c\lambda} \geq 1.
    \end{aligned}
\end{equation*}
This implies that 
\begin{equation*}
\begin{aligned}
    \frac{y\lambda(2 - c\lambda)}{2(1 - c\lambda)^2} \geq \frac{y\lambda}{1 - c\lambda} \\
    \implies f^\prime(\lambda) \leq x - \frac{y\lambda}{1 - c\lambda} =: \widetilde f^\prime(\lambda).
\end{aligned}
\end{equation*}
Setting $\widetilde f^\prime(\lambda) = 0$, we obtain
\begin{equation*}
    \widetilde \lambda^* = \frac{x}{y + cx}.
\end{equation*}
By a simple calculation, the second derivative is given by
\begin{equation*}
     f^{\prime\prime}(\lambda) = -\frac{y}{(1 - c\lambda)^3} < 0,
\end{equation*}
so any critical point $\lambda \in [0, 1/c)$ is a global maximum. Let $\lambda^*$ be the maximizer of $f(\lambda)$. Since $\widetilde \lambda^*$ gives a slightly suboptimal solution compared to $\lambda^*$, $\exp(-f(\lambda^*)) \leq \exp(-f(\widetilde \lambda^*))$. Substituting in the value of $\widetilde \lambda^*$ into $-f(\lambda)$, we have
\begin{equation*}
    \begin{aligned}
        -f(\widetilde \lambda^*) &= -\frac{x^2}{y + cx} + \frac{(x/y+cx)^2 y}{2(1 - cx/(y+cx))} \\
        &= -\frac{x^2}{y + cx} + \frac{x^2y/(y + cx)^2}{2(y/(y+cx))} \\
        &= -\frac{x^2}{2(y + cx)}.
    \end{aligned}
\end{equation*}
Finally, using the expression above for the RHS of \eqref{eq:mgf-bound-bf-opt}, we obtain
\begin{equation*}
     \mathbb{P}\left(\max_{1 \leq j \leq n} M_j \geq x, V_n^2 \leq y \right) \leq \exp\left(-\frac{x^2}{2(y + cx)} \right).
\end{equation*}
\end{proof}

\section{Dynamics, stability and examples}\label{app:appendix-dynamics}

\subsection{Localization from exponential drift}

\begin{lemma}\label{lem:traj-ball}
    Suppose that assumption \ref{ass:1} holds. Then with probability at least $1 - \delta$, it holds that
    \begin{equation*}
        \max_{0 \leq t \leq T-1}\|X_t\|_2 \leq \sqrt{\log(TC_{\rho, B}/\delta)/\alpha}. 
    \end{equation*}
\end{lemma}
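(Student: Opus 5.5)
We will use the exponential drift condition of Assumption \ref{ass:1} twice: first to bound the exponential moment of each $\|X_t\|_2^2$ uniformly in $t$, and then to pass to the maximum over $t$ by a union bound with Markov's inequality.

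\textbf{Step 1: a uniform exponential moment.} Set $V(x) := \exp(\alpha\|x\|_2^2)$ and $a_t := \mathbb{E}[V(X_t)]$. The tower property together with Assumption \ref{ass:1} gives
\begin{equation*}
a_{t+1} = \mathbb{E}\big[\mathbb{E}[V(X_{t+1}) \mid X_t]\big] \leq \rho\, a_t + B .
\end{equation*}
Iterating from $X_0 = 0$, where $a_0 = 1$, yields
\begin{equation*}
a_t \leq \rho^t + B\sum_{j=0}^{t-1}\rho^j \leq 1 + \frac{B}{1-\rho} =: C_{\rho,B}
\end{equation*}
for every $t \geq 0$. This is the content of the uniform exponential moment bound (Claim \ref{cl: uniform-exp-moment}) used elsewhere in the paper.

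One subtlety concerns conditioning. The drift condition is stated conditionally on $X_t = x$, while in the switched model the law of $X_{t+1}$ also depends on $s(t+1)$. Since $s(t+1)$ is independent of $\mathcal{F}_t$, the conditional expectation given $X_t$ averages over the mode. I read Assumption \ref{ass:1} as bounding exactly this averaged quantity, which is enough for the recursion above. If one instead conditions on $\mathcal{F}_t$, the same recursion holds as long as the drift bound is valid given $\mathcal{F}_t$, which the Markov structure of $(X_t)$ guarantees.

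\textbf{Step 2: a single time point.} For $u > 0$, Markov's inequality applied to $V$ gives
\begin{equation*}
\mathbb{P}(\|X_t\|_2 > u) = \mathbb{P}\big(V(X_t) > e^{\alpha u^2}\big) \leq C_{\rho,B}\, e^{-\alpha u^2}.
\end{equation*}

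\textbf{Step 3: the maximum over $t$.} A union bound over $t = 0,\dots,T-1$ gives
\begin{equation*}
\mathbb{P}\Big(\max_{0\leq t\leq T-1}\|X_t\|_2 > u\Big) \leq T\, C_{\rho,B}\, e^{-\alpha u^2}.
\end{equation*}
Setting the right-hand side equal to $\delta$ gives $u = \sqrt{\log(TC_{\rho,B}/\delta)/\alpha}$, which is the claimed radius.

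The only delicate point is the conditioning issue noted in Step 1. Everything else is routine, and no mixing properties are needed.
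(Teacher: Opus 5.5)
Your proposal is correct and follows the paper's proof essentially step for step: you unroll the drift recursion via the tower property to bound $\mathbb{E}[\exp(\alpha\|X_t\|_2^2)]$ uniformly by $C_{\rho,B}$, then combine Markov's inequality with a union bound over the $T$ time points and solve $TC_{\rho,B}e^{-\alpha u^2} = \delta$. Your explicit constant $C_{\rho,B} = 1 + B/(1-\rho)$ keeps the $\rho^t$ contribution from $X_0 = 0$, so it is slightly more careful than the bound $B/(1-\rho)$ in the paper's uniform-moment claim; your conditioning remark is harmless, since conditioning on $\sigma(X_t)$ alone already suffices for the recursion.
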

\begin{proof}
    By the tower property and unrolling the recurrence in assumption \ref{ass:1}, we have
    \begin{equation*}
        \begin{aligned}
            \mathbb{E}[\exp(\alpha \|X_{t}\|_2^2) &\leq \rho \mathbb{E}[\exp(\alpha \|X_{t-1}\|_2^2)] + B \\
            &\leq \rho^2 \mathbb{E}[\exp(\alpha \|X_{t-2}\|_2^2)] + \rho B + B \\
            &\leq \rho^3 \mathbb{E}[\exp(\alpha \|X_{t-3}\|_2^2] + \rho^2B + \rho B + B \\
            &\vdots \\
            &\leq \rho^t \mathbb{E}[\exp(\alpha \|X_0\|_2^2] + B\sum_{j=0}^{t-1}\rho^j \\
            &= \rho^t + B\sum_{j=0}^{t-1}\rho^j.
        \end{aligned}
    \end{equation*}
Since $\rho < 1$, the series on the RHS above converges, and we have
\begin{equation}\label{eq:exp-tracj-unif}
    \mathbb{E}[\exp(\alpha \|X_t\|_2^2)] \leq C_{\rho, B},
\end{equation}
where $C_{\rho, B}$ is a constant depending only on $(\rho, B)$.

By the union bound and Markov's inequality, we have 
\begin{equation*}
    \begin{aligned}
        \mathbb{P}(\max_{0 \leq t \leq T-1}\|X_t\|_2 > u) &\leq \sum_{t=0}^{T-1}\mathbb{P}(\|X_t\|_2 > u) \\
        &= \sum_{t=0}^{T-1}\mathbb{P}(\exp(\alpha \|X_t\|_2^2) > \exp(\alpha u^2)) \\
        &\leq \sum_{t=0}^{T-1} \frac{\mathbb{E}[\exp(\alpha \|X_t\|_2^2)] }{\exp(\alpha u^2)} \\
        &\leq T C_{\rho, B} \exp(-\alpha u^2),
    \end{aligned}
\end{equation*}
where the last inequality is a consequence of \eqref{eq:exp-tracj-unif}. Setting the probability threshold to $\delta$, we obtain with probability at least $1 - \delta$, 
\begin{equation*}
  \max_{0 \leq t \leq T-1} \|X_t\|_2 \leq   \sqrt{\frac{1}{\alpha}\log\left(\frac{TC_{\rho, B}}{\delta} \right)},
\end{equation*}
as desired.
\end{proof}

\begin{definition}[Localized $\beta$-Hölder class]\label{def:local-Hölder}
Let $\beta = s + \gamma$, where $s \in \mathbb{N}_0$, $\gamma \in (0, 1)$, and $\mathcal{B} \subset \mathbb{R}^d$ to be a compact set. For a multi-index $j = (j_1, \dots, j_d) \in \mathbb{N}_0^d$, let 
\begin{equation*}
    |j| = j_1 + \dots + j_d, \qquad D^j f = \frac{\partial^{|j| } f}{\partial x_1^{j_1}\dots \partial x_d^{j_d}},
\end{equation*}
Define the class $C^{s,\gamma}(\mathcal{B}, \mathbb{R}^d)$ as 
\begin{equation*}
    C^{s;\gamma}(\mathcal{B}, \mathbb{R}^d) := \left\{f:\mathcal{B} \rightarrow \mathbb{R}^d \mid \|f\|_{C^{s,\gamma}(\mathcal{B})} < \infty \right\},
\end{equation*}
where the $C^{s, \gamma}$ norm on $\mathcal{B}$ is given by
\begin{equation*}
    \|f\|_{C^{s,\gamma}(\mathcal{ B})} := \sum_{|j| \leq s}\sup_{x \in \mathcal{B}} \|D^j f(x)\|_2 + \sum_{|j| = s}\sup_{\underset{x \neq y}{x, y \in \mathcal{B}}} \frac{\|D^j f(x) - D^j f(y) \|_2 }{\|x - y\|_2^{\gamma}}.
\end{equation*}
The localized Hölder class is given by 
\begin{equation*}
    \mathcal{H}^{\beta}(L, \mathcal{B}) := \left\{f \in C^{s,\gamma}(\mathcal{B}; \mathbb{R}^d): \|f\|_{C^{s,\gamma}(\mathcal{ B})} \leq L \right\}, \qquad \beta = s + \gamma.
\end{equation*}

\end{definition}
 
\subsection{Verification and examples for different dynamics}
\begin{proof}[Proof of Claim \ref{rmk:linear-stab}]
    For any $\delta > 0$, choose $\kappa_A := (1+\delta)\rho_A^2 < 1$, which is possible since $\rho_A < 1$. By the $\delta$-Young's inequality 
    $\|a+b\|_2^2 \leq (1+\delta)\|a\|_2^2 + (1 + 1/\delta)\|b\|_2^2$, we have for every $i \in \{1, \dots, K\}$
    \begin{equation*}
        \begin{aligned}
            \|X_{t+1}\|_2^2 &= \|A^*_i X_t + \varepsilon_{t+1}\|_2^2 \\
            &\leq (1+\delta) \|A^*_i X_t\|_2^2 + (1 + 1/\delta)\|\varepsilon_{t+1}\|_2^2 \\
            &\leq \kappa_A \|X_t\|_2^2 + (1 + 1/\delta)\|\varepsilon_{t+1}\|_2^2.
        \end{aligned}
    \end{equation*}
Since the inequality above holds for every mode $i$, we have after multiplication, exponentiation and taking conditional expectations
\begin{equation*}
    \begin{aligned}
\mathbb{E}\left[\exp\left(\alpha_A \|X_{t+1}\|_2^2 \right) \mid X_t = x \right] \leq \exp\left(\alpha_A \kappa_A \|x\|_2^2 \right)\mathbb{E}\left[\exp\left(\alpha_A (1 + 1/\delta)\|\varepsilon_{t+1}\|_2^2  \right)\right]
    \end{aligned}
\end{equation*}
Since $\varepsilon$ is sub-gaussian, there exists $c_0 > 0$ such that
\begin{equation*}
    \mathbb{E}[\exp(c_0 \|\varepsilon_{t+1}\|_2^2)] < \infty.
\end{equation*}
Choose $\alpha_A > 0$ sufficiently small such that 
\begin{equation*}
    \alpha_A (1 + 1/\delta) \leq c_0,
\end{equation*}
and define
\begin{equation*}
    C_A := \mathbb{E}[\exp(\alpha_A (1 + 1/\delta) \|\varepsilon_{t+1}\|_2^2)],
\end{equation*}
which implies
\begin{equation*}
    \mathbb{E}[\exp(\alpha_A \|X_{t+1}\|_2^2) \mid X_t = x] \leq C_A \exp(\alpha_A \kappa_A \|x\|_2^2).
\end{equation*}
Now choose any $\rho \in (0, 1)$, and define
\begin{equation*}
    B_A := \sup_{x \in \mathbb{R}^d}\left\{C_A \exp(\alpha_A \kappa_A \|x\|_2^2) - \rho \exp(\alpha_A \|x\|_2^2) \right\},
\end{equation*}
which is finite since $\kappa_A < 1$, so the RHS of the display above converges to $-\infty$ as $\|x\|_2 \rightarrow \infty$. Since the function inside the supremum is continuous, combined with the fact that it converges to negative infinity, it has a finite global supremum, and we have
\begin{equation*}
    C_A \exp(\alpha_A \kappa_A \|x\|_2^2) \leq \rho \exp(\alpha_A \|x\|_2^2) + B_A, 
\end{equation*}
and gathering facts, we obtain
\begin{equation*}
    \mathbb{E}[\exp(\alpha_A \|X_{t+1}\|_2^2) \mid X_t = x] \leq \rho \exp(\alpha_A \|x\|_2^2) + B_A,
\end{equation*}
so the exponential drift condition holds whenever $\max_{1\leq i \leq K}\|A_i^*\|_2 < 1$.

\end{proof}

\begin{claim}\label{cl:simu-lyapunov}
    Fix $d = 1$. The function 
    \begin{equation*}
        f^*_i(x) = c_i\operatorname{sgn}(x) |x|^{\beta_i}, \quad \beta_i \in (0, 1), \forall i \in [K],
    \end{equation*}
under the model \eqref{eq:model} where the $(\varepsilon_t)_{t \in \mathbb{Z}}$ are gaussian random variables satisfies Assumption \ref{ass:1}, with 
\begin{equation*}
    B := \max\left\{0, \max_{x \in [-R,R]}h(x) \right\} < \infty,
\end{equation*}
for some $R > 0$, and $h$ is the continuous function 
\begin{equation*}
      h(x) := \mathbb{E}[\exp(\alpha X^2_{t+1}) \mid X_t = x] - \rho \exp(\alpha x^2).
\end{equation*}
\end{claim}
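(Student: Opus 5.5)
We have to show the drift inequality: there are $\alpha>0$ and $\rho\in(0,1)$ with $h(x)\le B$ for all $x$, where $B=\max\{0,\max_{[-R,R]}h\}$. The plan is to pick $\alpha$ and $\rho$ first, then find $R$ with $h(x)\le 0$ for $|x|>R$. On $[-R,R]$ continuity of $h$ gives $h\le B<\infty$, and outside it $h\le 0\le B$, which is exactly Assumption \ref{ass:1}.

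\emph{Step 1: closed form for $h$.} Since $\varepsilon\sim\mathcal N(0,\sigma^2)$ and $X_{t+1}=f^*_{s(t+1)}(x)+\varepsilon$ given $X_t=x$, we average over the mode:
\begin{equation*}
\mathbb{E}[\exp(\alpha X_{t+1}^2)\mid X_t=x]=\sum_{i=1}^K p_i\,\frac{1}{\sqrt{1-2\alpha\sigma^2}}\exp\Big(\frac{\alpha c_i^2|x|^{2\beta_i}}{1-2\alpha\sigma^2}\Big),
\end{equation*}
valid for $0<\alpha<1/(2\sigma^2)$ (the Gaussian chi-square mgf). Each term is continuous in $x$, because $x\mapsto|x|^{\beta_i}$ is continuous, so $h$ is continuous. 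This already settles the finiteness of $\max_{[-R,R]}h$.

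\emph{Step 2: behaviour at infinity.} Fix any $\rho\in(0,1)$ and such an $\alpha$. Set $\bar\beta=\max_i\beta_i<1$ and $\bar c=\max_i|c_i|$. Then
\begin{equation*}
h(x)\le \frac{1}{\sqrt{1-2\alpha\sigma^2}}\exp\Big(\frac{\alpha\bar c^2 (1+|x|)^{2\bar\beta}}{1-2\alpha\sigma^2}\Big)-\rho e^{\alpha x^2}.
\end{equation*}
Here $(1+|x|)^{2\bar\beta}$ handles both $|x|\le1$ and $|x|>1$ uniformly. Since $2\bar\beta<2$, the ratio of the first term to $\rho e^{\alpha x^2}$ equals
\begin{equation*}
\exp\big(a(1+|x|)^{2\bar\beta}-\alpha x^2+\text{const}\big),
\end{equation*}
which tends to $0$ as $|x|\to\infty$. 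Hence there is a finite $R$ beyond which $h(x)<0$.

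\emph{Step 3: conclude.} With this $R$ and $B$ as in the claim, $h\le B$ everywhere, which gives $(\rho,B,\alpha)$-exponential drift.

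The only delicate point is Step 2. There the sublinear growth $\beta_i<1$ is essential: for $\beta_i=1$ the first term would compete with $e^{\alpha x^2}$, and we would need a condition like $c_i^2/(1-2\alpha\sigma^2)<1$. I expect this comparison of exponents to be the main, though elementary, obstacle.
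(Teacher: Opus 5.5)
Your proof is correct and follows essentially the same route as the paper: you use the closed-form Gaussian moment generating function for $\exp(\alpha X_{t+1}^2)$, averaged over the independent mode, and the fact that $|x|^{2\beta_i}=o(x^2)$ forces $h(x)<0$ outside some $[-R,R]$, with continuity of $h$ bounding it on that compact interval. The only cosmetic differences are that you allow noise variance $\sigma^2$ and dominate all modes at once via $\bar c$ and $(1+|x|)^{2\bar\beta}$, whereas the paper takes unit variance and argues mode by mode using that $K$ is finite.
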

\begin{proof}
Fix any $\alpha \in (0, 1/2)$, $\rho \in (0, 1)$. Under the model \eqref{eq:model}, conditionally on $X_t = x$ and $s(t+1) = i$, $X_{t+1}$ is a Gaussian random variable with mean $f_i^*(x)$ and unit variance. Recall that for a gaussian random variable $Y \sim \mathcal{N}(\mu, 1)$, one has 
\begin{equation*}
    \mathbb{E}[\exp(\alpha Y^2)] = \frac{1}{\sqrt{1 - 2\alpha}} \exp\left(\frac{\alpha \mu^2}{1 - 2\alpha} \right), \quad \alpha < 1/2.
\end{equation*}
Thus, we have 
\begin{equation*}
    \mathbb{E}[\exp(\alpha X^2_{t+1}) \mid X_t = x, s(t+1) = i] = \frac{1}{\sqrt{1-2\alpha}}\exp\left(\frac{\alpha c_i^2 |x|^{2\beta_i}}{1 - 2\alpha} \right), \quad \alpha < 1/2.
\end{equation*}
Since $s(t+1)$ is independent of $X_t$, the law of total expectation gives us 
\begin{equation*}
\begin{aligned}
    \mathbb{E}[\exp(\alpha X^2_{t+1}) \mid X_t = x] &= \sum_{i=1}^K p_i \mathbb{E}[\exp(\alpha X^2_{t+1}) \mid X_t = x, s(t+1) = i] \\
    &= \frac{1}{\sqrt{1 - 2\alpha}}\sum_{i=1}^K p_i \exp\left(\frac{\alpha c_i^2 |x|^{2\beta_i}}{1 - 2\alpha} \right).
\end{aligned}
\end{equation*}
Note that for every $i \in [K]$, since $\beta_i \in (0, 1)$, we have $|x|^{2\beta_i} = o(x^2)$, as $|x| \rightarrow \infty$. Consequently, this implies that 
\begin{equation*}
    \frac{\alpha c_i^2 |x|^{2\beta_i}}{1 - 2\alpha} - \alpha x^2 \longrightarrow -\infty, \quad \text{as } |x| \rightarrow \infty,
\end{equation*}
so we have 
\begin{equation*}
    \exp\left(\frac{\alpha c_i^2 |x|^{2\beta_i}}{1 - 2\alpha} - \alpha x^2 \right) = o(1).
\end{equation*}
Since we have a finite number of modes, it follows that
\begin{equation*}
    \frac{\mathbb{E}[\exp(\alpha X^2_{t+1}) \mid X_t = x]}{\exp(\alpha x^2)} = o(1), \quad \text{as } |x| \rightarrow \infty.
\end{equation*}

Define the continuous function
\begin{equation*}
    h(x) := \mathbb{E}[\exp(\alpha X^2_{t+1}) \mid X_t = x] - \rho \exp(\alpha x^2).
\end{equation*}
It remains to show that there exists a finite $B$. 

To this end, note that since $h(x) \rightarrow -\infty$ as $|x| \rightarrow \infty$, there exists $R > 0$ such that $h(x) \leq 0$ for every $|x| > R$. On the other hand, by the continuity of $h$, a finite maximum is attained on the interval $[-R, R]$ by the extreme value theorem. That is, there exists $x^* \in [-R, R]$ such that 
\begin{equation*}
    h(x^*) = \max_{x \in [-R,R]} h(x) < \infty.
\end{equation*}
Consequently, 
\begin{equation*}
    \sup_{x \in \mathbb{R}} h(x) \leq \max\left\{0, \max_{x \in [-R,R]} h(x)\right\} < \infty.
\end{equation*}
Moreover, since $h(x) \leq 0$ outside $[-R, R]$, we can directly define
\begin{equation*}
    B := \max\left\{0, \max_{x \in [-R,R]}h(x) \right\} < \infty,
\end{equation*}
so for every $x \in \mathbb{R}$, we have $h(x) \leq B$. Gathering facts, we have 
\begin{equation*}
    \mathbb{E}[\exp(\alpha X^2_{t+1}) \mid X_t = x )] \leq \rho \exp(\alpha x^2) + B, \qquad \forall x \in \mathbb{R}.
\end{equation*}

\end{proof}

%
\section{Technical Lemmas}\label{app:tech-lemmas}

\begin{claim}\label{cl:dist-eq-sum}
    Let $\varepsilon_t$ and $\widetilde \varepsilon_k$ be i.i.d copies of a centered sub-gaussian random variable $\varepsilon$ with unit variance, and $s(t)$ be i.i.d discrete random variables with pmf $p_i$ for all $i \in [K]$. Denote $\mathcal{I}_i$ to be the set of observations such that $s(t+1) = i$, and $n_i = |\mathcal{I}_i|$. Suppose that the trajectories follow model \eqref{eq:model}. Then, it holds that
    \begin{equation*}
         \sum_{t \in \mathcal{I}_i} \|\varepsilon_{t+1}\|_2 =\sum_{t=0}^{T-1} \|\varepsilon_{t+1}\|_2 \mathbf{1}\{s(t+1) = i \} \stackrel{d}{=} \sum_{k=1}^{n_i}\|\widetilde \varepsilon_{k}\|_2.
    \end{equation*}
\end{claim}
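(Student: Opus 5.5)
The key point is that the noise sequence $(\varepsilon_t)$ is independent of the mode sequence $(s(t))$. Also, each $\varepsilon_{t+1}$ is independent of the indicator $\mathbf{1}\{s(t+1)=i\}$, since $X_t$ does not enter this particular sum. So the claim reduces to an exchangeability statement about i.i.d.\ variables indexed by an independent random subset. I would prove it by conditioning on the whole mode path and then comparing characteristic functions (or distribution functions).

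First, fix a deterministic index set $I\subseteq\{0,\dots,T-1\}$ with $|I|=m$. On the event $\{\mathcal{I}_i=I\}$, the left-hand side equals $\sum_{t\in I}\|\varepsilon_{t+1}\|_2$.

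Since $\sigma(\varepsilon_1,\dots,\varepsilon_T)$ is independent of $\sigma(s(1),\dots,s(T))$, we have for any bounded measurable $g$
\begin{equation*}
\mathbb{E}\Big[g\Big(\sum_{t\in\mathcal{I}_i}\|\varepsilon_{t+1}\|_2\Big)\Big]=\sum_{I}\mathbb{P}(\mathcal{I}_i=I)\,\mathbb{E}\Big[g\Big(\sum_{t\in I}\|\varepsilon_{t+1}\|_2\Big)\Big].
\end{equation*}
Because $(\varepsilon_{t+1})_{t\in I}$ are $m$ i.i.d.\ copies of $\varepsilon$, the inner expectation equals $\mathbb{E}[g(\sum_{k=1}^{m}\|\widetilde\varepsilon_k\|_2)]$. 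This quantity depends on $I$ only through $m=|I|$.

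Second, I group the terms by $m$. This gives
\begin{equation*}
\sum_{m=0}^{T}\mathbb{P}(n_i=m)\,\mathbb{E}\Big[g\Big(\sum_{k=1}^m\|\widetilde\varepsilon_k\|_2\Big)\Big].
\end{equation*}
Here the $m=0$ term uses the empty-sum convention. The right-hand side of the claim is to be read with $(\widetilde\varepsilon_k)$ independent of $n_i$, which is the reading used in the proof of Lemma \ref{lem:diff}. Under that reading, $\mathbb{E}[g(\sum_{k=1}^{n_i}\|\widetilde\varepsilon_k\|_2)]$ has exactly this same expansion. Taking $g$ to be indicators of half-lines then gives equality in distribution.

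The only delicate step is justifying the independence that makes the conditioning legitimate. The summands $\|\varepsilon_{t+1}\|_2$ must not be correlated with the selection $\mathcal{I}_i$. This holds because $\mathcal{I}_i$ is a function of the modes alone, and the modes are i.i.d.\ and independent of the excitation by the model assumptions. The dependence of $X_t$ on past noise plays no role, since $X_t$ does not appear in this sum. I would state this explicitly, noting that the argument would fail if the selection were allowed to depend on the $\varepsilon$'s. No other obstacles are expected.
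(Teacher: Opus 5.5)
Your proposal is correct and follows essentially the same route as the paper: condition on the mode indicators $(\mathbf{1}\{s(t+1)=i\})_{t}$, use the independence of the excitation from the modes so that the conditional law depends only on $n_i$, and match this with the right-hand side read with $(\widetilde\varepsilon_k)$ independent of $n_i$. The only differences are cosmetic. The paper compares characteristic functions, showing both sides equal $\mathbb{E}[\varphi_W(u)^{n_i}]$, whereas you use general bounded test functions; you also make explicit the independence assumptions that the paper uses only implicitly.
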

\begin{proof}
    Let $W_{t+1} = \|\varepsilon_{t+1}\|_2$ and $\widetilde W_k = \|\widetilde \varepsilon_k \|_2$. Since $\varepsilon_{t+1}$ and $\widetilde \varepsilon_k$ are i.i.d copies of the random variable $\varepsilon$, $W_{t+1}$ and $\widetilde W_k$ are i.i.d copies of a non-negative random variable $W$ with characteristic function $\varphi_W$. Denote $I = \left(\mathbf{1}\{s(1) = i \}, \dots, \mathbf{1}\{s(T) = i \} \right) =: (I_1, \dots, I_T) \in \{0, 1\}^T$. We will show that the characteristic functions of $U = \sum_{t=0}^{T-1}W_{t+1}I_{t+1} $ and $V = \sum_{k=1}^{n_i} \widetilde W_k$, denoted $\varphi_U$ and $\varphi_V$ respectively, are equal.

     For a deterministic, constant vector $c \in \{0, 1\}^T$, let
    \begin{equation*}
    \begin{aligned}
        g(c) &:= \mathbb{E}\left[\exp\left(iu \sum_{t=0}^{T-1} W_{t+1}c_{t+1} \right) \right].
    \end{aligned}
    \end{equation*}
Since $W_{t+1}$ are independent, we have
\begin{equation*}
    \begin{aligned}
      g(c)  = \mathbb{E}\left[\prod_{t=0}^{T-1} \exp\left(iu W_{t+1}c_{t+1} \right) \right] 
        &= \prod_{t=0}^{T-1}\mathbb{E}\left[\exp\left(iu W_{t+1} c_{t+1} \right)\right] \\
        &= \prod_{t=0}^{T-1} \left(\varphi_W(u)\right)^{c_{t+1}} 
        = \left\{\varphi_W(u)\right\}^{\sum_{t=0}^{T-1} c_{t+1}}.
    \end{aligned}
\end{equation*}
Substituting $I$ in place of $c$, we have by the tower property, 
    \begin{equation*}
        \begin{aligned}
  \varphi_U(u) &= \mathbb{E}\left[\exp\left(iu \sum_{t=0}^{T-1} W_{t+1}I_{t+1} \right) \right]  
  \\
  &=\mathbb{E}\left[\mathbb{E}\left[\prod_{t=0}^{T-1}\exp\left(iu W_{t+1} I_{t+1} \right) \mid I \right] \right] \\
    &= \mathbb{E}\left[\left\{\varphi_W(u) \right\}^{\sum_{t=0}^{T-1} I_{t+1}} \right] \\
    &= \mathbb{E}\left[\left\{\varphi_W(u) \right\}^{n_i} \right].
        \end{aligned}
    \end{equation*}

Similarly, for a fixed integer $m$, let 
\begin{equation*}
\begin{aligned}
    h(m) := \mathbb{E}\left[\exp\left(iu \sum_{k=1}^m \widetilde W_k \right) \right] 
    = \prod_{k=1}^m \mathbb{E}\left[\exp\left(iu \widetilde W_k \right)\right] 
    = \left\{\varphi_W(u)\right\}^m.
\end{aligned}
\end{equation*}
Now substituting $n_i$ in place of $m$, we obtain
\begin{equation*}
    \begin{aligned}
        \varphi_V(u) = \mathbb{E}\left[\exp\left(iu \sum_{k=1}^{n_i} \widetilde W_k \right) \right] 
        &= \mathbb{E}\left[\mathbb{E}\left[\exp\left( iu \sum_{k=1}^{n_i} \widetilde W_k\right) \mid n_i\right] \right] \\
        &= \mathbb{E}\left[\left\{\varphi_W(u)^{n_i}\right\} \right] 
        = \varphi_U(u),
    \end{aligned}
\end{equation*}
completing the proof.

\end{proof}

\begin{claim}\label{cl:eps-stop-inde}
    For a measurable set $B$ and any event $A \in \mathcal{F}_{\tau_{\ell}}$, we have 
    \begin{equation*}
        \mathbb{P}\left(\left\{\varepsilon_{\tau_{\ell}+1} \in B \right\} \cap A \right) = \mathbb{P}(\varepsilon \in B)\mathbb{P}(A).
    \end{equation*}
\end{claim}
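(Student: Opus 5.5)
The plan is to reduce the claim to an independence statement for each deterministic time index, and then sum over the possible values of the stopping time $\tau_\ell$. Since $\tau_\ell$ is defined in \eqref{eq:hit-times} only through the mode sequence, the event $\{\tau_\ell = t\}$ depends only on $s(1),\dots,s(t+1)$. The first step is to decompose
\begin{equation*}
\mathbb{P}\left(\{\varepsilon_{\tau_\ell+1}\in B\}\cap A\right) = \sum_{t\ge 0} \mathbb{P}\left(\{\varepsilon_{t+1}\in B\}\cap A\cap\{\tau_\ell=t\}\right).
\end{equation*}
Here I use the convention that on $\{\tau_\ell=\infty\}$ the claim is read on $\{\tau_\ell<\infty\}$, or equivalently that $\tau_\ell<\infty$ almost surely. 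This holds because the $s(t)$ are i.i.d.\ with $p_i>0$, so mode $i$ recurs almost surely.

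The second step is measurability. By the definition of $\mathcal{F}_{\tau_\ell}$, for $A\in\mathcal{F}_{\tau_\ell}$ the set $A\cap\{\tau_\ell=t\}=A\cap\{\tau_\ell\le t\}\setminus (A\cap\{\tau_\ell\le t-1\})$ lies in $\mathcal{F}_t$. Here $\mathcal{F}_t=\sigma(X_0,(\varepsilon_k,s(k))_{k\le t})$, and I also need the event $\{\tau_\ell = t\}$ to be measurable there. This is the main subtlety: $\tau_\ell=t$ involves $s(t+1)$, which is not in $\mathcal{F}_t$. So $\tau_\ell$ is a stopping time with respect to the enlarged filtration $\mathcal{G}_t:=\sigma(\mathcal{F}_t, s(t+1))$, and I will interpret $\mathcal{F}_{\tau_\ell}$ accordingly, or simply note that the argument only needs $A\cap\{\tau_\ell=t\}\in\mathcal{G}_t$. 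I will therefore work with $\mathcal{G}_t$ throughout. Since $\varepsilon_{t+1}$ is independent of $(X_0,(\varepsilon_k)_{k\le t},(s(k))_{k\ge 1})$, it is in particular independent of $\mathcal{G}_t$.

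The third step applies this independence term by term, giving $\mathbb{P}(\varepsilon_{t+1}\in B)\,\mathbb{P}(A\cap\{\tau_\ell=t\})$. Because the $\varepsilon_{t+1}$ are identically distributed, this equals $\mathbb{P}(\varepsilon\in B)\,\mathbb{P}(A\cap\{\tau_\ell=t\})$. Summing over $t$ and using $\sum_t\mathbb{P}(A\cap\{\tau_\ell=t\})=\mathbb{P}(A)$ (again since $\tau_\ell<\infty$ a.s.) yields the claim.

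The main obstacle is the measurability bookkeeping in the second step: making the filtration containing $\{\tau_\ell=t\}$ explicit, and checking that $\varepsilon_{t+1}$ remains independent of it. The latter holds because the modes are independent of the noise. The rest is a routine countable decomposition.
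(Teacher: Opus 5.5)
Your proof is correct and follows the same route as the paper: you decompose over $\{\tau_\ell=t\}$, use that $A\cap\{\tau_\ell=t\}$ is measurable at time $t$, factor via the independence of $\varepsilon_{t+1}$, and sum. Your filtration remark is a genuine repair, not pedantry: since $\{\tau_\ell=t\}$ involves $s(t+1)\notin\mathcal{F}_t$, the paper's $\mathcal{F}_{\tau_\ell}$ contains only null events (any $A\cap\{\tau_\ell=t\}\in\mathcal{F}_t$ lies inside $\{s(t+1)=i\}$, which is independent of $\mathcal{F}_t$ with $p_i<1$), so passing to $\mathcal{G}_t=\sigma(\mathcal{F}_t,s(t+1))$ and making $\tau_\ell<\infty$ a.s.\ explicit is what makes the claim meaningful for its later use.
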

\begin{proof}
    Enumerating over all possible times that the hitting times could occur, we have 
    \begin{equation*}
        \begin{aligned}
            \mathbb{P}\left(\left\{\varepsilon_{\tau_{\ell}+ 1} \in B \right\} \cap A \right) &= \sum_{t=0}^{\infty} \mathbb{P}\left(\left\{\varepsilon_{t+1} \in B \right\} \cap A \cap \{\tau_{\ell} = t \} \right).
        \end{aligned}
    \end{equation*}
Recall that 
\begin{equation*}
    \mathcal{F}_{\tau_{\ell}} = \left\{A \in \mathcal{A}: \forall t, A \cap \{\tau_{\ell} \leq t \} \in \mathcal{F}_t \right\}.
\end{equation*}
Thus, if $A \in \mathcal{F}_{\tau_{\ell}}$, then $A \cap \{\tau_{\ell} = t \} \in \mathcal{F}_t$, which means that $A \cap \{\tau_{\ell} = t \}$ is $\mathcal{F}_t$-measurable, so it only depends on information up to $t$. Since $\varepsilon_{t+1}$ is independent of the filtration $\mathcal{F}_t$, it is independent of all the events inside $\mathcal{F}_t$. Thus, 
\begin{equation*}
    \begin{aligned}
        \sum_{t \geq 0}\mathbb{P}\left(\{\varepsilon_{t+1} \in B \} \cap A \cap \{\tau_{\ell} = t\} \right) &= \sum_{t \geq 0}\mathbb{P}\left(\varepsilon_{t+1} \in B \right)\mathbb{P}\left(A \cap \{\tau_{\ell} = t \} \right) \\
        &= \mathbb{P}(\varepsilon \in B)\sum_{t \geq 0}\mathbb{P}(A \cap \{\tau_{\ell} = t \} ) \\
        &= \mathbb{P}(\varepsilon \in B )\mathbb{P}(A).
    \end{aligned}
\end{equation*}

\end{proof}

\begin{claim}\label{cl: uniform-exp-moment}
    Let $(X_t)_{t \geq 0}$ satisfy Assumption \ref{ass:1}. Then, it holds that 
    \begin{equation*}
        \sup_{t \geq 0} \mathbb{E}[\exp(\alpha \|X_t\|_2^2)] \leq K_{B, \rho}.
    \end{equation*}
\end{claim}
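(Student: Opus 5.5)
The plan is to reuse the unrolling computation already carried out in the proof of Lemma \ref{lem:traj-ball}, now stated uniformly in $t$. Set $a_t := \mathbb{E}[\exp(\alpha\|X_t\|_2^2)]$. By the tower property, conditioning on $X_t$ (the chain $(X_t)$ is Markov since the modes $s(t+1)$ and noises $\varepsilon_{t+1}$ are i.i.d.\ and independent of the past), Assumption \ref{ass:1} gives the scalar linear recursion
\begin{equation*}
    a_{t+1} = \mathbb{E}\big[\mathbb{E}[\exp(\alpha\|X_{t+1}\|_2^2)\mid X_t]\big] \leq \rho\, a_t + B .
\end{equation*}
One point must be checked first: that each $a_t$ is finite, so that the recursion is between real numbers rather than $+\infty \le +\infty$. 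This follows by induction, since $a_0 = \exp(0)=1$ because $X_0 = 0$, and finiteness of $a_t$ plus the drift bound yields $a_{t+1}\le \rho a_t + B<\infty$.

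Next I unroll the recursion by induction to get $a_t \leq \rho^t a_0 + B\sum_{j=0}^{t-1}\rho^j = \rho^t + B\frac{1-\rho^t}{1-\rho}$. Since $\rho\in(0,1)$, both terms are bounded uniformly in $t$ by $1$ and $B/(1-\rho)$ respectively. So one may take $K_{B,\rho} := 1 + \frac{B}{1-\rho}$, or more sharply $\max\{1, B/(1-\rho)\}$ using that $\rho^t + B(1-\rho^t)/(1-\rho)$ is a convex combination of $1$ and $B/(1-\rho)$. Taking the supremum over $t\ge 0$ finishes the argument.

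There is no genuine obstacle here. The only care points are the finiteness induction and noting that the conditional drift bound holds for every $x$, so it can be integrated against the law of $X_t$. If $X_0$ were random rather than $0$, the same argument gives $K_{B,\rho} = \mathbb{E}[\exp(\alpha\|X_0\|_2^2)] + B/(1-\rho)$, which requires that initial exponential moment to be finite.
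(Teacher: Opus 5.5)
Your proposal is correct and follows the paper's proof: the tower property turns Assumption~\ref{ass:1} into the scalar recursion $a_{t+1}\le \rho a_t + B$, which you unroll from $a_0=1$ (since $X_0=0$) and bound by a geometric series. Your constant $\max\{1, B/(1-\rho)\}$ equals the paper's $K_{B,\rho}=B/(1-\rho)$. This is because the drift condition at $x=0$ gives $1\le \rho+B$, i.e.\ $B\ge 1-\rho$, which is also the unstated reason the paper can drop the $\rho^t$ term.
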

\begin{proof}
    By Assumption \ref{ass:1}, we have
    \begin{equation*}
        \mathbb{E}[\exp(\alpha \|X_{t+1}\|_2^2) \mid X_t] \leq \rho \exp(\alpha \|X_t\|_2^2) + B,
    \end{equation*}
which by the Tower property, implies that
\begin{equation*}
    \mathbb{E}[\exp(\alpha \|X_{t+1}\|_2^2] \leq \rho \mathbb{E}[\exp(\alpha \|X_t\|_2^2)] + B.
\end{equation*}
Iterating, we obtain
\begin{equation*}
    \begin{aligned}
        \mathbb{E}[\exp(\alpha \|X_t\|_2^2)] &\leq \rho^t \mathbb{E}[\exp(\alpha \|X_0\|_2^2)] + B\sum_{j=0}^{t-1} \rho_j 
        \leq \frac{B}{1 - \rho} 
        =: K_{B, \rho},
    \end{aligned}
\end{equation*}
where $K_{B,\rho}$ is some constant, since $X_0 = 0$ and $\rho < 1$. Taking supremums on both sides of the last display, we obtain
\begin{equation*}
    \sup_{t \geq 0}\mathbb{E}[\exp(\alpha \|X_t\|_2^2)] \leq K_{B, \rho},
\end{equation*}
as desired.

\end{proof}

\newpage 

\bibliographystyle{abbrvnat}
\bibliography{references}

\end{document}